%% file: neurips_2026.tex
\documentclass{article}

\usepackage[preprint]{neurips_2026}

\usepackage[utf8]{inputenc}
\usepackage[T1]{fontenc}
\usepackage{hyperref}
\usepackage{url}
\usepackage{booktabs}
\usepackage{nicefrac}
\usepackage{microtype}
\usepackage{xcolor}
\usepackage{colortbl}
\usepackage{wrapfig}
\usepackage[ruled,longend]{algorithm2e}
\usepackage{nccmath}
\usepackage{algpseudocode}
\usepackage{graphicx}
\usepackage{amsthm}
\theoremstyle{definition}
\newtheorem{example}{Example}
\usepackage{booktabs}
\usepackage{amsmath}
\usepackage{multirow}
\usepackage{amssymb}
\usepackage{mathtools}
\usepackage{amsthm}
\usepackage{amsfonts}
\usepackage{enumitem}
\usepackage{microtype}
\usepackage{graphicx}
\usepackage{subcaption}
\usepackage{booktabs}

\usepackage[capitalize,noabbrev]{cleveref}

\theoremstyle{plain}
\newtheorem{theorem}{Theorem}[section]

\newtheorem{lemma}[theorem]{Lemma}

\theoremstyle{definition}
\newtheorem{definition}[theorem]{Definition}

\theoremstyle{remark}
\newtheorem{remark}[theorem]{Remark}

\newcommand{\Lop}{\mathtt{L}}
\newcommand{\m}{\mathtt{m}}

\usepackage[disable,textsize=tiny]{todonotes}

\title{Local--Global Geometric Insights for\\Graph Neural Networks via Entropic Curvature}

\author{
  \begin{tabular}{c}
    Rachid Caich \textbf{\thanks{Equal contribution, order determined by coin flip.}} \\
    \small Centre de recherches mathématiques (CRM) \\
    \small Université de Montréal \\
    \small Montréal, Canada \\
    \small \texttt{rachid.caich@umontreal.ca}
  \end{tabular}
  \qquad
  \begin{tabular}{c}
    Yassine Abbahaddou \textbf{\footnotemark[1]} \\
    \small LIX, École Polytechnique\\ IP Paris \\
    \small Palaiseau, France \\
    \small \texttt{yassine.abbahaddou@polytechnique.edu}
  \end{tabular}
}
\begin{document}

\maketitle

\begin{abstract}
\input{Content/abstract}
\end{abstract}

\section{Introduction}\label{sec:introduction}
\input{Content/introduction}

\section{Background and Related Work}\label{sec:background}

\input{Content/related_work}

\section{Entropic Curvature on Graphs}\label{sec:math_fram}
\input{Content/mathematical_framework}

\section{Expansion Paradox and Rewiring}\label{sec:Expansion}
\input{Content/Entropic_expansion}

\section{Entropic Curvature and Generalization}\label{sec:generalization}

\input{Content/Robustness}

\section{Entropic Curvature-Based Architectures}\label{sec:exper}
\input{Content/experiments}


\section{Conclusion.}\label{sec:conclusion}\input{Content/conclusion}

\bibliographystyle{plain}
\bibliography{references}

\newpage
\appendix

\section{Interpreting Weak Entropic Curvature via Local Transport}\label{app:interpretation}
\input{Appendix/appendix_interpretation}

\section{Explicit Computation of Weak Entropic Curvature}\label{app:theoretical_calculation}
\input{Appendix/Entropic_curvature_some_calculations}

\section{Spectral Gap Enhancement via Curvature-Based Rewiring}\label{app:curvature_based_rewiring}

\input{Appendix/curvature_based_rewiring}

\section{Geometric Characterization and Proof of Theorem \ref{thm:Motzkin-Straus}}\label{app:proof_Strauss}
\input{Appendix/proof_Strauss}

\section{Proof of Theorem \ref{thm:lower_bound}}\label{app:Proof_entropic_curvature_bound}
\input{Appendix/Proof_entropic_curvature_bound}

\section{Proof of Theorem \ref{thm:robustness}}\label{app:proof_thm_robustness}
\input{Appendix/proof_thm_robustness}

\section{Proof of Theorem \ref{thm:gen_bound_negative}}\label{app:proof_thm_neg_bound}
\input{Appendix/proof_gen_negative}

\section{Datasets Statistics \& Implementation Details}\label{app:dataset_statis}
\input{Appendix/dataset_stat}

\section{Proof of Theorem \ref{thm:expansion}}\label{app:proof_expansions}
\input{Appendix/proof_expansions}

\section{Proof of Theorem \ref{thm:poincare-gnn}}\label{app:proof:thm:poincare-gnn}
\input{Appendix/proof_poincare_gnn}

\section{Proof of Theorem \ref{thm:poincare_negative} }\label{app:proof:thm:poincare_negative}
\input{Appendix/proof_poincare_negative}

\section{Proof of Theorem \ref{thm:rewiring}}\label{app:proof_rewiring}
\input{Appendix/proof_rewiring_thm}

\section{Algorithmic Computation of Weak Entropic Curvature and Convergence Analysis}\label{app:impl}
\input{Appendix/implementation_details}

\section{Implementation of Entropic Structural Encodings}\label{app:lcp_ent}

\input{Appendix/lcp_ent_exp}

\section{Alternative Generator Constructions}\label{alternative_generator}
\input{Appendix/alternative_generators}

\section{Interpretation of Graph Generators and Reference Measures}\label{app:generator_viz}
\input{Appendix/generator_vizualization}

\section{Complexity Analysis}\label{app:complexity}
\input{Appendix/complexity_analysis}

\section{Additional Experiments}\label{app:additional_experiments}
\input{Appendix/additional_experiments}

\section{Entropic Curvature and Oversmoothing}\label{app:oversm}

\input{Appendix/oversmoothing}

\section{Limitations and Open Directions}\label{app:limitations}
\input{Appendix/limitations}

\end{document}

%% file: Content/abstract.tex
Curvature notions on graphs, particularly Ollivier--Ricci and Forman, have emerged as powerful tools for addressing fundamental issues in Graph Neural Networks (GNNs) such as oversmoothing and oversquashing, but rely almost exclusively on \emph{local} edge-level comparisons and therefore fail to certify how information actually propagates over long distances. We introduce \emph{Entropic Curvature}, a global, transport-based curvature obtained by extending the Lott--Sturm--Villani framework to graphs through the displacement convexity of entropy along $W_1$-Wasserstein geodesics. We define a tractable Weak Entropic Curvature proxy $\kappa_w$ that lower-bounds the global entropic curvature, and from it derive (i) a Poincaré-type inequality controlling oversmoothing, (ii) a transport--entropy generalization bound, and (iii) an \emph{expansion paradox} proving that sparsity, strong spectral expansion, and positive entropic curvature cannot coexist in large graphs, unifying oversmoothing and oversquashing as opposite ends of a single curvature spectrum. We translate the theory into three practical mechanisms, the E-Gate aggregator, the ENT structural encoding, and Midpoint-Completion Rewiring (MCR), and benchmark them against SDRF, FoSR, BORF, LCP, and Graph Ricci Flow on six node-classification benchmarks, and graph-classification. 

%% file: Content/introduction.tex
Graphs are increasingly analyzed through the lens of geometry, with curvature emerging as a key concept in understanding graph structure and dynamics~\citep{li_yau_ineq,Davies_lemma,sharp_Davies_lemma,Isoperimetric,Harnack_inequalities,Strong_Harnack_inequalities,Li_Yau_inequality_laplace,Ricci_graph,Ricci_graph2}. Recently, attention has shifted to incorporating geometric insights into GNNs: graph curvature offers an efficient way to capture phenomena such as bottlenecks~\citep{topping2022understanding,elhag2022graph}, characterise community structure~\citep{ni2019community,jost2014ollivier,lubberts2024curvature}, and inspire new architectures~\citep{chen2025graph,li2022curvature,bachmann2020constant}.

Discrete graph curvatures, e.g.\ Ollivier--Ricci~\citep{ollivier2009ricci} and Forman~\citep{forman2003bochner}, have been used to analyze fundamental GNN problems~\citep{torieffectiveness} such as oversmoothing~\citep{nguyen2023revisiting}, where deep GNN layers produce indistinguishable node features, and oversquashing~\citep{topping2022understanding,karhadkar2023fosr}, where long-range information is compressed through bottlenecks. \emph{However, existing graph curvatures are inherently local: they evaluate edges or one-hop neighborhoods, and therefore cannot certify how information actually propagates over long distances or quantify the trade-off between oversmoothing and oversquashing in a single framework.} As a result, oversmoothing and oversquashing are diagnosed by separate, often-incompatible local quantities.

This work introduces \emph{entropic curvature} on graphs, a global, transport-theoretic curvature grounded in the Lott--Sturm--Villani framework~\citep{Villani,sturm}. Unlike edge-based curvatures, entropic curvature evaluates a graph as a metric-measure space by examining how probability distributions evolve along $W_1$-Wasserstein geodesics. From a single non-negativity assumption on entropic curvature we derive (a) a Poincaré inequality controlling Dirichlet-energy decay (oversmoothing), (b) a transport--entropy generalization bound, and (c) the impossibility of simultaneously achieving sparsity, strong spectral expansion, and positive entropic curvature (the \emph{expansion paradox}, governing oversquashing). The same theory directly produces curvature-aware GNN building blocks, an aggregator (E-Gate), a structural encoding (ENT), and a rewiring rule (MCR), which we evaluate against the most relevant curvature-aware baselines including SDRF~\citep{topping2022understanding}, FoSR~\citep{karhadkar2023fosr}, BORF~\citep{nguyen2023revisiting}, and LCP~\citep{fessereffective}. 

Our main contributions are:

\begin{itemize}
    \item \textbf{Rigorous Discrete Reformulation (Section~\ref{sec:math_fram}):} We adapt Lott--Sturm--Villani to graphs via convexity of entropy along $W_1$-Wasserstein geodesics, and introduce a tractable \emph{Weak Entropic Curvature} proxy $\kappa_w$ computable in $O(k_{\text{iter}}\,d_{\max}^2)$ per node that provably lower-bounds the global entropic curvature (Theorem~\ref{thm:lower_bound}).

    \item \textbf{Expansion Paradox (Section~\ref{sec:Expansion}):} Sparsity, strong spectral expansion, and positive entropic curvature cannot coexist in large graphs (Theorem~\ref{thm:expansion}). This unifies oversmoothing and oversquashing as opposite ends of a single curvature spectrum.
    \item \textbf{Curvature-Controlled Generalization (Section~\ref{sec:generalization}):} The train--test risk gap is bounded by $\kappa_w$ via a transport-entropy inequality, in both positive- (Theorem~\ref{thm:robustness}) and negative-curvature (Theorem~\ref{thm:gen_bound_negative}) regimes.
    \item \textbf{Curvature-Monotonic Rewiring (MCR, Section~\ref{sec:Expansion}):} We give a rewiring rule that provably non-decreases $\kappa_w$ at the rewired node (Theorem~\ref{thm:rewiring}), outperforms ORC and FRC rewiring in spectral-gap improvement on six random-graph families, and translates to downstream accuracy gains over SDRF/FoSR/BORF.
    \item \textbf{Architectures and Empirical Validation (Sections~\ref{sec:Expansion},~\ref{sec:exper}):} The \emph{E-Gate} aggregator improves its backbone in 16 of 20 (backbone, dataset) pairs at $p<0.05$ (paired Wilcoxon, Holm--Bonferroni). \emph{ENT} encodings outperform LCP, LAPE, RWPE, and EGO on 4 of 5 benchmarks. We additionally benchmark against other baslines such as SDRF and FoSR, and on graph classification.
\end{itemize}

%% file: Content/related_work.tex

\textbf{Ricci Curvature.} Ricci curvature is a fundamental concept in differential geometry that describes how a manifold deviates from being flat. One way to detect curvature is through \emph{holonomy} \citep{kobayashi1996foundations}, which measures how parallel transport around closed loops fails to preserve the geometrical data being carried, thereby revealing twists intrinsic to the manifold. Another approach involves \emph{geodesic dispersion}; if nearby geodesics with identical initial velocities diverge, the space is positively curved; if they remain parallel, it is flat; and if they converge, it is negatively curved \citep{do1992riemannian}. A major advance in extending Ricci curvature to more general metric measure spaces beyond smooth manifolds was provided by Sturm \citep{sturm} and Lott and Villani \citep{Villani} in two independent works. Building on McCann’s pioneering work~\citep{McCann}, they established a synthetic definition of Ricci curvature via optimal transport and entropy. Specifically, let $(\Omega , d)$ be a complete and separable metric space, and let $\mathcal{P}(\Omega)$ be the space of Borel probability measures on $\Omega$. For $1 \leq p < \infty$ and $\mu, \nu \in \mathcal{P}(\Omega)$ we consider the Monge-Kantorovich (Wasserstein) distance $W_p$ is defined by,
$$\mathrm{W}_p(\mu, \nu) := \inf_{\Gamma \in \Pi(\mu, \nu)} \left(\int_{\Omega \times \Omega} d(x, y)^p \mathrm{d}\Gamma(x, y)\right)^{1/p},$$
where $\Pi(\mu, \nu)$ denotes the set of all couplings of $\mu$ and $\nu$.  $\mathrm{W}_p(\mu, \nu)^p$ can be interpreted as the minimal cost required to transfer a distribution of mass from an initial configuration $\mu$ to another configuration $\nu$, where the transportation cost per unit mass is given by $d(x, y)^p$. In this synthetic framework, relative entropy plays a key role in capturing curvature lower bounds. With a measure $\nu \in \mathcal{P}(\Omega)$ and a probability measure $\mu$ that is absolutely continuous with respect to $\nu$, the relative entropy of $\mu$ with respect to $\nu$ is defined by $\operatorname{Ent}_\nu(\mu) := \int_\Omega \rho(x) \log \rho(x) \mathrm{d}\nu(x),$ where $\rho = \mathrm{d}\mu/\mathrm{d}\nu$. 

The following result, proved in~\citep{CorderoErausquin,Otto_villani,Von_sturm}, characterizes Ricci curvature lower bounds on Riemannian manifolds in terms of convexity properties of the relative entropy (with respect to the volume measure $\operatorname{vol}$) and optimal transport. 
\begin{theorem}[Characterisation of Ricci lower bounds]\label{theorem_and_definition_of_entr_in_theory} Let $\kappa \in \mathbb{R}$. For a complete Riemannian manifold $\mathcal{M}$ with a Ricci Curvature $\operatorname{Ric} $, the following assertions are equivalent,
\begin{enumerate}[noitemsep,leftmargin=*,topsep=0pt,partopsep=0pt]
    \item $\operatorname{Ric} \geq \kappa$ on $\mathcal{M}$.
    \item Each pair of probability measures $\mu_0, \mu_1 \in \mathcal{P}_2(\mathcal{M})$ can be connected by a constant speed $\mathrm{W}_2$-geodesic $(\mu_t)_{t \in [0,1]}$, i.e., $ W_2(\mu_s,\mu_t)=|t-s|\cdot W_2(\mu_0,\mu_1)$, along which the entropy satisfies the $\kappa$-convexity inequality,
    \begin{equation}\label{Entropic_curvature_ineuqlity}
            \operatorname{Ent}_{\operatorname{vol}}(\mu_t) \leq (1- t) \operatorname{Ent}_{\operatorname{vol}}(\mu_0) + t \operatorname{Ent}_{\operatorname{vol}}(\mu_1) - \frac{\kappa}{2} t(1- t)\mathrm{W}_2(\mu_0, \mu_1)^2.
    \end{equation}
\end{enumerate}
\end{theorem}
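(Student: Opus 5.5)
We prove the two implications separately, following the Cordero-Erausquin--McCann--Schmuckenschläger and von Renesse--Sturm strategy. The main tools are McCann's theorem on optimal transport maps on manifolds, the Jacobian equation, and a Jacobi-field comparison along the transport geodesics.

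\textbf{(1) $\Rightarrow$ (2).} It suffices to treat $\mu_0,\mu_1$ absolutely continuous with respect to $\operatorname{vol}$ and of finite entropy. Otherwise the right-hand side of \eqref{Entropic_curvature_ineuqlity} is $+\infty$, and any $W_2$-geodesic works (one exists because $(\mathcal{P}_2(\mathcal{M}),W_2)$ is geodesic when $\mathcal{M}$ is complete). A further approximation by compactly supported measures, together with lower semicontinuity of entropy, lets us also assume compact supports.

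By McCann's theorem there is a $d^2/2$-concave potential $\varphi$ such that $T_t(x)=\exp_x(-t\nabla\varphi(x))$ pushes $\mu_0$ to $\mu_t:=(T_t)_\#\mu_0$, and $(\mu_t)$ is a constant-speed geodesic. Moreover, $\varphi$ is twice approximately differentiable $\mu_0$-a.e., which supplies the Jacobian $J_t(x)=\det DT_t(x)$. Writing $\rho_t$ for the density of $\mu_t$, the Monge--Ampère (Jacobian) equation $\rho_0(x)=\rho_t(T_tx)J_t(x)$ gives
\[
\operatorname{Ent}_{\operatorname{vol}}(\mu_t)=\int \log\rho_0\,d\mu_0-\int \log J_t(x)\,d\mu_0(x).
\]
So it is enough to show, for $\mu_0$-a.e.\ $x$,
\[
-\log J_t(x)\le -t\log J_1(x)-\tfrac{\kappa}{2}t(1-t)\,|\nabla\varphi(x)|^2.
\]
Integrating this against $\mu_0$ and using $\int|\nabla\varphi|^2\,d\mu_0=W_2^2(\mu_0,\mu_1)$ yields \eqref{Entropic_curvature_ineuqlity}.

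For the pointwise inequality, write $DT_t$ as a matrix of Jacobi fields along the geodesic $\gamma(t)=T_t(x)$, which has length $L=|\nabla\varphi(x)|$. Set $\ell(t)=\log J_t$. From the Riccati equation for $U=J'J^{-1}$ we get
\[
\ell''=-\operatorname{tr}(U^2)-\operatorname{Ric}(\dot\gamma,\dot\gamma)\le-\operatorname{Ric}(\dot\gamma,\dot\gamma)\le-\kappa L^2,
\]
so $-\ell$ is $\kappa L^2$-convex on $[0,1]$. Since $\ell(0)=0$, the convexity inequality for $-\ell$ is exactly the pointwise bound above. The main obstacle is the regularity justification: showing that no cut-locus points lie in the interior of the transport geodesics $\mu_0$-a.e.\ (the CMS argument) and that $J_t>0$ for $t<1$, so that $\ell$ is finite and differentiable.

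\textbf{(2) $\Rightarrow$ (1).} Suppose $\operatorname{Ric}_{x_0}(v,v)<\kappa|v|^2$ for some unit $v\in T_{x_0}\mathcal{M}$. Take $\mu_0$ uniform on a small ball $B_\varepsilon(x_0)$ and $\mu_1=(T_1)_\#\mu_0$, where $T_1=\exp(-\nabla\varphi)$ and $\varphi$ is a smooth potential with $\nabla\varphi(x_0)=-\delta v$ and $\nabla^2\varphi(x_0)=0$. For small $\delta$, $\varphi$ is $d^2/2$-concave, so $T_1$ is optimal, and the $W_2$-geodesic between $\mu_0$ and $\mu_1$ is unique and equal to $(T_t)_\#\mu_0$.

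With $\nabla^2\varphi(x_0)=0$ the shear term satisfies $\operatorname{tr}(U^2)=O(\delta^4)$. Computing the entropy along $(\mu_t)$ to second order in $\delta$ then shows that the defect in \eqref{Entropic_curvature_ineuqlity} equals
\[
\tfrac12 t(1-t)\,\delta^2\bigl(\operatorname{Ric}(v,v)-\kappa\bigr)+o(\delta^2)+O(\varepsilon),
\]
which is negative for suitable small $\varepsilon$ and $\delta$. This contradicts (2), so $\operatorname{Ric}\ge\kappa$ everywhere. The delicate point here is controlling the error terms uniformly as $\varepsilon,\delta\to0$.
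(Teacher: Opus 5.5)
Your proposal is correct and essentially reconstructs the argument the paper relies on. The paper gives no proof of its own and defers to the cited works of Cordero-Erausquin--McCann--Schmuckenschl\"ager and von Renesse--Sturm; your McCann-map, Jacobian and Riccati argument for $(1)\Rightarrow(2)$, and your small-ball, zero-Hessian perturbation with a unique geodesic for $(2)\Rightarrow(1)$, are exactly the arguments of those references. One small tightening: in the converse the remainder is in fact $O(\delta^2\varepsilon+\delta^3)$ rather than just $o(\delta^2)+O(\varepsilon)$, so once you factor out $\delta^2$, sending $\varepsilon,\delta\to0$ is immediate.
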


This inequality establishes a connection between the Ricci curvature $\operatorname{Ric} $ of a Riemannian manifold and the convexity of entropy along optimal transport paths. This means positive curvature  enhances the convexity of entropy, while negative curvature weakens it, reflecting how the manifold's geometry influences the behavior of entropy under optimal transport.

\textbf{Graph Curvatures.} Since graphs are discrete, and non-differentiable structures, the classical notion of Ricci curvature from differential geometry cannot be directly applied. Instead,  discrete analogues of curvature have been introduced to mimic aspects of Ricci curvature. One such measure is Forman curvature \citep{forman2003bochner} which assigns scalar weights to edges or nodes, providing a purely combinatorial curvature notion and offering insights primarily into local topology or geometry. By contrast, Ollivier-Ricci curvature \citep{ollivier2009ricci} uses a probabilistic viewpoint by comparing single-step neighborhood mass transport between adjacent vertices, capturing local bottlenecks and concentration but ignoring how measures evolve over longer paths. Another Ricci-type notion is Bakry-Émery curvature \citep{bakry2006diffusions}, which is evaluated at vertices rather than edges. It measures the local cohesion of a network around each vertex, so that higher cohesion corresponds to more positive curvature. Finally, the Balanced Forman curvature refines the original Forman curvature by incorporating symmetrical weighting factors \citep{topping2022understanding}, aiming to balance the contributions of vertices and edges in a way that can provide a more nuanced assessment of a graph’s local and global curvature-like properties.

\textbf{Graph Neural Networks (GNNs).}
A graph is denoted by $\mathcal{G}=(\mathcal{V},\mathcal{E},\mathbf{X})$, where $\mathcal{V}$ is the node set, $\mathcal{E}$ is the edge set, and $\mathbf{X}$ contains node features. Each node $u\in\mathcal{V}$ is initialized with $h_u^{(0)}=x_u$. A broad class of message-passing GNNs updates node representations through two operations: an aggregation step $m_u^{(\ell)}
=
\mathrm{AGG}^{(\ell)}
(
\{h_v^{(\ell)}:v\in\mathcal{N}(u)\cup\{u\}\}
),$ and an update step $h_u^{(\ell+1)}
=
\mathrm{UPDATE}^{(\ell)}
(
h_u^{(\ell)},m_u^{(\ell)}
).$ Here, $\mathrm{AGG}^{(\ell)}$ collects information from the local neighborhood, while $\mathrm{UPDATE}^{(\ell)}$ combines the aggregated message with the current node representation. 

\textbf{Generalization of GNNs.} A fundamental GNN challenge in GNNs is characterizing their generalization capabilities, particularly when the training and testing data arise from distinct topological distributions on the graph \citep{tang2023towards}. In the standard node classification setting, we treat the training and evaluation sets not merely as indices, but as discrete probability measures supported on the vertex set $\mathcal{V}$. Let $\mathcal{V}_{\text{train}}, \mathcal{V}_{\text{test}} \subset \mathcal{V}$ denote the labeled training and testing node sets, respectively. We define the empirical training distribution $\mu_{\text{train}}$ and the evaluation distribution $\mu_{\text{test}}$ as, $\mu_{\text{train}} := \frac{1}{|\mathcal{V}_{\text{train}}|} \sum_{u \in \mathcal{V}_{\text{train}}} \delta_u, \quad \text{and} \quad \mu_{\text{test}} := \frac{1}{|\mathcal{V}_{\text{test}}|} \sum_{u \in \mathcal{V}_{\text{test}}} \delta_u,$ where $\delta_u$ represents the Dirac measure centered at node $u$. Consider a GNN $f_\theta: (\mathcal{V},\mathcal{E},X) \to \mathcal{Y}$ parameterized by $\theta$, and a pointwise loss function $\mathcal{J}(\cdot, \cdot)$. The expected risk with respect to an arbitrary node distribution $\mu$ is given by, $$\mathcal{L}_\mu(f_\theta) = \mathbb{E}_{u \sim \mu}[\mathcal{J}(f_\theta(u), y_u)].$$

While the optimization objective is to minimize the empirical risk $\mathcal{L}_{\mu_{\text{train}}}(f_\theta)$, the ultimate goal is to minimize the risk on the test distribution. The generalization gap quantifies the discrepancy between these risks for the learned parameters $\hat{\theta} = \arg\min_\theta \mathcal{L}_{\mu_{\text{train}}}(f_\theta)$. Formally, this gap is defined as the absolute deviation between the test and training risks, $$\Delta \mathcal{L}(f_{\hat{\theta}}) := \left| \mathcal{L}_{\mu_\text{test}}(f_{\hat{\theta}}) - \mathcal{L}_{\mu_\text{train}}(f_{\hat{\theta}}) \right|.$$  
As shown in Section \ref{sec:generalization}, the entropic curvature of the underlying graph imposes strict bounds on this sensitivity via transport-entropy inequalities.

%% file: Content/mathematical_framework.tex

\paragraph{Notations and Assumptions.}\label{Proporties_lm}
We consider a connected, undirected graph $\mathcal{G} = (\mathcal{V}, \mathcal{E}, \mathrm{X})$. We denote the combinatorial distance between vertices $u, v \in \mathcal{V}$ by $d(u,v)$, defined as the length of the shortest path connecting them. We assume $\mathcal{G}$ is \emph{locally finite}, meaning every vertex has a finite degree. For any node $u \in \mathcal{V}$, we define the local 2-ball as the subgraph induced by nodes within a distance of at most two, denoted $\mathrm{B}_2(u) := \{y \in \mathcal{V} \mid d(u, y) \le 2\}$. The 2-sphere is the set of nodes at a distance of exactly two from $u$, denoted $\mathrm{S}_2(u) := \{w \in \mathcal{V} \mid d(u, w) = 2\}$. Let $G(x, y)$ be the set of all geodesic paths joining $x$ to $y$ (i.e. all shortest paths from $x$ to $y$).
For any pair of vertices $u, v \in \mathcal{V}$, we denote by $[u, v]:= \{ z\in \mathcal{V}| z\in \gamma, \gamma \in G(u,v)\}$ and $]u, v[:= [u,v] \backslash \{u,v\} $ the set of \emph{strict midpoints} lying on any shortest path between them. A \emph{graph space} is defined as the tuple $(\mathcal{V}, d, \m, \Lop)$, where $\m \colon \mathcal{V} \to \mathbb{R}_+$ is a positive reference measure and $\Lop \colon \mathcal{V} \times \mathcal{V} \to \mathbb{R}$ is a generator encoding the jump rates of a reversible Markov process on $\mathcal{V}$. The operator $\Lop$ is constructed to satisfy three key properties,

\begin{enumerate}
    \item[(i)] \textbf{Conservation:} The diagonal terms ensure probability conservation, i.e., $\Lop (u, u) = -\sum_{v \neq u} \Lop (u, v).$
    \item[(ii)] \textbf{Sparsity:} We require $\Lop (u, v) > 0$ if and only if $u \sim v$. This ensures that $\Lop$ acts as a Graph Shift Operator (GSO)~\cite{dasoulaslearning,gavili2017shift,abbahaddou2026graph}, preserving the sparsity of $\mathcal{G}$. 
    \item[(iii)] \textbf{Reversibility:} The measure $\m$ satisfies the detailed balance condition, i.e., $\m (u)\Lop (u, v) = \m (v)\Lop (v, u) \quad \forall u,v \in \mathcal{V}.$ This constraint implies that $\Lop$ is self-adjoint with respect to the inner product induced by $m$, guaranteeing a real spectrum.
\end{enumerate}
The selection of the reference measure $\mathrm{m}$ inherently dictates the structure of the generator $\Lop$ through the reversibility condition. This relationship is illustrated in Example \ref{ex:uniform}. While derivations for generators under general positive measures and concentrated approximations are provided in Appendices~\ref{alternative_generator} and~\ref{app:generator_viz}, the present work focuses on the canonical pair $(\mathrm{m}_0, \Lop_0)$.
\begin{example}[Uniform Measure]\label{ex:uniform}
    Let $\mathcal{C} \subseteq \mathcal{V}$. In the standard case where $\mathcal{C} = \emptyset$ or $\mathcal{C} = \mathcal{V}$, we adopt the uniform measure $\m_0 (u) = 1$. To satisfy detailed balance, the corresponding generator is simply the unnormalized Combinatorial Laplacian adjacency weights, i.e., $\Lop_0 (u,v) := \mathbf{1}_{\{u,v\} \in \mathcal{E}} \quad \text{for } u \neq v.$
\end{example}

\textbf{Theoretical Definition of Entropic Curvature.} The concept of entropic curvature provides a powerful way to extend Ricci-type geometric ideas to discrete spaces, particularly graphs. Recall from the Sturm, Lott and Villani framework, c.f. Section \ref{sec:background}, that lower bounds on Ricci curvature can be captured by the convexity of the entropy functional along Wasserstein geodesics. This carries over to the discrete setting via the following definition,

\begin{definition}[Entropic Curvature]\label{def:entropic_curvature}
The \emph{entropic curvature} $ \kappa \in \mathbb{R} $ is the largest constant such that for every pair of probability measures $ \mu_0, \mu_1 $ on $ \mathcal{V} $, there exists a constant-speed $ \mathrm{W}_1 $-geodesic $ (\mu_t)_{t \in [0,1]} $, i.e., $\mathrm{W}_1(\mu_s,\mu_t)=|t-s|\cdot\mathrm{W}_1(\mu_0,\mu_1)$, for which the entropy functional satisfies the $\kappa$-convexity inequality,
\begin{equation}
        \operatorname{Ent}_\m (\mu_t) \leq (1 - t)\operatorname{Ent}_\m (\mu_0) + t\operatorname{Ent}_\m (\mu_1) \; \; - \frac{\kappa}{2}t(1 - t) \mathrm{W}_1(\mu_0, \mu_1)^2.
\end{equation}
\end{definition}
This definition mirrors the role of Ricci curvature in smooth geometry; if $ \kappa > 0 $, then the entropy is uniformly displacement convex, which in turn imposes geometric rigidity on the space. To make the computation of curvature feasible in large graphs, we build on the work of \cite{samson2022entropic}, introducing a local proxy for $ \kappa $, known as the \emph{weak entropic curvature}. This approximation relies on evaluating the behavior of entropy in neighborhoods of radius two.

The classical Lott--Sturm--Villani framework characterises Ricci curvature lower bounds on smooth Riemannian manifolds via displacement convexity along constant-speed $\mathrm{W}_2$-geodesics (Theorem~\ref{theorem_and_definition_of_entr_in_theory}). In the discrete graph setting, $\mathrm{W}_2$-geodesics require interpolating measures that generally lack combinatorial interpretations and are computationally intractable. Following \cite{rapaport2024samson}, we instead use $\mathrm{W}_1$-geodesics, which are well-defined on finite graphs via linear programming and are computable in polynomial time. This substitution is theoretically grounded: on smooth manifolds, a Ricci lower bound is equivalent to displacement convexity along \emph{both} $\mathrm{W}_1$- and $\mathrm{W}_2$-geodesics \citep{Villani}. In the discrete regime, the $\mathrm{W}_1$ metric directly enables the transport-entropy inequality (Lemma~\ref{lemm:westrass_controled_entropy}) which underpins all downstream theoretical results.


\textbf{Formal definition of Weak Entropic Curvature.} Let $z \in \mathcal{V}$ be a vertex and $\mathcal{W} \subseteq \mathcal{V}$ a target set, typically the 2-sphere $\mathrm{S}_2(z)$. For any $z'' \in \mathcal{W}$, we define the two-step transition weight $\Lop^2(z, z'') := \sum_{v} \Lop (z,v)\Lop (v,z'')$ and the relative path weight $\ell (z, z', z'') := \frac{\Lop (z, z')\Lop (z', z'')}{\Lop^2(z, z'')}.$ The local functional $\mathrm{H}_\Lop (z, \mathrm{W})$ is defined as,
\begin{equation}
    \mathrm{H}_\Lop (z, \mathrm{W}) := \sup_{\alpha \in \mathcal{P}}\left\{ \sum_{z'' \in W} \Lop^2(z, z'') 
    \quad \cdot \prod_{z' \in ]z, z''[} \left( \frac{\alpha(z')}{\Lop (z, z')} \right)^{2\ell (z, z', z'')} \right\},
\end{equation}
Here, $]z, W[ := \bigcup_{z'' \in W} ]z, z''[$ denotes the union of all midpoints. The supremum is taken over all functions $\alpha: ]z, W[ \rightarrow \mathbb{R}_+$ satisfying the normalization condition $\sum_{v \in ]z, W[} \alpha(v) = 1$. Beyond this formal definition, weak entropic curvature admits a useful geometric interpretation in terms of local transport redundancy. In Appendix~\ref{app:interpretation}, we show that the local functional $\mathrm{H}_\Lop$ can be viewed through an entropy/KL-divergence lens: it measures how well the graph's two-hop geodesic structure supports the optimal redistribution of mass across intermediate nodes. This perspective reveals that high-curvature regions correspond to locally redundant, diffusion-friendly neighborhoods, whereas low- or negative-curvature regions indicate bottleneck-like structures where transport is forced through fewer fragile paths. We also relate this interpretation to a Motzkin--Straus-type characterization \citep{motzkin1965maxima}, showing how local clique density influences the curvature proxy.

Inspired by the functional form of \textit{Boltzmann Entropy}, we define the \textit{local weak entropic curvature}, a tractable local proxy for the entropic curvature defined, c.f., Definition \ref{weakentropic}.

\begin{definition}[Weak Entropic Curvature]\label{weakentropic}
Let $u \in \mathcal{V}$. We define the \textit{local weak entropic curvature} at node $z$ as, $\kappa_w(u) := -2 \log \mathrm{H}_\Lop(u, \mathrm{S}_2(u)) \text{,}$ and the \textit{weak entropic curvature} of the graph as the infimum over all vertices, i.e., 
$\kappa_w(\mathcal{G}) := \inf_{u \in \mathcal{V}} \kappa_w(u).$
\end{definition}
While the global curvature examines the convexity of entropy along the entire Wasserstein space, the local weak entropic curvature focuses on the displacement of mass within a controlled two-hop neighborhood. This approximation allows for efficient computation while preserving the core geometric insights of the Lott-Sturm-Villani framework. The formal link between this local proxy and the global entropic curvature is established in Theorem \ref{thm:lower_bound}.

\begin{theorem}\label{thm:lower_bound}
The weak entropic curvature $\kappa_w$ provides a rigorous lower bound for the global entropic curvature $\kappa$, i.e., $ \kappa \ge \kappa_w. $
\end{theorem}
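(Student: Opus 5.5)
The proof will follow the Samson / Rapaport--Samson route: entropy convexity along $\mathrm{W}_1$ geodesics is reduced to a local two-step statement about the generator $\Lop$, and that local statement is controlled by $\mathrm{H}_\Lop$. The key device is a \emph{Schrödinger-bridge-type} (or "$\Lop$-weighted") construction of interpolating measures.

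\textbf{Step 1: Build the interpolation.} Fix $\mu_0,\mu_1$ and an optimal $\mathrm{W}_1$ coupling $\pi$. For each pair $(x,y)\in\operatorname{supp}\pi$, consider the geodesics in $G(x,y)$. Each geodesic is weighted by the product of jump rates $\prod_i \Lop(z_i,z_{i+1})$ along it, normalised over $G(x,y)$. This gives a law on geodesics, and its time marginals (a binomial mixture of positions on the path, as in Léonard's slowing-down limit) form $\mu_t$. Because $\pi$ is $d$-optimal and every selected path is a geodesic, $\mathrm{W}_1(\mu_s,\mu_t)=|t-s|\mathrm{W}_1(\mu_0,\mu_1)$. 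This yields the constant-speed $\mathrm{W}_1$-geodesic required in Definition~\ref{def:entropic_curvature}.

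\textbf{Step 2: Reduce to a second-derivative bound.} Write $\phi(t)=\operatorname{Ent}_\m(\mu_t)$. The convexity inequality with constant $\kappa_w$ follows once we show
\[
\phi''(t)\ \ge\ \kappa_w\,\mathrm{W}_1(\mu_0,\mu_1)^2\quad\text{for }t\in(0,1),
\]
or, more safely, $\phi''(t)\ge \kappa_w\,\mathbb{E}_\pi[d(x,y)]^2$. When $\kappa_w<0$ the right-hand side has the wrong sign, so we will in fact prove $\phi''\ge \kappa_w\,\mathbb{E}_\pi[d(x,y)(d(x,y)-1)]$. Using $\mathbb{E}[d^2]$ versus $(\mathbb{E}d)^2$ requires care in that regime, which is exactly where Samson obtains the $\mathrm{W}_1$-type statement.

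\textbf{Step 3: Localise the second derivative.} Differentiating the binomial marginals twice makes $\phi''$ a sum of discrete second differences of $\log(\mu_t/\m)$ over triples $(z,z',z'')$ with $z''\in \mathrm{S}_2(z)$ (or $z''$ at distance two along the path) and $z'\in]z,z''[$. Grouping terms by the starting vertex $z$, the negative contribution at $z$ has the form
\[
\sum_{z''}\Lop^2(z,z'')\prod_{z'}\Big(\tfrac{\alpha(z')}{\Lop(z,z')}\Big)^{2\ell(z,z',z'')},
\]
where $\alpha$ is the normalised conditional density of the intermediate step. The geometric mean arises by applying Jensen/AM--GM to the log-sum over midpoints with weights $\ell(z,z',z'')$. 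Taking the supremum over $\alpha$ bounds this contribution by $\mathrm{H}_\Lop(z,\mathrm{S}_2(z))=e^{-\kappa_w(z)/2}\le e^{-\kappa_w(\mathcal{G})/2}$. A convexity inequality of the form $a\log(a/b)\ge a-b$ then turns this into a lower bound on $\phi''$ proportional to $-2\log\mathrm{H}_\Lop$. Finally, take the infimum over $z$ and integrate twice in $t$.

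\textbf{Main obstacle.} Step 3 is the hard part. We must match the exact exponents $2\ell(z,z',z'')$ and the normalisation $\sum\alpha=1$ to the weighting of the intermediate point produced by the bridge construction. We must also make sure the logarithmic linearisation produces exactly the factor $-2\log\mathrm{H}_\Lop$, not merely $1-\mathrm{H}_\Lop$. My first attempt will be to follow Samson's proof (Theorem on "weak entropic curvature $\Rightarrow$ $\mathrm{W}_1$ displacement convexity") essentially verbatim, specialised to the generator $\Lop$. In that argument the key local lemma is a Prékopa--Leindler-type inequality on $\mathrm{B}_2(z)$.
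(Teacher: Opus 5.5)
The paper's proof does none of this computation. It quotes Theorem~2 of Rapaport--Samson, $\kappa\ge\inf_{\mathrm{S}}-2\log\mathrm{K}(\mathrm{S})$ over $d$-cyclically monotone $\mathrm{S}$, together with their estimate (16), $\sup_{\mathrm{S}}\mathrm{K}(\mathrm{S})\le\sup_z\mathrm{H}_\Lop(z,\mathrm{S}_2(z))$. You propose to re-derive that machinery, and the part you make explicit breaks at Step~2. Your target $\phi''(t)\ge\kappa_w\,\mathrm{W}_1(\mu_0,\mu_1)^2$ fails in general, and so does its integrated form, the $\kappa_w$-convexity inequality of Definition~\ref{def:entropic_curvature}. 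So no version of Step~3 can produce it.

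Here is a counterexample. Take $x\sim y$. Any constant-speed $\mathrm{W}_1$-geodesic from $\delta_x$ to $\delta_y$ satisfies $\sum_v\mu_t(v)[d(x,v)+d(v,y)]=1$. This forces $\mu_t=(1-t)\delta_x+t\delta_y$, which is also your bridge. For every $\m$ one gets $\phi''(t)=\frac1t+\frac1{1-t}$, so $\phi''(\tfrac12)=4$. The inequality at $t=\tfrac12$ then gives $\kappa\le 8\log 2$ on any graph with an edge. Now consider the cocktail-party graph ($K_{2n}$ minus a perfect matching). Each $z$ has $\mathrm{S}_2(z)=\{\bar z\}$ with $2n-2$ midpoints, and AM--GM gives $\mathrm{H}_{\Lop_0}(z,\mathrm{S}_2(z))=1/(2n-2)$. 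Hence $\kappa_w=2\log(2n-2)>8\log 2$ once $n\ge 10$.

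The reason is structural. $\mathrm{H}_\Lop$ only sees two-step triples. In $\phi''=\sum_v\ddot\mu_t\log(\mu_t/\m)+\sum_v\dot\mu_t^2/\mu_t$, these triples enter through the first sum with binomial weight $d(x,y)(d(x,y)-1)$. The second sum is a universal convexity term that never involves $\mathrm{H}_\Lop$. So what a two-ball certificate can deliver is convexity with a cost like $\mathbb{E}_\pi[d(d-1)]$, which is your own fallback. That cost does not dominate $(\mathbb{E}_\pi d)^2=\mathrm{W}_1^2$; for instance it vanishes when $d\equiv 1$.

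Two smaller points. First, your worry about $\kappa_w<0$ is moot under Definition~\ref{def:entropic_curvature}. By Kantorovich--Rubinstein, $\mathrm{W}_1$ depends only on $\mu_1-\mu_0$, so $(1-t)\mu_0+t\mu_1$ is already a constant-speed $\mathrm{W}_1$-geodesic, and $\operatorname{Ent}_\m$ is convex along it. Hence $\kappa\ge 0\ge\kappa_w$ in that regime with no bridge at all. Your $d(d-1)$ fallback would not have sufficed there anyway: deducing the $\mathrm{W}_1^2$ form from it requires $\operatorname{Var}_\pi(d)\le\mathbb{E}_\pi d$. That fails, for example, for the $\mathrm{W}_1$-optimal coupling $\pi=\tfrac12\delta_{(a,a)}+\tfrac12\delta_{(b,c)}$ with $d(b,c)\ge 3$.

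Second, in Step~3, summing $a\log(a/b)\ge a-b$ over targets only yields $\sum q\log(q/r)\ge 1-\sum r$. That gives the constant $2(1-\mathrm{H}_\Lop)\le-2\log\mathrm{H}_\Lop$, with equality only when $\mathrm{H}_\Lop=1$. The logarithm instead comes from the Gibbs inequality $\sum_{z''}q(z'')\log\frac{q(z'')}{r(z'')}\ge-\log\sum_{z''}r(z'')$.

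A correct version of your plan has to change the target to a cost that does not charge one-step moves, such as $\mathbb{E}_\pi[d(d-1)]$, rather than the $\mathrm{W}_1^2$ form. The paper's two-line citation does not reconcile the imported Rapaport--Samson bound with the $\mathrm{W}_1^2$ form either.
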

\begin{proof}
    See  Appendix \ref{app:Proof_entropic_curvature_bound}.
\end{proof}
Theorem \ref{thm:lower_bound} shows that the computable  $\kappa_w$ is not merely a heuristic local statistic: it is a certified lower bound on the global entropic curvature $\kappa$. Thus, although $\kappa$ is defined through entropy convexity over the full Wasserstein space, its geometric effect can already be detected through two-hop transport structure. Intuitively, large positive values of $\kappa_w$ arise in locally redundant regions where mass can be redistributed through many alternative midpoints, while low or negative values indicate bottleneck-like neighborhoods in which transport is forced through a small number of fragile paths. To illustrate this, Appendix \ref{app:theoretical_calculation} reports $\kappa_w$ on canonical graph motifs, ranging from dense clustered structures to sparse trees. These examples show that weak entropic curvature separates diffusion-friendly local traps from negatively curved expansion and bottleneck regimes, providing the geometric intuition behind the oversmoothing and oversquashing results developed in the following sections.

\textbf{Algorithmic Implementation.} We evaluate the practical utility of entropic curvature through a node-wise implementation of the weak entropic curvature proxy. Algorithm \ref{alg:weak-curv-exact} in Appendix \ref{subsec:app:algo} operationalizes this by transforming the theoretical displacement convexity requirement into a constrained optimization problem. Specifically, for each node, we precompute the relative path weights $\ell(z, u, w)$, which represent the local transport efficiency across two-hop geodesics. The core of the algorithm involves solving a smooth concave maximization for the local functional $\mathrm{H}_\Lop$ over the probability simplex $\Delta_{|\mathrm{S}_1(z)|}$. To achieve high-fidelity convergence at scale, we use the Sequential Least Squares Programming (SLSQP) solver \citep{nocedal2006numerical}. This gradient-based approach leverages the smoothness of the objective function $\mathrm{J}(\alpha)$, typically reaching a stable curvature estimate within a small number of iterations. A detailed analysis of the algorithm is provided in Appendices \ref{app:impl} and \ref{app:complexity}.





%% file: Content/Entropic_expansion.tex
A central question in GNN design is whether a graph can simultaneously achieve three desirable properties: \emph{sparsity} (bounded degree, ensuring scalable message-passing), \emph{spectral expansion} (fast mixing, preventing oversquashing), and \emph{positive entropic curvature} (geometric regularity, preventing oversmoothing). Our main result of this section shows these three properties are fundamentally incompatible in the large-graph limit, a structural impossibility theorem with direct consequences for GNN design. To state it precisely, let $\mathbb{P}_\mathcal{G}$ denote the transition matrix of the lazy simple random walk on $\mathcal{G}$, $\mathbb{P}_\mathcal{G}(u,v) = \tfrac{1}{2}\delta_{uv} + \tfrac{\mathbf{1}_{\{u,v\}\in\mathcal{E}}}{2\deg(u)},$ and let $\lambda_2(\mathcal{G})$ be its second-largest eigenvalue. A small $\lambda_2$ indicates strong mixing; a large $\lambda_2$ (close to $1$) signals slow convergence and bottlenecks.

\begin{definition}[Perfect Graph]\label{def:perfect}
Given constants $\Delta\ge 1$ and $\rho<1$, a graph $\mathcal{G}$ is called \emph{perfect} with respect to $(\Delta,\rho)$ if it simultaneously satisfies: \textit{(A) Sparsity}, $\Delta(\mathcal{G})\le\Delta$; \textit{(B) Spectral expansion}, $\lambda_2(\mathcal{G})\le\rho$; and \textit{(C) Non-negative curvature}, $\kappa_w(\mathcal{G})\ge 0$.
\end{definition}

A perfect graph, as defined in \ref{def:perfect}, would be ideal for a GNN: cheap to run, free of bottlenecks, and geometrically stable. The following result, which extends \citep{salez2022} to our entropic curvature framework, proves that no sufficiently large graph can be perfect.

\begin{theorem}[Expansion Paradox]\label{thm:expansion}
Let $(\mathcal{G}_n)_{n\ge 1}$ be a sequence of finite graphs satisfying the sparsity condition $\sup_{n\ge 1}\left\{\frac{1}{|\mathcal{V}_n|}\sum_{x\in\mathcal{V}_n}\deg_{\mathcal{G}_n}(x)\log\deg_{\mathcal{G}_n}(x)\right\}<\infty,$ and suppose that $\kappa_{\mathcal{G}_n}(x)\ge a>0$ for all but $o(|\mathcal{V}_n|)$ vertices. Then, for all $\rho<1$, $\liminf_{n\to\infty}\frac{1}{|\mathcal{V}_n|}\#\bigl\{i:\lambda_i(\mathcal{G}_n)\ge\rho\bigr\}>0.$

\end{theorem}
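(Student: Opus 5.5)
We follow the strategy of Salez's proof that sparse expanders cannot have non-negative curvature, replacing Ollivier--Ricci curvature by the weak entropic curvature. The argument goes through the Benjamini--Schramm local limit and has three steps. (1) By compactness, the uniform bound on $\frac{1}{|\mathcal{V}_n|}\sum_x \deg\log\deg$ gives tightness of the uniformly rooted graphs $(\mathcal{G}_n,o_n)$. So along a subsequence they converge locally to a unimodular random rooted graph $(\mathcal{G},o)$, and the root degree is uniformly integrable (indeed $\mathbb{E}[\deg(o)\log\deg(o)]<\infty$). (2) We pass curvature to the limit. Since $\kappa_w(x)$ is defined by the optimisation $\mathrm{H}_\Lop(x,\mathrm{S}_2(x))$, it depends only on the ball $\mathrm{B}_2(x)$, and the generator $\Lop_0$ is local. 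Hence $x\mapsto\kappa_w(x)$ is a continuous functional of the rooted $2$-ball. The hypothesis that $\kappa_{\mathcal{G}_n}(x)\ge a$ off $o(|\mathcal{V}_n|)$ vertices then yields, almost surely, $\kappa_w(v)\ge a$ at every vertex $v$ of the limit $\mathcal{G}$; this uses unimodularity (mass transport) to move from the root to all vertices. (3) We argue by contradiction. Suppose that for some $\rho<1$ the fraction of eigenvalues $\ge\rho$ tends to $0$ along a subsequence. The empirical spectral measure of $\mathbb{P}_{\mathcal{G}_n}$ converges to the expected spectral measure of the limit at the root, $\mu_o(\cdot)=\mathbb{E}\langle\delta_o,\mathbf{1}_{\cdot}(\mathbb{P}_\mathcal{G})\delta_o\rangle$. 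So $\mu_o([\rho,1])=0$ and the limiting lazy walk has a uniform spectral gap at the root. Consequently the return probabilities $\mathbb{P}^t(o,o)$ decay exponentially in expectation, and therefore the walk has positive linear speed and positive entropy (speed $\ell>0$, Avez entropy $h>0$).

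The contradiction comes from curvature. We want to show that $\kappa_w\ge a>0$ on the limit forces zero speed, or even sub-diffusive displacement $\mathbb{E}\,d(o,X_t)=o(t)$. We invoke Theorem~\ref{thm:lower_bound}: since $\kappa\ge\kappa_w\ge a$, the entropy is $a$-convex along $\mathrm{W}_1$-geodesics. Apply the defining inequality of Definition~\ref{def:entropic_curvature} with $\mu_0=\delta_o$ and $\mu_1=\mathbb{P}^t(o,\cdot)$. This gives
\[
\frac{a}{2}\,t'(1-t')\,\mathrm{W}_1(\delta_o,\mu_1)^2\le t'\operatorname{Ent}_{\m_0}(\mu_1)-\operatorname{Ent}_{\m_0}(\mu_{t'})+(1-t')\operatorname{Ent}_{\m_0}(\delta_o).
\]
Here $\mathrm{W}_1(\delta_o,\mu_1)=\mathbb{E}\,d(o,X_t)$, and $-\operatorname{Ent}_{\m_0}$ is at most $\log$ of the support size. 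With $\m_0$ uniform, $\operatorname{Ent}$ is negative Shannon entropy, bounded below by minus the log-volume of the ball of radius $\mathbb{E}d(o,X_t)$. This yields a bound of the form $\frac{a}{8}(\mathbb{E}d(o,X_t))^2\le C\,H(X_t)+O(1)$. The sparsity hypothesis enters here, since $\deg\log\deg$ controls the one-step entropy and gives $H(X_t)\le Ct$ in expectation. The resulting diameter-type bound $\mathbb{E}d(o,X_t)=O(\sqrt{t})$ contradicts positive speed. This is the analogue of Salez's statement that non-negative curvature implies zero speed, here obtained through a Bonnet--Myers/transport--entropy route (Lemma~\ref{lemm:westrass_controled_entropy}).

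\textbf{Main obstacle.} I expect the hardest step to be step (3)'s transfer to the infinite limit. Definition~\ref{def:entropic_curvature} and Theorem~\ref{thm:lower_bound} are stated for finite graphs, whereas the limit graph is infinite and random. I would handle this by applying the convexity inequality directly on $\mathcal{G}_n$. The measures involved are supported in $\mathrm{B}_t(o_n)$, and on most such balls every vertex has $\kappa_w\ge a$. The remaining $o(|\mathcal{V}_n|)$ bad vertices must be controlled by a localized version of Theorem~\ref{thm:lower_bound} that only needs $\kappa_w\ge a$ on the relevant ball. If that localization fails, the fallback is to follow Salez verbatim. In that route, one uses entropic curvature only to obtain the local condition his argument needs, namely a $\mathrm{W}_1$-contraction of one-step transition kernels across edges, derived from the two-hop functional $\mathrm{H}_\Lop$. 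One then applies his result that sparse graphs with that property have a non-vanishing fraction of eigenvalues near $1$.
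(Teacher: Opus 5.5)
Your steps (1)–(2) match the paper's: tightness from the $d\log d$ criterion, $2$-locality of $\kappa_w$ with Portmanteau, and mass transport to all vertices. The spectral half of (3) is the contrapositive of the paper's closing ingredients, Lemma~\ref{lem:spec-rad-entropy} and Theorem~\ref{thm:zero-entropy}. You diverge in how curvature kills the walk. The paper shows that bounded harmonic functions on the limit are constant, which gives $\mathcal{H}(\mathcal{G})=0$ by Theorem~\ref{thm:liouville-entropy} and then $\rho(\mathcal{G})=1$; it never touches speed or volume growth. You instead try to prove sublinear displacement, and that step does not go through.

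Since $\operatorname{Ent}_{\m_0}(\delta_o)=0$ and $\operatorname{Ent}_{\m_0}(\mu_1)=-H(X_t)$ (Shannon entropy), your inequality reads
\[
\tfrac{a}{2}\,t'(1-t')\,\bigl(\mathbb{E}\,d(o,X_t)\bigr)^2\;\le\; H(\mu_{t'})-t'\,H(X_t).
\]
\begin{itemize}
\item The walk entropy enters with the favourable sign, so controlling $H(X_t)$ via $\deg\log\deg$ buys nothing. What must be bounded is the entropy of the interpolant $\mu_{t'}$, which is not a random-walk law.
\item All you know about $\mu_{t'}$ is its support. Equality in the triangle inequality forces any $\mathrm{W}_1$-geodesic issued from $\delta_o$ to charge only points on geodesics from $o$ to $\operatorname{supp}\mu_1\subseteq\mathrm{B}_t(o)$. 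So $H(\mu_{t'})\le\log|\mathrm{B}_t(o)|$, with radius $t$, not $\mathbb{E}\,d(o,X_t)$.
\item Your bound $C\,H(X_t)+O(1)$ therefore does not follow. You need $\log|\mathrm{B}_t(o)|=O(t)$, essentially bounded degrees, and nothing in the hypotheses provides it. The condition $\kappa_w\ge a$ does not bound degrees: $K_{N,N}$ has $\kappa_w=-2\log(1-1/N)>0$. And $\mathbb{E}[\deg\log\deg]<\infty$ permits $\mathbb{E}[\deg^2]=\infty$, where balls can grow super-exponentially, e.g.\ unimodular Galton--Watson limits with $p_k\asymp k^{-\beta}$, $2<\beta<3$.
\item The same growth control is used silently on the other side. $\rho(\mathcal{G})<1$ gives $\mathcal{H}>0$, but the standard ways to upgrade this to positive linear speed both need finite exponential growth. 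These are $\mathcal{H}\le(\text{speed})\times(\text{growth rate})$, or $\mathbb{P}^t(o,\mathrm{B}_{\epsilon t}(o))\lesssim|\mathrm{B}_{\epsilon t}(o)|\,\rho(\mathcal{G})^t$.
\end{itemize}
This is exactly why Salez, and the paper, work with entropy and the Liouville property rather than speed.

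The localization problem you flag is also real and unresolved. Theorem~\ref{thm:lower_bound} comes from a criterion over all $d$-cyclically monotone sets and only gives $\kappa\ge\inf_{z\in\mathcal{V}}\kappa_w(z)$. On $\mathcal{G}_n$ this may be negative because of the $o(|\mathcal{V}_n|)$ bad vertices, and the limit graph is infinite. Your fallback, extracting a one-step $\mathrm{W}_1$-contraction (non-negative Ollivier--Ricci curvature) from $\mathrm{H}_\Lop$, is an unproved claim that nothing in the paper supports.

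In bounded degree, the corrected inequality with $\mu_1=\delta_y$ already gives a Bonnet--Myers-type bound $d(o,y)=O(\log\Delta/a)$. So your route could work there once an infinite-graph or localized version of Theorem~\ref{thm:lower_bound} is available, but it cannot reach the $\deg\log\deg$ regime of the statement. To cover that regime, replace the speed argument by a proof that $\kappa_w\ge a$ everywhere on the limit forces the Liouville property, hence $\mathcal{H}=0$, as the paper does. Be aware that the paper gets this by applying its finite-graph Poincaré inequality (Theorem~\ref{thm:poincare-gnn}) on the infinite limit, so your finite-versus-infinite concern has to be faced on that route too.
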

\begin{proof}
    See Appendix~\ref{app:proof_expansions}.
\end{proof}

Theorem~\ref{thm:expansion} resolves a long-standing tension in GNN design. Graphs that are sparse and spectrally well-connected, exactly those that appear in standard benchmarks such as citation, social, and web networks, \emph{must} exhibit negative entropic curvature. Since negative $\kappa_w$ permits rapid diffusion of probability mass, it explains why long-range message-passing on these graphs is prone to oversquashing. Conversely, positive curvature controls this diffusion via the Poincaré inequality of Theorem~\ref{thm:poincare-gnn}, preventing oversmoothing. The paradox thus unifies oversmoothing and oversquashing as two manifestations of the same geometric trade-off: they are not independent failure modes, but opposite ends of a single curvature spectrum.

\begin{figure}[h]
  \centering
  \includegraphics[width=\linewidth]{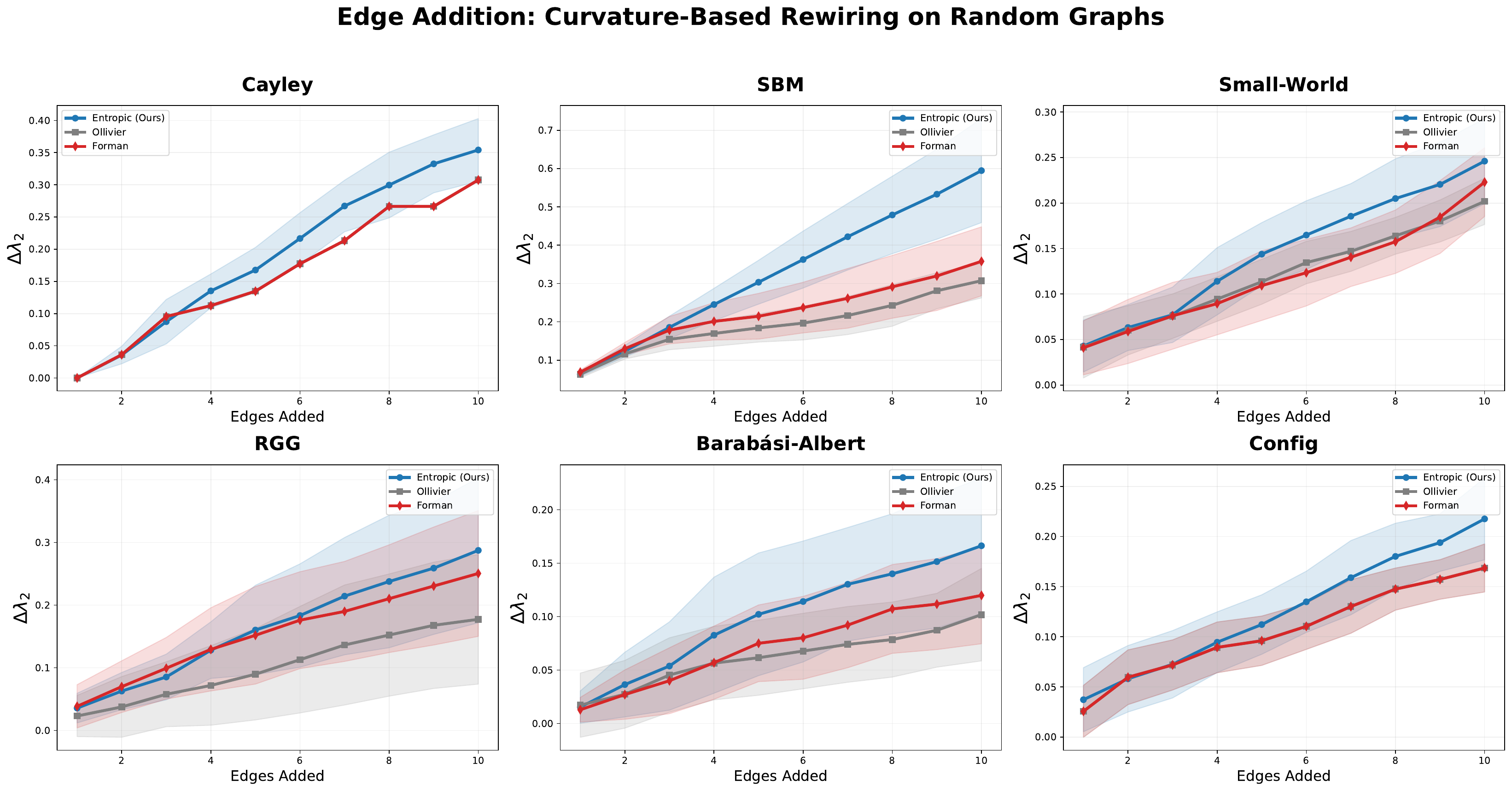}
  \caption{Spectral gap improvement $\Delta\lambda_2$ under curvature-guided edge addition (10 edges, 10 seeds, shaded $\pm$1 std). MCR (ours) consistently outperforms ORC and FRC across all six graph families.}
  \label{fig:rewiring}
\end{figure}

\textbf{Curvature-Based Rewiring.}
The expansion paradox shows that sparse graphs cannot simultaneously be strongly expanding and positively curved in the large-graph regime. This does not mean, however, that negative curvature is fixed by the input topology. Since $\kappa_w$ is determined by the two-hop transport geometry around each node, one can locally modify the graph by adding edges that create additional geodesic midpoints. The purpose of curvature-based rewiring is therefore not to make the graph uniformly positively curved, but to reduce the severity of negative-curvature bottlenecks while preserving sparsity. The following result gives the local principle behind our rewiring strategy: adding alternative two-hop routes cannot decrease the weak entropic curvature at the rewired node.

\begin{theorem}[Curvature Monotonicity Under Rewiring]
\label{thm:rewiring}
Let $\mathcal{G}^+$ be obtained from $\mathcal{G}$ by adding edges that create new two-hop geodesics from $z\in\mathcal{V}$ to nodes in $\mathrm{S}_2(z)$, without removing any existing two-hop geodesic. Define the midpoint set
$\mathrm{M}_\mathcal{G}(z,w):=\{m\in\mathcal{V}: z\sim m,\; m\sim w\}.$ If $\mathrm{M}_\mathcal{G}(z,w)\subseteq \mathrm{M}_{\mathcal{G}^+}(z,w)$ for every $w\in\mathrm{S}_2(z)$, then $\kappa_w^{\mathcal{G}^+}(z)\ge \kappa_w^\mathcal{G}(z),$
with strict improvement whenever $|\mathrm{M}_{\mathcal{G}^+}(z,w)|>|\mathrm{M}_\mathcal{G}(z,w)|$ for some $w$.
\end{theorem}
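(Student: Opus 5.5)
The plan is to work with the canonical pair $(\m_0,\Lop_0)$ of Example~\ref{ex:uniform}, where the functional $\mathrm{H}_\Lop$ becomes combinatorial, and to reduce the claim to a statement about $\mathrm{H}$. Since $\kappa_w(z)=-2\log \mathrm{H}_\Lop(z,\mathrm{S}_2(z))$ and $-2\log$ is strictly decreasing, it suffices to prove
\[
\mathrm{H}^{\mathcal{G}^+}(z,\mathrm{S}_2(z))\le \mathrm{H}^{\mathcal{G}}(z,\mathrm{S}_2(z)),
\]
with strict inequality when some midpoint set strictly grows.

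\textbf{Reduction to a combinatorial functional.} With $\Lop_0$, for $w\in \mathrm{S}_2(z)$ we have $\Lop^2(z,w)=|\mathrm{M}(z,w)|=:m_w$ and $\ell(z,z',w)=1/m_w$. Writing $\mathrm{GM}_w(\alpha)$ for the geometric mean of $\alpha$ over $\mathrm{M}(z,w)$, this gives
\[
\mathrm{H}(z,\mathrm{S}_2(z))=\sup_{\alpha}\sum_{w} m_w\,\mathrm{GM}_w(\alpha)^2 .
\]
Two preliminary facts are needed.
\begin{itemize}
\item The hypothesis (no existing two-hop geodesic is removed, and the new edges only create geodesics ending in $\mathrm{S}_2(z)$) implies that $\mathrm{S}_2(z)$ is the same in $\mathcal{G}$ and $\mathcal{G}^+$. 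In particular, no edge $z\sim w$ is added for $w\in\mathrm{S}_2(z)$, since that would destroy the geodesics to $w$. So the index set of the sum is unchanged.
\item The midpoint union $]z,\mathrm{S}_2(z)[$ may grow, because new neighbours of $z$ can appear. Consequently the simplex over which $\alpha$ ranges is larger in $\mathcal{G}^+$.
\end{itemize}
This enlargement of the simplex is the genuine difficulty, because a larger feasible set can only make the supremum larger.

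\textbf{Transfer map.} Let $\alpha^+$ be a maximiser for $\mathcal{G}^+$; it exists by compactness and continuity of the objective. I will construct a feasible $\alpha$ for $\mathcal{G}$ with
\[
\sum_w m_w\,\mathrm{GM}_w(\alpha)^2\;\ge\;\sum_w m^+_w\,\mathrm{GM}^+_w(\alpha^+)^2 .
\]
The construction: the mass $\alpha^+$ places on midpoints that are new to $\mathcal{G}^+$ is deleted, and the remaining vector is renormalised over $]z,\mathrm{S}_2(z)[_{\mathcal{G}}$. Renormalisation multiplies every old coordinate by a factor $c\ge 1$. The comparison is then done term by term, in two cases.
\begin{itemize}
\item If $\mathrm{M}(z,w)$ is unchanged, the term is multiplied by $c^2\ge1$.
\item If $\mathrm{M}^+(z,w)=\mathrm{M}(z,w)\cup N_w$ with $N_w\neq\emptyset$, write $k=|N_w|$ and $m=m_w$. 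I need
\[
(m+k)\,\mathrm{GM}(\alpha^+ \text{ on } \mathrm{M}\cup N_w)^2\le m\,c^2\,\mathrm{GM}(\alpha^+\text{ on }\mathrm{M})^2 .
\]
\end{itemize}
The sanity check is the single-target case: the optimal value is $1/m$, which drops to $1/(m+k)$, confirming the direction. In general I would bound $\mathrm{GM}$ over the enlarged set by weighted AM--GM. This trades the factor $m+k$ against the geometric-mean exponent $2/(m+k)$ and against the mass $\sum_{N_w}\alpha^+$ that was removed, which is exactly what $c=1/(1-\sum_{\text{new}}\alpha^+)$ compensates.

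\textbf{Main obstacle.} The hard step is this per-term inequality. A new midpoint may be shared by several targets $w$, so the removed mass and the factor $c$ are global while the gains are local. If the naive deletion map fails, my fallback is to redistribute the mass of each new midpoint onto the old midpoints of the targets it serves, proportionally to $\alpha^+$ there, rather than uniformly. Alternatively, I would pass to the first-order (KKT) conditions of the concave-in-$\log\alpha$ problem, which equalise the marginal contributions at the optimum, and compare the two optimal values through the Lagrange multiplier.

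\textbf{Strictness and general generators.} Strict improvement should follow from strictness in AM--GM together with $m/(m+k)<1$ whenever some $N_w\neq\emptyset$. For general reversible $\Lop$ the same argument should go through with $m_w$ replaced by $\Lop^2(z,w)$ and uniform exponents replaced by the weights $\ell$, using weighted AM--GM throughout.
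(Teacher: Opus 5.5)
\paragraph{The per-term inequality fails, and so does the theorem.} The step that breaks is exactly the per-term inequality you flag as the main obstacle. Neither of your fallbacks (proportional redistribution, or a KKT/Lagrange-multiplier comparison) can repair it, because the monotonicity claim is itself false once several targets share a single $\alpha$.

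Take the uniform generator $\Lop_0$ and a vertex $z$ with neighbours $a,b$. Add two vertices $u_1,u_2$, each adjacent to both $a$ and $b$, and a vertex $w$ adjacent to $a$ only. Then $\mathrm{S}_2(z)=\{u_1,u_2,w\}$. Writing $x=\alpha(a)$,
\[
\mathrm{H}^{\mathcal{G}}(z,\mathrm{S}_2(z))=\sup_{x\in[0,1]}\bigl[4x(1-x)+x^2\bigr]=4/3,
\]
attained at $x=2/3$. Adding the single edge $b\sim w$ meets every hypothesis: $\mathrm{S}_2(z)$ and the neighbourhood of $z$ are unchanged, no geodesic is removed, and $\mathrm{M}(z,w)$ grows from $\{a\}$ to $\{a,b\}$. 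Yet
\[
\mathrm{H}^{\mathcal{G}^+}(z,\mathrm{S}_2(z))=\sup_x 6x(1-x)=3/2>4/3,
\]
so $\kappa_w(z)$ drops from $-2\log(4/3)$ to $-2\log(3/2)$.

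In your notation $c=1$ here, and the per-term inequality reads $2\alpha^+(a)\alpha^+(b)\le\alpha^+(a)^2$. This fails at the $\mathcal{G}^+$-maximiser $\alpha^+=(1/2,1/2)$. So the obstruction is not the enlargement of the simplex, which does not even occur in this example. It is the coupling through one shared $\alpha$: a new midpoint can align the term for $w$ with the optimum of the other terms.

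Your strictness argument also fails independently. For $z\sim a,b$, $w_1\sim a$, $w_2\sim b$, adding $b\sim w_1$ gives $\mathrm{H}=\sup_x[x^2+(1-x)^2]=1$ and $\mathrm{H}^+=\sup_x[2x(1-x)+(1-x)^2]=1$, so there is no strict improvement even though $|\mathrm{M}(z,w_1)|$ grows.

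\paragraph{The paper's proof has the same defect, and what survives.} The paper's own argument does not avoid this. Its Step 2 asserts the pointwise dominance $f^+_w(\alpha)\le f_w(\alpha)$ for all $\alpha$. That fails at the same point: $f_w=\alpha(a)^2=1/4$ while $f^+_w=2\alpha(a)\alpha(b)=1/2$. The supporting claim that $k\,(1/k)^{2/k}=k^{1-2/k}$ decreases in $k$ is also backwards; its values at $k=1,2,3,4$ are $1,1,3^{1/3},2$.

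Your singleton sanity check does establish a weaker statement that is true. Since $\sup_\alpha\sum_w f_w\le\sum_w\sup_\alpha f_w$, and each singleton supremum equals $1/|\mathrm{M}(z,w)|$ by AM--GM, one gets
\[
\kappa_w(z)\ge-2\log\sum_{w\in\mathrm{S}_2(z)}|\mathrm{M}(z,w)|^{-1}.
\]
This certificate is strictly monotone under midpoint completion with $\mathrm{S}_2(z)$ fixed. Monotonicity of $\kappa_w(z)$ itself holds only under extra hypotheses, for instance when $\mathrm{S}_2(z)$ is a single vertex. The theorem as stated cannot be proved by any route.
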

\begin{proof}
   See Appendix~\ref{app:proof_rewiring}.
\end{proof}
The theorem formalizes a simple geometric mechanism: increasing the number of admissible midpoints makes two-hop transport less concentrated, thereby improving the local entropic-curvature certificate. Based on this principle, our \emph{Midpoint-Completion Rewiring} (MCR) procedure repeatedly targets the most negatively curved region and adds an edge that creates an additional midpoint for a poorly supported two-hop connection. We give the implementation details in Appendix~\ref{app:curvature_based_rewiring}.

This monotonicity result gives a local curvature certificate for each accepted rewiring step. Its downstream consequences follow from the earlier theory: when rewiring increases the graph-level lower bound $\kappa_w$, Theorem~\ref{thm:lower_bound} yields a tighter certificate for the global entropic curvature, and in the positive-curvature regime Theorem~\ref{thm:robustness} gives a sharper transport--entropy generalization bound. The effect on spectral expansion is evaluated empirically, since the expansion paradox identifies a structural trade-off rather than a stepwise monotonicity guarantee. Figure~\ref{fig:rewiring} shows that MCR consistently improves the spectral gap more than Ollivier--Ricci (ORC) and Forman--Ricci (FRC) rewiring across all six random graph families. The gains are especially pronounced on community-structured graphs, where adding midpoint-completion edges directly targets transport bottlenecks between regions rather than relying only on one-hop or degree-based curvature information.

\begin{wrapfigure}{r}{0.6\linewidth}

  \centering
  \includegraphics[width=\linewidth]{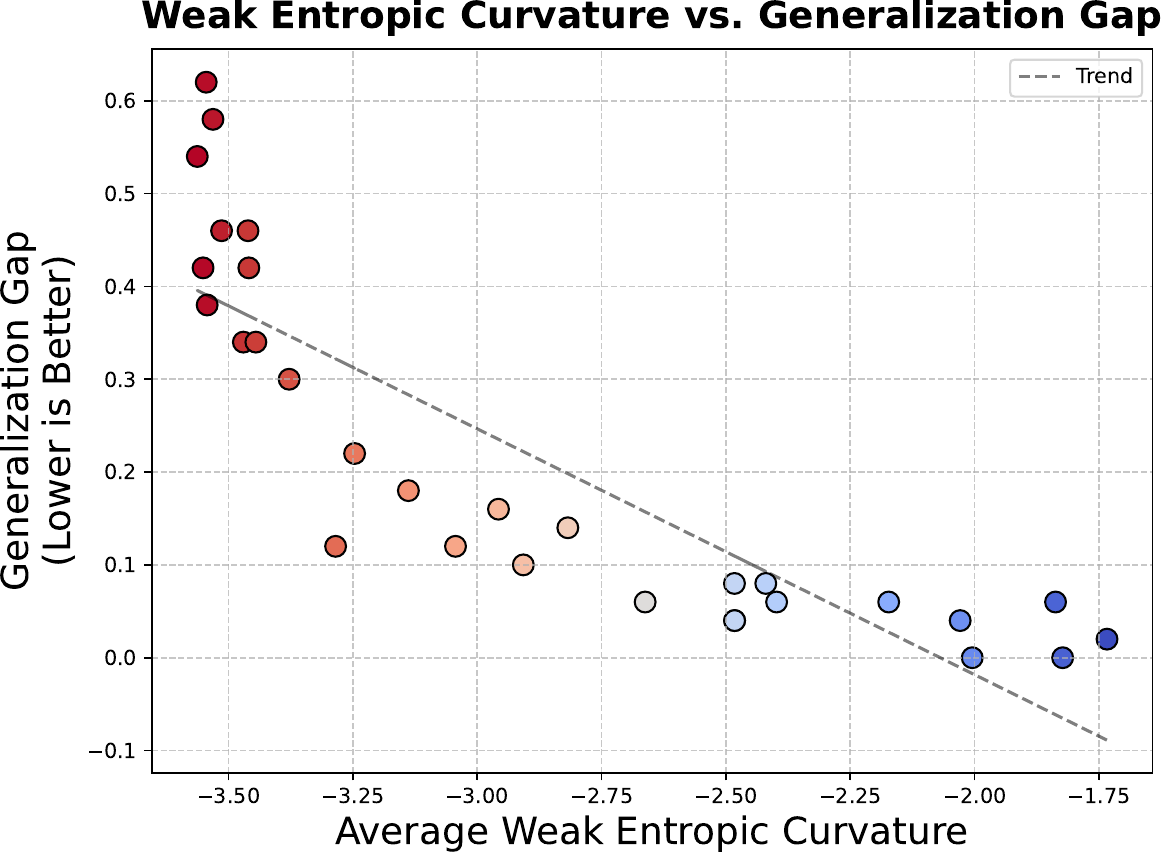}
  \caption{Empirical correlation between $\bar\kappa_w$ and the GNN generalization gap on SBM graphs.}
  \label{fig:sbm_generalization}
\end{wrapfigure}

%% file: Content/Robustness.tex
A central question in GNN theory is how well a model trained on one part of a graph transfers to another, especially when the training and test nodes occupy different regions of the topology~\citep{wangmanifold}. We formalize this discrepancy by viewing the training and test sets as probability measures $\mu_{\mathrm{train}}$ and $\mu_{\mathrm{test}}$ on $\mathcal{V}$. The generalization gap $\Delta\mathcal{L}(f_\theta)
:=
\left|
\mathcal{L}_{\mu_{\mathrm{test}}}(f_\theta)
-
\mathcal{L}_{\mu_{\mathrm{train}}}(f_\theta)
\right|$ then measures the sensitivity of the learned predictor to a distributional shift over the graph.

\begin{remark}[Entropy finiteness and applicability]
\label{rem:entropy_finite}
Theorems~\ref{thm:robustness} and~\ref{thm:gen_bound_negative} require that $\mathrm{Ent}_{\mu_{\mathrm{train}}}(\mu_{\mathrm{test}}) < \infty$. This holds whenever $\mu_{\mathrm{test}}$ is absolutely continuous with respect to $\mu_{\mathrm{train}}$, i.e., $\mathrm{supp}(\mu_{\mathrm{test}}) \subseteq \mathrm{supp}(\mu_{\mathrm{train}})$. In the standard node classification setting with disjoint train/test splits, this condition is \emph{not} automatically satisfied (since the splits are disjoint), and the relative entropy is infinite. The theorems therefore apply in the \emph{overlapping} or \emph{soft-split} regime, or when $\mu_{\mathrm{train}}$ and $\mu_{\mathrm{test}}$ are defined as smoothed distributions over the node set (e.g., with a small positive mass on every node). In our empirical analysis (Figure~\ref{fig:sbm_generalization}), we use a random split with overlap to ensure finiteness. We further note that the transport-entropy inequality (Lemma~\ref{lemm:westrass_controled_entropy}) is derived using the graph's invariant reference measure $\mathfrak{m}$ as guaranteed by the positive curvature condition; the generalization bound then applies with $\mu_0 = \mu_{\mathrm{train}}$ playing the role of a reference measure for the relative entropy term, which is a standard application of the Kantorovich-Rubinstein duality and does not require $\mu_{\mathrm{train}} = \mathfrak{m}$.
\end{remark}

Entropic curvature provides a natural way to control this sensitivity: through transport--entropy inequalities, curvature bounds convert the cost of moving from $\mu_{\mathrm{train}}$ to $\mu_{\mathrm{test}}$ into an explicit bound on the change in risk.

\begin{theorem}[Generalization Bound under Positive Entropic Curvature; Proof in Appendix~\ref{app:proof_thm_robustness}]
\label{thm:robustness}
Assume that $\mathcal{G}$ has weak entropic curvature bounded below by $\kappa_w>0$. Suppose the loss $\mathcal{J}$ is $\sigma$-Lipschitz and the GNN $f_\theta$ is globally $\Lambda$-Lipschitz with respect to the graph distance $d$, i.e., $\|f_\theta(x)-f_\theta(y)\|\le \Lambda d(x,y)
\qquad \forall x,y\in\mathcal{V}.$
Then, for any test distribution $\mu_{\mathrm{test}}$, 
\[
\left|
\mathcal{L}_{\mu_{\mathrm{test}}}(f_\theta)
-
\mathcal{L}_{\mu_{\mathrm{train}}}(f_\theta)
\right|
\le
\sigma\Lambda
\sqrt{
\frac{2}{\kappa_w}
\,
\mathrm{Ent}_{\mu_{\mathrm{train}}}(\mu_{\mathrm{test}})
}.
\]
\end{theorem}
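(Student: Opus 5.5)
We will argue in two stages: a Kantorovich--Rubinstein step that converts the risk gap into a $\mathrm{W}_1$ distance, and a transport--entropy (Talagrand-type) inequality, derived from positive entropic curvature, that bounds this distance by relative entropy.

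\emph{Step 1: Lipschitz reduction to $\mathrm{W}_1$.} Define $g(u):=\mathcal{J}(f_\theta(u),y_u)$. Then $\mathcal{L}_{\mu}(f_\theta)=\int g\,d\mu$. Since $\mathcal{J}$ is $\sigma$-Lipschitz in its first argument and $f_\theta$ is $\Lambda$-Lipschitz with respect to $d$, we have $|g(x)-g(y)|\le\sigma\Lambda\, d(x,y)$. Here we will either assume the label dependence is absorbed into the Lipschitz constant, or treat $(u\mapsto \mathcal{J}(f_\theta(u),y_u))$ as the $\sigma\Lambda$-Lipschitz object directly. We will state this point explicitly, because label variation is not controlled otherwise. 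Kantorovich--Rubinstein duality on the finite metric space $(\mathcal{V},d)$ then gives
\[
\bigl|\mathcal{L}_{\mu_{\mathrm{test}}}-\mathcal{L}_{\mu_{\mathrm{train}}}\bigr|\le \sigma\Lambda\,\mathrm{W}_1(\mu_{\mathrm{train}},\mu_{\mathrm{test}}).
\]

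\emph{Step 2: Transport--entropy inequality.} We next show that $\mathrm{W}_1(\nu,\mu)^2\le \frac{2}{\kappa}\mathrm{Ent}_\nu(\mu)$, with $\nu$ the reference measure. Combined with Theorem~\ref{thm:lower_bound} ($\kappa\ge\kappa_w>0$), this gives the constant $2/\kappa_w$. This is the content of Lemma~\ref{lemm:westrass_controled_entropy}, which we invoke. To indicate why it holds, apply Definition~\ref{def:entropic_curvature} with $\mu_0=\nu$ (so that $\mathrm{Ent}_\nu(\mu_0)=0$) and $\mu_1=\mu$:
\[
\mathrm{Ent}(\mu_t)\le t\,\mathrm{Ent}(\mu)-\tfrac{\kappa}{2}t(1-t)\mathrm{W}_1^2 .
\]
Relative entropy with respect to a probability reference is nonnegative, so $\mathrm{Ent}(\mu_t)\ge0$. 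Dividing by $t$ and letting $t\to0$ yields $\frac{\kappa}{2}\mathrm{W}_1^2\le\mathrm{Ent}(\mu)$, which is exactly the claimed inequality. Taking $\nu=\mu_{\mathrm{train}}$ and $\mu=\mu_{\mathrm{test}}$, and combining with Step 1, the square root gives the stated bound.

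\emph{Main obstacle.} The delicate point is the reference measure. Definition~\ref{def:entropic_curvature} and Theorem~\ref{thm:lower_bound} concern $\mathrm{Ent}_\m$, with $\m$ the graph's reference measure. The theorem, however, uses $\mathrm{Ent}_{\mu_{\mathrm{train}}}$, and the displacement convexity inequality is not obviously inherited when the reference is changed to $\mu_{\mathrm{train}}$. We would first handle this as Remark~\ref{rem:entropy_finite} suggests. The plan is to prove the Talagrand inequality $\mathrm{W}_1(\hat\m,\mu)^2\le\frac{2}{\kappa_w}\mathrm{Ent}_{\hat\m}(\mu)$ for the normalized $\hat\m=\m/\m(\mathcal{V})$. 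Normalization only shifts the entropy by a constant, and this shift cancels in the convexity inequality at $\mu_0=\hat\m$. We would then apply the inequality to both $\mu_{\mathrm{train}}$ and $\mu_{\mathrm{test}}$, using the triangle inequality for $\mathrm{W}_1$.

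If that route loses the exact form, the fallback is Bobkov--Götze duality. Positive curvature yields Gaussian concentration, $\int e^{\lambda(g-\int g)}\,d\nu\le e^{\lambda^2\sigma^2\Lambda^2/(2\kappa_w)}$. The Donsker--Varadhan variational formula with base measure $\mu_{\mathrm{train}}$ then gives
\[
\lambda\Bigl(\int g\,d\mu_{\mathrm{test}}-\int g\,d\mu_{\mathrm{train}}\Bigr)\le \mathrm{Ent}_{\mu_{\mathrm{train}}}(\mu_{\mathrm{test}})+\lambda^2\sigma^2\Lambda^2/(2\kappa_w).
\]
Optimizing over $\lambda$ produces precisely $\sigma\Lambda\sqrt{2\,\mathrm{Ent}/\kappa_w}$. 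This fallback requires the sub-Gaussian estimate under $\mu_{\mathrm{train}}$ itself, and we would state that as the operative assumption.
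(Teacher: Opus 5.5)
\emph{Your plan follows the paper's proof, but the step you flag cannot be closed, because the statement fails as written.}

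Your two-step plan is the paper's proof. The appendix bounds the gap by $\sigma\Lambda\,\mathrm{W}_1(\mu_{\mathrm{test}},\mu_{\mathrm{train}})$ through an optimal coupling, silently dropping the labels (the point you flag). It proves Lemma~\ref{lemm:westrass_controled_entropy} as you sketch. It then simply asserts $\mathrm{W}_1(\mu_1,\mu_0)^2\le\frac{2}{\kappa}\mathrm{Ent}_{\mu_0}(\mu_1)$ with $\mu_0=\mu_{\mathrm{train}}$ as a ``classical'' transport--entropy inequality. You located this weak point correctly. Your normalization to $\hat{\m}$ is also genuinely needed for $\mathrm{Ent}(\mu_t)\ge 0$, which the paper skips. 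However, your proposal does not bridge the gap, and no argument can: with $\mu_{\mathrm{train}}$ as reference measure, the conclusion fails under the stated hypotheses.

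\textbf{Counterexample.} Take the octahedron $K_{2,2,2}$ with $\Lop_0$.
\begin{itemize}
\item Every $z$ has $\mathrm{S}_2(z)=\{\bar z\}$ with four midpoints. Hence $\mathrm{H}_{\Lop_0}(z,\mathrm{S}_2(z))=\sup_\alpha 4\prod_{i=1}^4\alpha_i^{1/2}=\frac14$ and $\kappa_w=4\log 2>0$.
\item Let $\mathcal{V}_{\mathrm{train}}=\{z,\bar z\}$ and $\mathcal{V}_{\mathrm{test}}=\{z\}$. These overlap, so $\mathrm{Ent}_{\mu_{\mathrm{train}}}(\mu_{\mathrm{test}})=\log 2<\infty$.
\item Let $f_\theta=d(\cdot,z)$, so $\Lambda=1$, and $\mathcal{J}(a,y)=a$, so $\sigma=1$.
\item The gap is $|0-1|=1$, while the claimed bound is $\sqrt{2\log 2/(4\log 2)}=1/\sqrt2$.
\end{itemize}
Step 1 is tight here, since $\mathrm{W}_1(\delta_z,\mu_{\mathrm{train}})=1$. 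What fails is exactly the reference-swapped inequality, which reads $1\le\frac12$.

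\textbf{Your two repairs.} Neither yields the statement.
\begin{itemize}
\item The triangle-inequality route is valid within the paper's framework, but it proves a different bound, $\sigma\Lambda\sqrt{2/\kappa_w}\,\bigl(\sqrt{\mathrm{Ent}_{\hat{\m}}(\mu_{\mathrm{train}})}+\sqrt{\mathrm{Ent}_{\hat{\m}}(\mu_{\mathrm{test}})}\bigr)$. This does not vanish at $\mu_{\mathrm{test}}=\mu_{\mathrm{train}}$, so it cannot be converted into the stated form. It does have the merit of staying finite for disjoint splits, which removes the overlap condition of Remark~\ref{rem:entropy_finite}.
\item The Bobkov--G\"otze fallback is a correct computation, but its hypothesis is the missing step in disguise. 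The sub-Gaussian estimate under $\mu_{\mathrm{train}}$ must hold for every Lipschitz $g$, since the theorem quantifies over all Lipschitz $f_\theta$ and $\mathcal{J}$. By Bobkov--G\"otze, that is equivalent to $\mathrm{W}_1(\cdot,\mu_{\mathrm{train}})^2\le\frac{2}{\kappa_w}\mathrm{Ent}_{\mu_{\mathrm{train}}}(\cdot)$. The example shows this does not follow from $\kappa_w>0$.
\end{itemize}

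The right outcome of your plan is therefore an amended theorem, not a proof of the statement as written. Three valid versions are available:
\begin{itemize}
\item the $\hat{\m}$ version from your triangle-inequality route;
\item the special case $\mu_{\mathrm{train}}=\hat{\m}$;
\item the stated bound under an explicit transport--entropy hypothesis on $\mu_{\mathrm{train}}$.
\end{itemize}
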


\begin{proof}
    See Appendix~\ref{app:proof_thm_robustness}.
\end{proof}

Theorem~\ref{thm:robustness} separates the generalization gap into three interpretable factors. The product $\sigma\Lambda$ captures the sensitivity of the loss and the model, while $\mathrm{Ent}_{\mu_{\mathrm{train}}}(\mu_{\mathrm{test}})$ measures the informational cost of shifting from the training distribution to the test distribution. The curvature term $1/\sqrt{\kappa_w}$ acts as a geometric amplifier: larger positive curvature yields a tighter certificate, making the model less sensitive to the same train--test distributional shift. Positive curvature gives the cleanest transport--entropy bound, but many sparse real-world graphs contain negatively curved regions. The next result shows that a meaningful generalization certificate still holds when curvature is negative, at the cost of a diameter-dependent factor.

\begin{theorem}[Generalization Bound under Negative Entropic Curvature; ]
\label{thm:gen_bound_negative}
Suppose $\mathcal{G}$ is finite and connected with diameter $D<\infty$, and assume its weak entropic curvature satisfies $\kappa_w\ge -K$ for some $K>0$. Let the loss be $L$-Lipschitz with respect to the predictions $f_\theta(x)$, and let $f_\theta$ be globally $\Lambda$-Lipschitz with respect to the graph distance $d$. Then,
\[
\left|
\mathcal{L}_{\mu_{\mathrm{test}}}(f_\theta)
-
\mathcal{L}_{\mu_{\mathrm{train}}}(f_\theta)
\right|
\le
L\Lambda
\sqrt{
\frac{4}{K}
\left(
e^{KD^2/8}-1
\right)
\mathrm{Ent}_{\mu_{\mathrm{train}}}(\mu_{\mathrm{test}})
}.
\]
\end{theorem}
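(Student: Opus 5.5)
The argument has two stages. First we derive a transport--entropy inequality valid under a negative lower curvature bound. Then we combine it with Kantorovich--Rubinstein duality, exactly as in the positive case (Theorem~\ref{thm:robustness}).

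\textbf{Reduction to a transport--entropy inequality.} By Theorem~\ref{thm:lower_bound}, $\kappa\ge\kappa_w\ge -K$. The map $x\mapsto \mathcal{J}(f_\theta(x),y_x)$ is $L\Lambda$-Lipschitz with respect to $d$, since $\mathcal{J}$ is $L$-Lipschitz in the prediction and $f_\theta$ is $\Lambda$-Lipschitz. This treats the label as fixed per node, as in the positive case. Kantorovich--Rubinstein duality then gives
\[
\bigl|\mathcal{L}_{\mu_{\mathrm{test}}}(f_\theta)-\mathcal{L}_{\mu_{\mathrm{train}}}(f_\theta)\bigr|\le L\Lambda\,\mathrm{W}_1(\mu_{\mathrm{train}},\mu_{\mathrm{test}}).
\]
So it suffices to prove
\[
\mathrm{W}_1(\mu_0,\mu_1)^2\le \tfrac{4}{K}\bigl(e^{KD^2/8}-1\bigr)\mathrm{Ent}_{\mu_0}(\mu_1),
\]
with $\mu_0=\mu_{\mathrm{train}}$ taken as reference measure, in the sense of Remark~\ref{rem:entropy_finite}.

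\textbf{Transport--entropy from $(-K)$-convexity.} Fix $\mu_0,\mu_1$ with $\mathrm{Ent}(\mu_1)<\infty$, and take the constant-speed $\mathrm{W}_1$-geodesic from Definition~\ref{def:entropic_curvature}. Set $W=\mathrm{W}_1(\mu_0,\mu_1)$. Convexity with $\kappa=-K$ gives
\[
\mathrm{Ent}(\mu_t)\le (1-t)\mathrm{Ent}(\mu_0)+t\,\mathrm{Ent}(\mu_1)+\tfrac{K}{2}t(1-t)W^2 .
\]
With $\mathrm{Ent}(\mu_0)=0$ and $\mathrm{Ent}\ge 0$, this alone gives no lower bound on $\mathrm{Ent}(\mu_1)$. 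The negative curvature term must be compensated by the finite diameter, and I would do this with an exponential (HWI/Talagrand-type) argument.

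Step one is to differentiate at $t=0$. This yields the slope inequality
\[
\mathrm{Ent}(\mu_1)\ge \tfrac{d}{dt}\mathrm{Ent}(\mu_t)\big|_{t=0^+}-\tfrac{K}{2}W^2 .
\]
Step two is to replace the naive bound by a Grönwall/iteration argument along the geodesic, subdividing $[0,1]$ into steps. Each step uses $W\le D$ to control the curvature error multiplicatively. Writing $\varphi(s)$ for the entropy at scale $s$, a differential inequality of the form $\varphi'\le \tfrac{K}{4}s\,\varphi+\dots$ integrates to the factor $\int_0^{D}\tfrac{s}{2}e^{Ks^2/8}\,ds=\tfrac{2}{K}(e^{KD^2/8}-1)$. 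Up to the constant $2$ this is exactly the coefficient in the statement. A sanity check: as $K\to0$ the bound becomes $\mathrm{W}_1^2\le \tfrac{D^2}{2}\mathrm{Ent}$, which is the Pinsker-type bound $\mathrm{W}_1\le D\,\mathrm{TV}\le D\sqrt{\mathrm{Ent}/2}$ on a space of diameter $D$. That consistency strongly suggests the intended route.

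An alternative route, which I would keep in reserve, is Bobkov--Götze. We need $\log\int e^{\lambda(g-\int g\,d\mu_0)}d\mu_0\le \lambda^2 C/2$ for $1$-Lipschitz $g$, with $C=\tfrac{2}{K}(e^{KD^2/8}-1)$. A Hoeffding-type bound using $\mathrm{osc}(g)\le D$, modulated by the curvature, could give this. Duality then yields $\mathrm{W}_1\le\sqrt{2C\,\mathrm{Ent}}$, which is precisely $\sqrt{\tfrac{4}{K}(e^{KD^2/8}-1)\mathrm{Ent}}$.

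\textbf{Main obstacle.} The hard step is extracting the exact constant $\tfrac{4}{K}(e^{KD^2/8}-1)$ from $(-K)$-convexity. In particular, one must justify that along $\mathrm{W}_1$-geodesics on a graph the relevant quantities are controlled by $D$. I would first attempt the Bobkov--Götze sub-Gaussian route, proving the Laplace bound by interpolating $\mu_0$ towards its exponential tilt along a geodesic and using the convexity inequality with $W\le D$. The final step is the transfer from the invariant measure to $\mu_{\mathrm{train}}$ as reference measure, which follows Remark~\ref{rem:entropy_finite}.
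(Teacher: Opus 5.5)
Your reduction to $\mathrm{W}_1$ is the same as the paper's first step. However, the step you flag as the main obstacle is never carried out, and the route you sketch for it would not work.

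Starting from $\mu_0$ with $\mathrm{Ent}(\mu_0)=0$ and $\mathrm{Ent}\ge 0$, the entropy is minimal at $t=0$. Its right derivative there is therefore nonnegative, and for interpolations smooth in $t$ it is zero, since relative entropy is quadratic near its reference measure. So your slope inequality yields only the vacuous $\mathrm{Ent}(\mu_1)\ge -\tfrac{K}{2}W^2$. The subsequent ``Gr\"onwall'' inequality $\varphi'\le\tfrac{K}{4}s\varphi+\dots$, for an undefined ``entropy at scale $s$'', is reverse-engineered from the target constant rather than derived from $(-K)$-convexity.

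There is also a reference-measure mismatch. The convexity inequality concerns $\mathrm{Ent}_{\m}$, whereas your normalisation $\mathrm{Ent}(\mu_0)=0$, $\mathrm{Ent}\ge 0$ is that of $\mathrm{Ent}_{\mu_0}$. A transport--entropy inequality relative to $\m$ does not imply one relative to $\mu_{\mathrm{train}}$, and pointing to Remark~\ref{rem:entropy_finite} does not bridge this.

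For comparison, the paper does not extract the constant from convexity either. After an HWI-type sketch (Taylor expansion at $t=1$ plus a Benamou--Brenier bound on $u'(1)$), it imports a log-Sobolev constant $\lambda\approx K/\bigl(2(e^{KD^2/8}-1)\bigr)$ for spaces with curvature $\ge -K$ and diameter $D$. It then applies Otto--Villani to get $\mathrm{W}_1^2\le\tfrac{2}{\lambda}\mathrm{Ent}$.

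The missing observation is that a lower bound $\kappa\ge -K$ with $K>0$ is a \emph{weakening}, so any transport--entropy inequality here must come from the diameter alone. Your own ``sanity check'' is already the proof. Take any reference measure $\mu_0$ on a graph of diameter $D$, in particular $\mu_0=\mu_{\mathrm{train}}$. The coupling that leaves the common mass $\mu_0\wedge\mu_1$ in place gives $\mathrm{W}_1(\mu_0,\mu_1)\le D\,\lVert\mu_0-\mu_1\rVert_{\mathrm{TV}}$, and Pinsker's inequality then gives
\[
\mathrm{W}_1(\mu_0,\mu_1)^2\;\le\;\frac{D^2}{2}\,\mathrm{Ent}_{\mu_0}(\mu_1)\;\le\;\frac{4}{K}\bigl(e^{KD^2/8}-1\bigr)\,\mathrm{Ent}_{\mu_0}(\mu_1).
\]
The last step is $e^x-1\ge x$ with $x=KD^2/8$, valid for every $K>0$. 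Equivalently, in your Bobkov--G\"otze route, plain Hoeffding already gives the variance proxy $D^2/4\le\tfrac{2}{K}(e^{KD^2/8}-1)$, with no curvature modulation.

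Combined with your Kantorovich--Rubinstein step, this proves the theorem rigorously, with $\mu_{\mathrm{train}}$ directly as the reference measure and no use of the curvature hypothesis. It also shows that the stated bound is weaker than its own $K\to 0$ limit $L\Lambda D\sqrt{\mathrm{Ent}_{\mu_{\mathrm{train}}}(\mu_{\mathrm{test}})/2}$.

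As you noted, the Lipschitz step still needs $x\mapsto\mathcal{J}(f_\theta(x),y_x)$ to be $L\Lambda$-Lipschitz in $d$. The hypotheses give this only if the label dependence is suppressed, and the paper makes the same identification.
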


\begin{proof}
    See Appendix~\ref{app:proof_thm_neg_bound}.
\end{proof}
Together, Theorems~\ref{thm:robustness} and~\ref{thm:gen_bound_negative} show that entropic curvature controls generalization in both curvature regimes. When $\kappa_w>0$, the dependence scales as $1/\sqrt{\kappa_w}$. When $\kappa_w\ge -K$, the bound remains finite but deteriorates with both the negative-curvature scale $K$ and the graph diameter $D$, reflecting the greater sensitivity of negatively curved, expansive regions to distributional shift.

\textbf{Empirical Analysis of Generalization and Curvature.} To empirically evaluate the relationship between graph geometry and learning performance, we analyze the correlation between the Average Weak Entropic Curvature ($\kappa_w$) and the Generalization Gap. Following the framework in Theorem \ref{thm:robustness}, we define the generalization gap as the absolute difference between training and testing accuracies $\Delta\mathcal{A} = \text{Acc}_{\text{train}} - \text{Acc}_{\text{test}}$. This experiment tests the hypothesis that positive curvature lower bounds effectively tighten the model's sensitivity to distributional shifts between the training and testing measures. Our pipeline utilizes a Stochastic Block Model (SBM) to generate $30$ distinct graphs, each containing 100 nodes divided into two equal communities. We systematically modulate the graph's curvature by varying the intra-community edge probability $p_{in}$ on a sliding scale from $5\%$ to $10\%$, while maintaining a fixed inter-community probability $p_{out} = 5\%$. This transitions the topology from sparse, tree-like regimes to dense, clustered configurations. We note that varying $p_{in}$ simultaneously increases both graph curvature and community signal-to-noise ratio; as a result, the observed correlation between curvature and generalization gap may be partially driven by the task becoming easier on denser graphs. This is an acknowledged confound: the experiment provides correlational evidence consistent with the theory, but should not be interpreted as a controlled causal test.  For each graph, we compute the node-wise weak entropic curvature by solving the exact optimization for the local functional $\mathrm{H}_\Lop$. The global signature is defined as the mean curvature across all vertices: $\bar{\kappa}_w = \frac{1}{|\mathcal{V}|} \sum_{z \in \mathcal{V}} \kappa_w(z)$. We use a standard 2-layer GCN \citep{kipf2017semisupervised}, and the node features are initialized as an identity matrix. The model is trained for 200 epochs using Adam optimizer to predict community membership \citep{kingma2014adam}.

As illustrated in Figure \ref{fig:sbm_generalization}, the empirical results reveal a significant negative correlation: higher average weak entropic curvature consistently corresponds to a lower generalization gap. Since the curvature values in this experiment are strictly negative (ranging from approximately $-2.5$ to $-1.75$), the evaluated graphs fall in the negative-curvature regime. The observed trend is therefore consistent with Theorem~\ref{thm:gen_bound_negative}: as $\kappa_w$ increases toward zero (i.e., $K$ decreases), the bound tightens, limiting how much the testing distribution can stray from the training configuration. The trend also aligns directionally with Theorem~\ref{thm:robustness} as the curvature approaches the positive regime, confirming a monotone geometric effect across both regimes.

%% file: Content/experiments.tex
\textbf{Adapting GNN Dynamics to Entropic Geometry.}
We evaluate $\kappa_w$ as a lightweight geometric prior for standard GNNs on six node-classification benchmarks spanning homophilic regimes (Cora, CiteSeer, PubMed~\citep{yang2016revisiting}) and heterophilic regimes (Cornell,  Wisconsin~\citep{pei2020geom}). Motivated by Theorem~\ref{thm:poincare-gnn}, we introduce \emph{E-Gate}, a node-wise modulation $h_u^{(\ell+1)}=\mathrm{UPDATE}^{(\ell)}(h_u^{(\ell)},\mathcal{R}(u)\,\mathrm{AGG}^{(\ell)}_u)$ with $\mathcal{R}(u)=\sigma(-\tau\kappa_w(u))$, where $\tau$ is a learnable temperature, and $\sigma(\cdot)$ is the Sigmoid function. The current framework is motivated by Lemma~\ref{lemm:westrass_controled_entropy}: $\sigma(-\tau\kappa_w)$ attenuates aggregation in proportion to the node's curvature-induced transport resistance, cooling positively curved  regions and amplifying negatively curved  ones. 

\input{Tables/egate_gnns}

Table~\ref{tab:gcn_results} shows that E-Gate generally improves performance across backbones and graph regimes, with the largest gains on heterophilic datasets (Cornell, Wisconsin) where standard aggregation struggles with bottlenecks; on homophilic citation graphs the improvements are smaller but mostly positive. Appendix~\ref{app:additional_experiments} reports additional  experiments for the graph classification task.

\textbf{Entropic Curvature as a Structural Encoding.}
GNN expressivity is bounded by the 1-Weisfeiler-Lehman test~\citep{feng2022powerful}, motivating structural encodings derived from spectral properties or discrete geometry. Local Curvature Profiles (LCP) inject Ollivier--Ricci curvature statistics~\citep{fessereffective}, but discrete curvatures primarily capture local clustering and may miss broader transport bottlenecks. We hypothesize that $\kappa_w$, derived from the convexity of entropy along Wasserstein geodesics, provides a more globally-aware structural prior. We evaluate ENT against Laplacian Eigenmaps (LAPE)~\citep{belkin2003laplacian}, Random Walk PE (RWPE)~\citep{eliasof2023graph}, Ego-Network encodings (EGO)~\citep{alvarez2023beyond}, and LCP~\citep{fessereffective}. Following the LCP protocol, we augment $\mathrm{X}$ with a 3-dimensional curvature signature; details in Appendix~\ref{app:lcp_ent}.

\input{Tables/lcp_ent}
Table~\ref{tab:node_classification_reduced} reports node-classification accuracy across five benchmarks. ENT achieves the best performance on 4 of 5 datasets, outperforming spectral and discrete-curvature baselines. ENT underperforms LCP on Wisconsin ($44.28$ vs.\ $47.14$, within 1 std). On heterophilic graphs Texas (+3.9\% vs.\ LCP) and Cornell (+0.6\%), where bottlenecks are frequent, the entropic prior is significantly more informative than local metrics. On homophilic graphs ENT maintains a consistent lead.

%% file: Tables/egate_gnns.tex
\begin{table*}[t]
\centering
\caption{Test accuracy (\%, mean $\pm$ std) of baseline backbones and their E-Gate variants across five node classification benchmarks. Best results between baseline and E-Gate are shown in bold.}
\label{tab:gcn_results}
\resizebox{\textwidth}{!}{%
\begin{tabular}{lcccccccccc}
\toprule
& \multicolumn{2}{c}{Cora} 
& \multicolumn{2}{c}{CiteSeer} 
& \multicolumn{2}{c}{PubMed} 
& \multicolumn{2}{c}{Cornell} 
& \multicolumn{2}{c}{Wisconsin} \\
\cmidrule(lr){2-3} \cmidrule(lr){4-5} \cmidrule(lr){6-7}
\cmidrule(lr){8-9} \cmidrule(lr){10-11}
Backbone 
& Base & E-Gate
& Base & E-Gate
& Base & E-Gate
& Base & E-Gate
& Base & E-Gate \\
\midrule
GCN
& 86.68 $\pm$ 1.07 & \textbf{88.08 $\pm$ 1.04}
& \textbf{76.23 $\pm$ 0.76} & 75.21 $\pm$ 1.41
& \textbf{87.29 $\pm$ 0.47} & 87.16 $\pm$ 0.49
& 41.35 $\pm$ 6.05 & \textbf{44.86 $\pm$ 6.53}
& 43.73 $\pm$ 4.81 & \textbf{44.51 $\pm$ 6.20} \\

SAGE
& 86.77 $\pm$ 1.78 & \textbf{87.56 $\pm$ 1.58}
& \textbf{75.11 $\pm$ 1.95} & 74.65 $\pm$ 1.36
& 88.45 $\pm$ 0.27 & \textbf{89.03 $\pm$ 0.52}
& 61.89 $\pm$ 3.51 & \textbf{67.84 $\pm$ 8.33}
& 75.10 $\pm$ 5.95 & \textbf{79.22 $\pm$ 5.35} \\

GIN
& 82.58 $\pm$ 2.08 & \textbf{83.03 $\pm$ 2.53}
& \textbf{69.55 $\pm$ 2.25} & 69.47 $\pm$ 1.67
& 84.45 $\pm$ 4.85 & \textbf{85.81 $\pm$ 0.60}
& 42.16 $\pm$ 8.39 & \textbf{49.46 $\pm$ 7.26}
& \textbf{53.33 $\pm$ 5.67} & 53.33 $\pm$ 7.58 \\

GAT
& 86.29 $\pm$ 1.73 & \textbf{86.77 $\pm$ 1.27}
& \textbf{74.79 $\pm$ 1.28} & 74.71 $\pm$ 1.58
& 86.26 $\pm$ 0.37 & \textbf{86.44 $\pm$ 0.40}
& 40.00 $\pm$ 8.09 & \textbf{46.22 $\pm$ 10.36}
& 47.84 $\pm$ 6.27 & \textbf{49.22 $\pm$ 7.15} \\
\bottomrule
\end{tabular}%
}
\end{table*}

%% file: Tables/lcp_ent.tex
\begin{table}

\centering
\caption{Structural encoding accuracy (\%, mean$\pm$std, GCN backbone). Best results are shown in bold}
\label{tab:node_classification_reduced}

\resizebox{0.8\linewidth}{!}{%
\begin{tabular}{lccccc}
\toprule
\textbf{Dataset} & \textbf{ENT (ours)} & \textbf{LCP} & \textbf{LAPE} & \textbf{RWPE} & \textbf{EGO} \\
\midrule
Cornell   & $\mathbf{42.60\pm3.54}$ & $41.96\pm4.44$ & $38.91\pm2.42$ & $38.47\pm3.53$ & $41.09\pm3.81$ \\
Wisconsin & $44.28\pm3.22$ & $\mathbf{47.14\pm3.42}$ & $42.22\pm2.95$ & $43.65\pm5.24$ & $42.70\pm2.67$ \\
PubMed    & $\mathbf{86.18\pm0.23}$ & $85.76\pm0.20$ & $85.37\pm0.18$ & $85.77\pm0.18$ & $85.71\pm0.26$ \\
Texas     & $\mathbf{46.95\pm4.48}$ & $43.04\pm1.92$ & $43.48\pm4.52$ & $43.04\pm3.52$ & $45.43\pm3.86$ \\
CiteSeer  & $\mathbf{71.86\pm0.93}$ & $71.13\pm0.78$ & $71.34\pm0.92$ & $71.63\pm1.08$ & $70.97\pm0.98$ \\
\bottomrule
\end{tabular}

}
\end{table}

%% file: Content/conclusion.tex
This work introduces entropic curvature as a global and theoretical measure to analyze GNNs. In particular, a positive entropic‐curvature lower bound yields log-Sobolev and transport–entropy inequalities that directly control information propagation in message-passing architectures. These inequalities translate into quantitative guarantees against oversmoothing, limits on distributional shift, and provable generalization. Entropic curvature offers a unifying geometric principle that enhance our theoretical understanding of GNN dynamics and yields concrete algorithmic benefits. We hope this framework will inspire further exploration of global geometric insights, not only for generalization and oversmoothing, but also for curriculum learning on evolving graphs, principled regularization of large-scale graph models, and the broader integration of optimal-transport ideas into machine learning.  As for limitations, Entropic curvature depends on the choice of reference measure $\mathfrak{m}$; learning
the measure that maximises $\kappa_w$ is a promising open direction that could clarify
how global geometry interacts with mass distribution and guide curvature-aware learning
algorithms. An interesting research direction would be to approximate the measure which maximises the Entropic Curvature. Answering this would clarify how global geometry interacts with mass distribution and could guide curvature-aware learning algorithms.

\newpage

%% file: Appendix/appendix_interpretation.tex
\textbf{Interpretation of the Local Functional.} To provide a geometric interpretation of $\mathrm{H}_{\Lop}$, it is instructive to specialize the functional to the case where the target set is a singleton, $\mathrm{W} = \{z''\}$, for some $z'' \in \mathrm{S}_2(z)$. The support of the optimization restricts to the set of strict midpoints $]z, z''[$. In this setting, the functional $\mathrm{H}_\Lop(z, \{z''\})$ simplifies to,
$$\mathrm{H}_{\Lop}(z, \{z''\}) = \Lop^2(z,z'') \sup_{\alpha} \; \prod_{z' \in ]z,z''[} \left( \frac{\alpha(z')}{\Lop(z, z')} \right)^{2 \ell(z, z', z'')}.$$

By applying the transformation $-2 \log(\cdot)$. We convert the product maximization into a summation minimization. This yields the following energy functional, 
\begin{equation}
\label{eq:sup_log}
    -2 \log \mathrm{H}_{\Lop}(z, {z''}) = \inf\limits_{\alpha} \Bigg[ -4 \sum\limits_{z' \in ]z,z''[} \ell(z, z', z'') 
    \quad \times \log \left( \frac{\alpha(z')}{\Lop(z, z')} \right) \Bigg] + \mathrm{C}_{\Lop},
\end{equation}

where $\mathrm{C}_{\Lop} = -2 \log \Lop^{(2)}(z,z'')$ is a constant independent of $\alpha$ (here $\Lop^{(2)}(z,z'')$ denotes the 2-step transition weight). The factor of $-4$ arises because $-2\log(\cdot)$ applied to $(\alpha(z')/\Lop(z,z'))^{2\ell}$ yields $-2 \cdot 2\ell \cdot \log(\alpha(z')/\Lop(z,z')) = -4\ell\log(\alpha(z')/\Lop(z,z'))$. Recall that $\sum_{z'}\ell (z,z',z'')=1$ and $\sum_{z'}\alpha(z')=1$. By introducing the entropy of the geometric weights $\ell(z, \cdot, z'')$ into the sum, the equation \eqref{eq:sup_log} can be rewritten directly in terms of the Kullback–Leibler divergence $\mathcal{KL}(\cdot \parallel \cdot)$,
\begin{equation*}
\label{eq:KL_div}
    \inf_{\alpha} \left[ 4 \sum_{z' \in ]z,z''[} \ell(z, z', z'') \log \left( \frac{\ell(z, z', z'')}{\alpha(z')} \right) \right] + \mathrm{C}'_L 
\quad = 4 \inf_{\alpha} \mathcal{KL}\Big( \ell(z, \cdot, z'') \parallel \alpha(\cdot) \Big) + \mathrm{C}'_L,
\end{equation*}

where $\mathrm{C}'_L$ is a constant independent from $\alpha(\cdot)$. Here, we use the property that both the path weights $\ell$ and the transport plan $\alpha$ sum to unity over the midpoints. This derivation reveals that the weak entropic curvature locally penalizes the divergence between the geometry of the graph (encoded by the relative path weights $\ell$) and the optimal transport plan ($\alpha$).
When we return to the full definition where $\mathrm{W}=\mathrm{S}_2(z)$, the functional $\mathrm{H}_{\Lop}(z, \mathrm{S}_2(z))$ acts as a weighted regularization of these pairwise divergences over all possible targets $z''$.

\paragraph{Geometric Characterization via the Motzkin–Straus Theorem.} 

To provide more geometric intuition for the local functional $\mathrm{H}_\Lop$, we establish a link between local graph density and its optimization landscape. The key tool for this characterization is the \text{Motzkin--Straus Theorem} \citep{motzkin1965maxima}, which relates the clique number of a graph to a quadratic optimization problem over the probability simplex. In their work, \citep{motzkin1965maxima}, established this elegant link between the size of the maximum clique, defined as the cardinality of the maximum complete subgraph $g(\mathcal{G})$, and the global maxima of a quadratic optimization problem on the standard simplex. By relating these global maxima to the functional $\mathrm{H}_{\Lop_0}(z, \mathrm{S}_2(z))$, we can interpret the graph's local geometry in a useful way that captures the branching of the space and the local expansion or contraction of the metric structure.

Consider a vertex $u \in \mathcal{V}$ where the local geometry satisfies a specific regularity condition: every node in the 2-sphere $\mathrm{S}_2(u)$ shares exactly two midpoints with the central node $u$, as visualized in Figure \ref{fig:positiveEntropy}. Under the $\Lop_0$ generator and uniform measure, the weak entropic curvature directly reflects the density of the local neighborhood through the following structural characterization,

\begin{theorem}[Motzkin–Straus Type Identity; Proof in Appendix \ref{app:proof_Strauss}] \label{thm:Motzkin-Straus}
Let $u$ be a node in $\mathcal{G}$ and assume that $\mathrm{B}_2(u)$ satisfies the condition that any $v \in \mathrm{S}_2(u)$ shares exactly two midpoints with $u$. Then,
\begin{equation}
    \mathrm{H}_{\Lop_0}(z, \mathrm{S}_2(u)) = 1 - \frac{1}{g(\mathrm{B}_2(u))}. 
\end{equation}
\end{theorem}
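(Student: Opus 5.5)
The plan is to compute $\mathrm{H}_{\Lop_0}(u,\mathrm{S}_2(u))$ explicitly under the two-midpoint hypothesis, recognise it as the Lagrangian of an auxiliary graph, and then invoke the Motzkin--Straus theorem \citep{motzkin1965maxima}. That theorem states that for a graph $\Gamma$ with clique number $\omega(\Gamma)$,
\[
\max_{x\in\Delta}\sum_{\{i,j\}\in E(\Gamma)} 2x_ix_j = 1-\frac{1}{\omega(\Gamma)} .
\]

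\textbf{Step 1 (evaluate the generator).} Under $(\m_0,\Lop_0)$ every edge weight equals $1$. For $v\in\mathrm{S}_2(u)$ the two-step weight $\Lop_0^2(u,v)$ counts the common neighbours of $u$ and $v$, which are exactly the midpoints. By hypothesis this set is $]u,v[=\{a_v,b_v\}$, so $\Lop_0^2(u,v)=2$. Consequently $\ell(u,a_v,v)=\ell(u,b_v,v)=1/2$, and each factor $\bigl(\alpha(z')/\Lop_0(u,z')\bigr)^{2\ell}$ reduces to $\alpha(z')$. Substituting into the definition of $\mathrm{H}_\Lop$ gives
\[
\mathrm{H}_{\Lop_0}(u,\mathrm{S}_2(u))=\sup_{\alpha\in\Delta}\sum_{v\in\mathrm{S}_2(u)} 2\,\alpha(a_v)\alpha(b_v),
\]
where $\alpha$ ranges over the simplex on $]u,\mathrm{S}_2(u)[\subseteq \mathrm{S}_1(u)$. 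This is a quadratic form on the simplex.

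\textbf{Step 2 (auxiliary graph).} Define the midpoint graph $\Gamma_u$ on $]u,\mathrm{S}_2(u)[$, with $a\sim b$ iff $\{a,b\}=]u,v[$ for some $v\in\mathrm{S}_2(u)$. If distinct $v$ always induce distinct midpoint pairs, the sum above is exactly $\sum_{\{a,b\}\in E(\Gamma_u)}2\alpha(a)\alpha(b)$. Motzkin--Straus then yields $\mathrm{H}_{\Lop_0}=1-1/\omega(\Gamma_u)$, with the maximiser uniform on a maximum clique. I will treat the distinct-pairs property as part of the regularity assumption, in the spirit of Figure~\ref{fig:positiveEntropy}. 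If a pair is shared by several $v$, the edge carries an integer multiplicity, and the bound becomes a weighted Motzkin--Straus inequality, so this case must be excluded or stated separately.

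\textbf{Step 3 (identify the clique number).} The remaining task is to show $\omega(\Gamma_u)=g(\mathrm{B}_2(u))$, the quantity written in the statement. I expect this to be the main obstacle, because $\Gamma_u$ is not literally a subgraph of $\mathrm{B}_2(u)$: its edges encode ``$a$ and $b$ share a distance-two partner $v$'' rather than adjacency. My first attempt is to use the two-midpoint structure to build an explicit clique-preserving correspondence between $\Gamma_u$ and the clique structure of the ball. Concretely, I would set up a bijection between cliques of $\Gamma_u$ and complete configurations in $\mathrm{B}_2(u)$, with each edge of $\Gamma_u$ realised by a 4-cycle $u\,a\,v\,b$, and check maximality in both directions. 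If no such correspondence holds in general, I would state the theorem with $g$ interpreted as the clique number of the midpoint graph $\Gamma_u$ (equivalently, the 2-sphere adjacency structure seen from $u$). In that reading the result follows directly from Steps 1–2.
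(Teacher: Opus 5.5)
Your Steps 1--2 are the paper's proof. The paper also computes $\Lop_0^2(u,v)=2$ and $2\ell=1$, rewrites $\mathrm{H}_{\Lop_0}(u,\mathrm{S}_2(u))$ as $2\max_\alpha\sum_{\{i,j\}\in\mathcal{E}_u}\alpha_i\alpha_j$ over the auxiliary midpoint graph $\mathcal{E}_u$ (your $\Gamma_u$), and applies Motzkin--Straus. For your Step 3 the paper gives no argument; it only remarks that $\omega(\mathcal{E}_u)$ ``is related to'' $g(\mathrm{B}_2(u))$.

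The correspondence you plan to build in Step 3 does not exist, because the identity fails in both directions:
\begin{itemize}
\item \textbf{Cube $Q_3$ with $u=000$.} Each $v\in\mathrm{S}_2(u)$ has exactly two midpoints, and the three midpoint pairs are distinct. So $\Gamma_u$ is a triangle and $\mathrm{H}_{\Lop_0}(u,\mathrm{S}_2(u))=2/3$. But $\mathrm{B}_2(u)$ is bipartite, so $g=2$ and the stated formula gives $1/2$.
\item \textbf{Diamond (edges $ua,ub,ab,av,bv$).} Here $\Gamma_u$ is a single edge and $\mathrm{H}=1/2$. But $\{u,a,b\}$ is a triangle in $\mathrm{B}_2(u)$, so the formula gives $2/3$.
\end{itemize}
The reason is that an edge $ab$ of $\Gamma_u$ is witnessed only by a $4$-cycle $u\,a\,v\,b$, which need not contain the edge $ab$. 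Conversely, the actual adjacencies inside $\mathrm{B}_2(u)$, which are what $g(\mathrm{B}_2(u))$ counts, are invisible to $\Gamma_u$. So the theorem as stated is false. Your fallback is not a weakening but the only correct form: with $g(\mathrm{B}_2(u))$ replaced by $\omega(\Gamma_u)$, your Steps 1--2 are a complete proof.

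Keep your distinct-pairs caveat as an explicit hypothesis. It is genuinely needed, and the paper's proof silently drops it: its identity $\sum_{v\in\mathrm{S}_2(u)}\prod_{z'\in]u,v[}\alpha(z')=\sum_{\{i,j\}\in\mathcal{E}_u}\alpha_i\alpha_j$ treats $\mathcal{E}_u$ as a simple graph. Take $K_{2,3}$ with parts $\{a,b\}$ and $\{u,v_1,v_2\}$. Both $v_1,v_2\in\mathrm{S}_2(u)$ have midpoint pair $\{a,b\}$, so $\mathrm{H}_{\Lop_0}(u,\mathrm{S}_2(u))=\sup_\alpha 4\alpha(a)\alpha(b)=1$. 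Under either reading, $1-1/\omega(\Gamma_u)=1-1/g(\mathrm{B}_2(u))=1/2$. (Also, the $z$ in the displayed identity should read $u$.)
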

This identity reveals that the magnitude of the local functional $\mathrm{H}_{\Lop_0}$ serves as a signature of local redundancy and clustering. As the maximum clique size within the 2-hop neighborhood increases, the optimization value reflects a geometry that supports highly efficient, isotropic diffusion. Geometrically, these high-density regions act as \emph{local traps} or dense communities where probability mass spreads uniformly. Conversely, as the clique number decreases, approaching tree-like regime, the functional value shifts, signaling the presence of structural bottlenecks where information flow is constrained to fewer, more fragile paths.

%% file: Appendix/Entropic_curvature_some_calculations.tex
\subsection{Bridging Local Computation and Global Topology}
\label{app:local_global_bridge}

A key advantage of our framework is its ability to control global graph properties through local optimization. While Entropic Curvature is defined by the convexity of entropy along global Wasserstein geodesics, we derive a tractable local proxy, $\kappa_w$, computed strictly within 2-hop neighborhoods. This proxy acts as a lower bound for the global curvature. Consequently, by locally rewiring edges to maximize $\kappa_w$, we rigorously improve the global spectral gap and alleviate the bottleneck phenomena without incurring the prohibitive cost of global spectral analysis.

\subsection{Canonical Motifs and Entropic Curvature Regimes}

Figure~\ref{fig:motifs} visualizes the local weak entropic curvature $\kappa_w$ across several canonical graph motifs. Dense or highly redundant neighborhoods exhibit larger curvature because two-hop mass transport can be spread across multiple intermediate nodes. In contrast, stars, trees, and bottleneck-like structures have lower or negative curvature, reflecting the limited number of available geodesic midpoints and the resulting concentration of transport mass. These examples provide concrete intuition for the role of $\kappa_w$ as a proxy for local transport redundancy.

\begin{figure}[h]
  \centering
  \includegraphics[width=\linewidth]{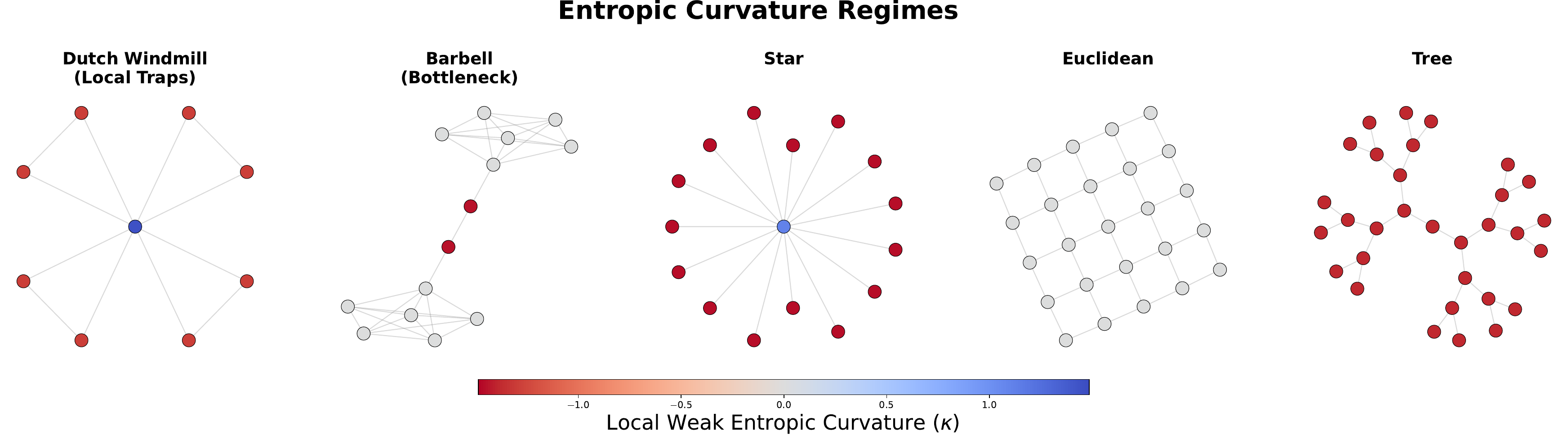}
  \vspace{-1.5em}
  \caption{Local weak entropic curvature across various graph motifs.}
  \label{fig:motifs}
\end{figure}

\subsection{Theoretical Calculation of Some Examples of Weak Entropic Curvature}
\begin{figure*}[h]
    \centering
    \includegraphics[width=0.9\textwidth]{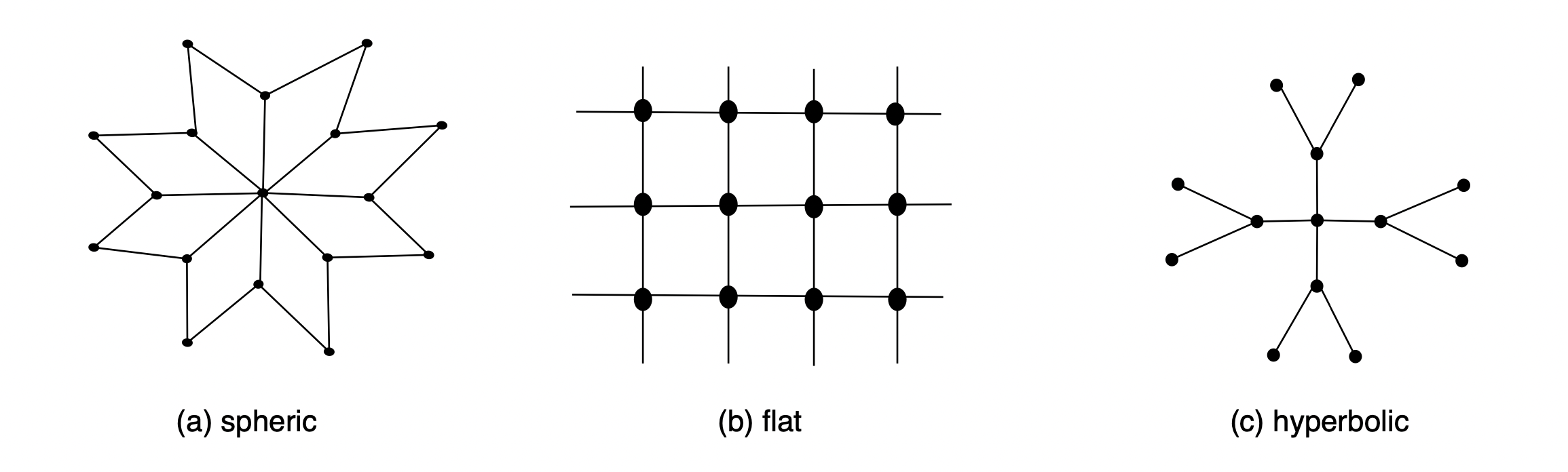}
    \caption{\textbf{Characterization of discrete geometric regimes via Weak Entropic Curvature ($\kappa_w$) under the uniform generator $L_0$.}  (a) \textit{Spherical ($\kappa_w > 0$):} High local clustering results in positive curvature, satisfying the lower bound $\kappa_w(z) \ge -2 \log(1 - \frac{1}{\deg(z)})$.  (b) \textit{Flat ($\kappa_w = 0$):} The Euclidean grid exhibits zero curvature, corresponding to a local functional value $H_{L_0}(z, S_2(z)) = 1$.  (c) \textit{Hyperbolic ($\kappa_w < 0$):} Tree-like structures characteristic of exponential expansion yield strictly negative curvature, specifically $\kappa_w(z) = -2 \log 2 \approx -1.386$.}
\label{fig:curvature_regimes}
    \label{fig:toy_graphs}
\end{figure*}

In this appendix, we provide a theoretical computation  of the weak entropic curvature in some specific configurations. Throughout, we work with the $\Lop_0$ generator (see \eqref{ex:uniform}).

 \textbf{Spherical configuration}: Case (a) in figure \ref{fig:toy_graphs}.

Let $z$ denote the center. We consider
\begin{equation}\label{eq_HL0}
    \begin{aligned}
 \mathrm{H}_{\Lop_0}(z, \mathrm{S}_2(z))  := \sup_{\alpha} \left\{ 
\sum_{z'' \in \mathrm{S}_2(z)} \#]z,z''[ \prod_{z' \in ]z, z''[} 
\left( \alpha(z') \right)^{\frac{2}{\#]z,z''[}}
\right\}
\end{aligned}
\end{equation}

such that $\sum_{z' \in \mathrm{S}_1(z)} \alpha(z')=1$.
In this geometry we have $\#]z,z''[ = 2$ for every $z'' \in \mathrm{S}_2(z)$. 
Hence,
$\sum_{z'' \in \mathrm{S}_2(z)} \#]z,z''[ \prod_{z' \in ]z, z''[} 
\left( \alpha(z') \right)^{\frac{2}{\#]z,z''[}} \leq (\sum_{z'\in \mathrm{S}_1(z)}\alpha(z'))^2-\sum_{z'\in \mathrm{S}_1(z)} \alpha(z')^2\leq \big( 1-\frac{1}{deg(z)} \big)$.

We therefore obtain in this case the following lower bound for the weak entropic curvature,
$\kappa_{w}(z) \ge -2 \log\Big(1 - \tfrac{1}{\deg(z)}\Big).$

Observe that this quantity becomes positive once $\deg(z)$ is sufficiently large ($deg(z) \geq 2$).

\textbf{Flat configuration:} Case (b) in figure \ref{fig:toy_graphs}.

Let $z$ be an arbitrary node. In this configuration, the quantity $\#]z,z''[$ takes values either $1$ or $2$.
Enumerate the vertices of $\mathrm{S}_1(z)$ as $z_1,z_2,\dots ,z_N$, and set $z_{N+1}=z_1$ for convenience.

Repeating the same computation as before, the expression inside the braces in \eqref{eq_HL0} becomes
$$
\sum_{i}\alpha(z_i)^2 + 2\sum_{i}\alpha(z_i)\alpha(z_{i+1})
\le \left(\sum_{i}\alpha(z_i)\right)^2 = 1.
$$
Moreover, choosing $\alpha(z_i)=1$ for some fixed $i$ and $\alpha(z_j)=0$ for every $j\neq i$, gives
$$
\sum_{i}\alpha(z_i)^2 + 2\sum_{i}\alpha(z_i)\alpha(z_{i+1}) = 1.
$$

Hence, in this case we obtain $\mathrm{H}_{\Lop_0}(z,\mathrm{S}_2(z)) = 1$, which implies that $\kappa_w(z) = 0$.

\textbf{Hyperbolic Configuration }: Case (c) in figure \ref{fig:toy_graphs}.

Let $z$ be the center of the figure. Repeating the same calculation we get the expression inside the braces in \eqref{eq_HL0} becomes $2\sum_{z' \in \mathrm{S}_{1}(z)}\alpha(z')^2$. The supremum of this expression under the condition $\sum_{z' \in \mathrm{S}_1(z)} \alpha(z')=1$ is $2$ which implies $\kappa_w(z)= -2\log(2)<0$.

%% file: Appendix/curvature_based_rewiring.tex
\subsection{Motivation}

The spectral gap of the graph Laplacian, denoted as $\lambda_2$ (also known as the algebraic connectivity, is a fundamental metric in spectral graph theory \citep{chung1997spectral,friedland1999spectra}. It governs the rate of convergence of diffusion processes, the robustness of the network to bisection, and the expansion properties of the graph. In many real-world applications, such as synchronization in power grids or information flow in communication networks, maximizing $\lambda_2$ is desirable \citep{pecora1998master}.

Traditional edge addition strategies often rely on global metrics or random heuristics. However, recent advances in discrete geometry suggest that graph curvature, specifically \emph{bottleneck} regions characterized by negative curvature, dictates structural vulnerabilities \citep{topping2022understanding}. This experiment aims to demonstrate that a curvature-based rewiring strategy, i.e., targeting nodes with the lowest (most negative) curvature to be connected to the most distant nodes in the network, provides a more efficient path to spectral gap enhancement than traditional discrete curvatures across various synthetic graph topologies.

\subsection{Experimental Setup}

We evaluate the performance of our proposed Weak Entropic Curvature against two established discrete Ricci curvatures: Ollivier-Ricci Curvature (ORC) and Forman-Ricci Curvature (FRC). The experimental setup is as follows,

\begin{enumerate}
    \item \textbf{Graph Generation:} We initialize six distinct random graph models to ensure topological diversity,

        \begin{itemize}
            \item \textbf{Cayley:} Dihedral group-based graphs representing highly symmetric structures \citep{babai1979spectra}.

            \item \textbf{SBM:} Stochastic Block Models representing community-structured networks \citep{holland1983stochastic}.

            \item \textbf{Small-World:} Watts-Strogatz graphs characterized by high clustering and short path lengths \citep{wolfe2024effect}.

            \item \textbf{RGG:} Random Geometric Graphs where connectivity is distance-dependent \citep{penrose2003random}.

            \item \textbf{Barabási-Albert:} Scale-free networks with preferential attachment \citep{albert2002statistical}.

            \item \textbf{Config:} Configuration model graphs with fixed degree sequences \citep{fosdick2018configuring}.
        \end{itemize}

    \item \textbf{Iterative Rewiring:} For each model, we perform $10$ iterative edge additions. In each step,
    
        \begin{itemize}
            \item The curvature of all nodes is computed using the three respective metrics. For our Weak Entropic Curvature, $\kappa_w(z)$ is computed directly at the node level via Algorithm~\ref{alg:weak-curv-exact}. For Ollivier-Ricci Curvature (ORC) and Forman-Ricci Curvature (FRC), which are fundamentally defined as \emph{edge-level} quantities, node-level curvature is obtained by aggregating over incident edges using the \emph{minimum} function: $\kappa_{\mathrm{ORC}}(u) = \min_{v \sim u} \kappa_{\mathrm{ORC}}(u,v)$ and $\kappa_{\mathrm{FRC}}(u) = \min_{v \sim u} \kappa_{\mathrm{FRC}}(u,v)$. Using the minimum identifies the most bottlenecked incident edge, consistent with the rewiring objective of targeting structural vulnerabilities.
            \item The node $u$ with the minimum curvature value  is selected.
            \item An edge is added between $u$ and the node $v$ that is furthest from it in terms of shortest-path distance.
        \end{itemize}
    \item \textbf{Statistical Rigor:} Each experiment is repeated over 10 different random seeds. We report the mean change in spectral gap, $\Delta \lambda_2 = \lambda_2^{(t)} - \lambda_2^{(0)}$, where $t$ is the number of added edges. Shaded regions in the figures represent the standard deviation across seeds.
\end{enumerate}

\subsection{Results and Discussion}

The results of the edge addition experiment are illustrated in Figure~\ref{fig:rewiring} (main paper). Across all six graph topologies, the Entropic Curvature (Ours) strategy consistently outperforms both Ollivier and Forman Ricci curvatures in maximizing the spectral gap.

\begin{itemize}
    \item \textbf{Symmetry and Structure (Cayley \& SBM):} In the Cayley and SBM models, Entropic curvature shows a significantly steeper trajectory. In the SBM case specifically, Entropic curvature bridges communities more effectively, reaching a $\Delta \lambda_2 \approx 0.6$ after 10 edges, while Forman and Ollivier remain below $0.4$.

    \item \textbf{Expansion in Scale-Free and Small-World Graphs:} For Barabási-Albert and Small-World graphs, our metric demonstrates superior sensitivity to the bottlenecks that limit algebraic connectivity. While Forman and Ollivier often track closely together, Entropic curvature identifies superior edges that lead to faster global integration.

    \item \textbf{Robustness (RGG \& Config):} Even in Random Geometric Graphs, where spatial constraints often limit rewiring efficacy, our method maintains a lead. The Entropic curvature's ability to optimize a local distribution of weights allows it to identify critical bridge nodes that coarser metrics like Forman (based primarily on degrees) or Ollivier (based on 1-hop neighborhoods) may overlook.
\end{itemize}


\subsection{Midpoint-Completion Rewiring (MCR) Algorithm}

The Midpoint-Completion Rewiring strategy is motivated by the geometric interpretation of weak entropic curvature: $\kappa_w(z)$ measures how efficiently probability mass can be displaced from $z$ to its 2-sphere $\mathrm{S}_2(z)$ via two-hop paths. A bottleneck node, one with highly negative $\kappa_w(z)$, has few midpoints connecting it to its 2-hop neighbours, forcing probability mass through a small number of fragile paths. Adding edges that create new two-hop paths enriches the local transport geometry, increases $|\mathrm{M}_\mathcal{G}(z,w)|$, and thereby raises curvature, as certified by Theorem~\ref{thm:rewiring}.

\paragraph{Algorithm.}
The MCR procedure is summarised in Algorithm~\ref{alg:mcr}. At each step, we select the node with the most negative $\kappa_w$ value, identify the two-hop target $w^\star$ with the fewest existing midpoints (the most bottlenecked two-hop connection), and then add the edge $(z^\star, w^\star)$ directly. This choice is certified by Theorem~\ref{thm:rewiring}: adding an edge that creates a new midpoint for $w^\star \in \mathrm{S}_2(z^\star)$ strictly increases $|\mathrm{M}_\mathcal{G}(z^\star, w^\star)|$ and therefore non-decreasingly updates $\kappa_w(z^\star)$.

In the rewiring experiments of Appendix~\ref{app:curvature_based_rewiring}, we additionally explore a \emph{spectral variant} of MCR in which the new edge is drawn to the most geographically distant vertex $v^\star$ (identified by BFS) rather than directly to $w^\star$. This spectral variant is a heuristic motivated by the observation that bridging topologically distant nodes tends to most rapidly increase $\lambda_2$~\citep{spielman2010algorithms}. It does not in general satisfy the premise of Theorem~\ref{thm:rewiring} (since $v^\star$ may be far from $\mathrm{S}_2(z^\star)$), but is empirically effective for spectral gap maximization and is clearly labelled as a spectral heuristic in the experiments.

\begin{algorithm}[t]
\caption{Midpoint-Completion Rewiring (MCR)}\label{alg:mcr}
\KwIn{Graph $\mathcal{G}=(\mathcal{V},\mathcal{E})$, generator $\mathtt{L}$, edge budget $T \in \mathbb{N}$}
\KwOut{Rewired graph $\mathcal{G}_T$}
\For{$t = 1, \ldots, T$}{
    Compute $\kappa_w(z)$ for all $z \in \mathcal{V}$ \tcp*{Algorithm~\ref{alg:weak-curv-exact}}
    $z^\star \leftarrow \arg\min_{z \in \mathcal{V}}\; \kappa_w(z)$ \tcp*{Most negative bottleneck node}
    $w^\star \leftarrow \arg\min_{w \in \mathrm{S}_2(z^\star)}\; |\mathrm{M}_{\mathcal{G}}(z^\star, w)|$ \tcp*{Two-hop target with fewest midpoints}
    $m^\star \leftarrow \arg\min_{m \in \mathrm{S}_1(z^\star) \setminus \mathrm{S}_1(w^\star)} d(m, w^\star)$ \tcp*{Best new midpoint candidate}
    $\mathcal{E} \leftarrow \mathcal{E} \cup \{(m^\star, w^\star)\}$;\quad update $\mathcal{G}$, recompute $\mathrm{S}_2(z^\star)$\;
}
\Return $\mathcal{G}_T$\;
\end{algorithm}

\paragraph{Connection to Theorem~\ref{thm:rewiring}.}
Each edge addition in Algorithm~\ref{alg:mcr} is guaranteed by Theorem~\ref{thm:rewiring} to be non-decreasing in $\kappa_w(z^\star)$: by adding the edge $(m^\star, w^\star)$, we introduce $m^\star$ as a new shared midpoint for the pair $(z^\star, w^\star)$, strictly increasing $|\mathrm{M}_\mathcal{G}(z^\star, w^\star)|$ and thereby raising the pairwise curvature contribution. By Theorem~\ref{thm:rewiring}, this can only improve or maintain $\kappa_w(z^\star)$. Moreover, each such step tightens the generalization bound (Theorem~\ref{thm:robustness}) when the graph-level curvature lower bound increases.

\paragraph{Why target the least-supported two-hop connection?}
The choice $w^\star = \arg\min_{w \in \mathrm{S}_2(z^\star)} |\mathrm{M}_\mathcal{G}(z^\star, w)|$ selects the two-hop neighbor of the bottleneck node whose transport is most concentrated on a single fragile midpoint. Adding a new midpoint for this pair provides the largest marginal improvement in $H_{L_0}(z^\star, \mathrm{S}_2(z^\star))$ and thus the largest curvature gain, since the functional is most sensitive to changes in terms with the smallest $k_w$.

\paragraph{Complexity.}
Each iteration of Algorithm~\ref{alg:mcr} requires: (i) computing $\kappa_w$ for all nodes, $O(n \cdot k_{\mathrm{iter}} \cdot d_{\max}^2)$ (see Appendix~\ref{app:complexity}); (ii) finding $z^\star$, $O(n)$; (iii) finding $w^\star$ by scanning $\mathrm{S}_2(z^\star)$, $O(d_{\max}^2)$; (iv) finding the new midpoint candidate $m^\star$, $O(d_{\max})$. The dominant cost per iteration is therefore $O(n \cdot k_{\mathrm{iter}} \cdot d_{\max}^2)$.

%% file: Appendix/proof_Strauss.tex
\subsection{Geometric Interpretation via the Motzkin-Straus Theorem}

We leverage the Motzkin-Straus Theorem  to link local graph density to the optimization landscape of $\mathrm{H}_\Lop$. Consider a node $u$ where every 2-hop neighbor shares exactly two midpoints with $u$, as illustrated in Figure \ref{fig:positiveEntropy}.

\begin{figure}[h!]
    \centering
    \includegraphics[width=0.24\textwidth]{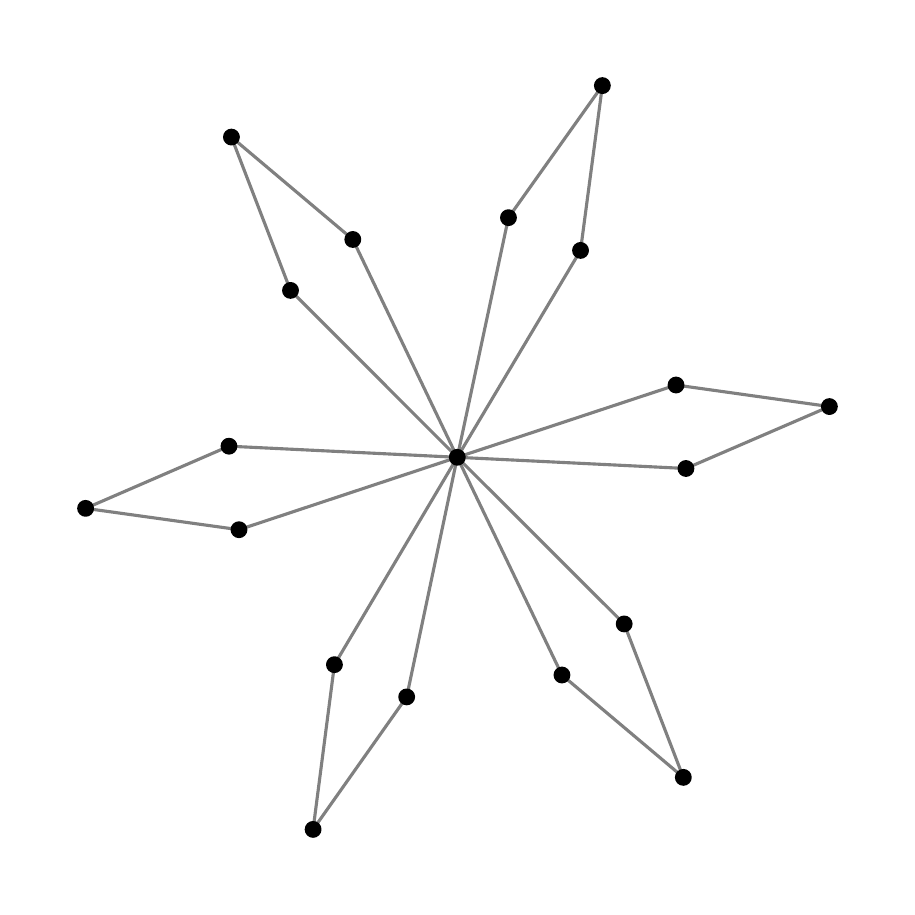}
    \caption{A positively curved motif where the entropic functional serves as a signature of local density and clustering. }
    \label{fig:positiveEntropy}
\end{figure}
\subsection{Proof of Theorem \ref{thm:Motzkin-Straus}}
In our set-up of Theorem \ref{thm:Motzkin-Straus}, we have, by definition of $\mathrm{H}_{\Lop_0}(u, \mathrm{S}_2(u))$,

$$
 \mathrm{H}_{\Lop_0}(u, \mathrm{S}_2(u)) =   2 \max_{\alpha} \left\{ \sum_{z'\in \mathrm{S}_2(u)} \prod_{z''\in ]u,z'[}\alpha(z'')   \right\}=2 \max_{\alpha} \left\{ \sum_{\{i,j\}\in \mathcal{E}_u} \alpha_i \alpha_j \right\}
$$
such that $\{z', w'\}$ is an edge of $\mathcal{E}_u$ if $\{z', w'\} = ]u, z'[$ for some $z' \in \mathrm{S}_2(u)$.
Note that the clique in the auxiliary midpoint graph $\mathcal{E}_u$ (whose vertices are midpoints in $\mathrm{S}_1(u)$) corresponds to a set of midpoints that pairwise share a common 2-hop target; its clique number $\omega(\mathcal{E}_u)$ captures the maximum local clustering in the 2-hop neighborhood of $u$, and is related to $g(\mathrm{B}_2(u))$ under the structural regularity assumption of the theorem.

Thus, Theorem \ref{thm:Motzkin-Straus} is a direct consequence of the following theorem.

\begin{theorem}[Motzkin-Straus Theorem \cite{motzkin1965maxima}]
    Let $\mathcal{G} = (\mathcal{V}, \mathcal{E})$ be a simple undirected graph with clique number $\omega(\mathcal{G})$.
Then the following relation holds:
\[
2 \max_{\alpha} \left\{ \sum_{\{i,j\}\in \mathcal{E}} \alpha_i \alpha_j \right\}
= 1 - \frac{1}{{\omega(\mathcal{G})}}.
\]
\end{theorem}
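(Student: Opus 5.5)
We prove the Motzkin--Straus identity by establishing two matching inequalities for $f(\alpha):=\sum_{\{i,j\}\in\mathcal{E}}\alpha_i\alpha_j$ over the simplex $\Delta=\{\alpha\ge 0,\ \sum_i\alpha_i=1\}$. Since $f$ is continuous and $\Delta$ is compact, the maximum is attained. Write $\omega=\omega(\mathcal{G})$. We will show $2\max_\Delta f = 1-1/\omega$.

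\textbf{Lower bound.} Let $C$ be a maximum clique, $|C|=\omega$, and set $\alpha_i=1/\omega$ for $i\in C$ and $\alpha_i=0$ otherwise. Every pair in $C$ is an edge, so
\[
2f(\alpha)=2\binom{\omega}{2}\frac{1}{\omega^2}=1-\frac{1}{\omega}.
\]

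\textbf{Upper bound.} This is the main step. Among all maximizers of $f$ on $\Delta$, choose $\alpha$ with the smallest support $S=\{i:\alpha_i>0\}$. We claim $S$ is a clique.

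Suppose instead that $i,j\in S$ are distinct and non-adjacent. Then $f$ has no $\alpha_i\alpha_j$ term, so along the direction $e_i-e_j$ it is affine. Concretely, $f(\alpha+t(e_i-e_j))=f(\alpha)+t\,(\partial_i f-\partial_j f)$, where $\partial_k f=\sum_{l\sim k}\alpha_l$. This holds for all $t$ keeping the point in $\Delta$, i.e. $t\in[-\alpha_i,\alpha_j]$. An interior maximizer of an affine function on this segment forces the slope to vanish, so $f$ is constant on the whole segment. Taking $t=\alpha_j$ gives another maximizer whose support is strictly smaller (the $j$-th coordinate becomes $0$), contradicting minimality. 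Hence $S$ is a clique and $|S|\le\omega$.

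On a clique, $2f(\alpha)=\bigl(\sum_{i\in S}\alpha_i\bigr)^2-\sum_{i\in S}\alpha_i^2=1-\sum_{i\in S}\alpha_i^2$. By Cauchy--Schwarz, $\sum_{i\in S}\alpha_i^2\ge 1/|S|\ge 1/\omega$. Therefore $2\max_\Delta f\le 1-1/\omega$, and combined with the lower bound this gives equality.

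\textbf{Remaining step for Theorem~\ref{thm:Motzkin-Straus}.} To transfer this to $\mathrm{H}_{\Lop_0}$, apply the identity to the auxiliary midpoint graph $\mathcal{E}_u$ on $\mathrm{S}_1(u)$. Under the two-midpoint hypothesis, each factor has $\ell=1/2$ and the prefactor is $\#]u,z'[\,=2$, which matches the displayed reduction to $2\max f$.

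Two points need checking here:
\begin{itemize}
\item \emph{Distinct targets with the same midpoint pair.} If two targets $z'\in\mathrm{S}_2(u)$ have the same midpoint pair, the sum counts that edge with multiplicity. This must be ruled out, or $\mathcal{E}_u$ treated as simple by assumption.
\item \emph{Matching the clique numbers.} The identification $\omega(\mathcal{E}_u)=g(\mathrm{B}_2(u))$ is asserted by the regularity hypothesis; we take it from the statement's setup rather than prove it here.
\end{itemize}
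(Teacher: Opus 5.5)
Your proof is correct, and it does more than the paper does. The paper never proves this statement: it quotes it from \cite{motzkin1965maxima} and uses it as a black box in the proof of Theorem~\ref{thm:Motzkin-Straus}. Your argument is the standard weight-shifting proof, and every step checks out:
\begin{itemize}
\item Choose a maximizer with minimal support.
\item For non-adjacent $i,j$ in the support, $f$ is affine along $e_i-e_j$, since there is no $\alpha_i\alpha_j$ term and no squared terms.
\item An interior maximum on the segment $t\in[-\alpha_i,\alpha_j]$ therefore forces $f$ to be constant there, and taking $t=\alpha_j$ gives a maximizer with strictly smaller support.
\item On the resulting clique, $2f=1-\sum_{i\in S}\alpha_i^2\le 1-1/|S|$.
\end{itemize}
The edgeless case $\omega=1$ is handled as well. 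Your only tacit hypothesis is that $\mathcal{G}$ is finite, which you need for compactness of the simplex and a finite $\omega$. It holds in the paper's use, since $\mathcal{E}_u$ lives on $\mathrm{S}_1(u)$ of a locally finite graph.

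Both caveats you raise about the transfer to Theorem~\ref{thm:Motzkin-Straus} are genuine; each fails in a small example.
\begin{itemize}
\item \emph{Shared midpoint pairs.} Take $K_{2,3}$ with $u$ in the part of size three and $\{a,b\}$ the other part. Both vertices of $\mathrm{S}_2(u)$ have midpoint set $\{a,b\}$, so that edge is counted twice. Then $\mathrm{H}_{\Lop_0}(u,\mathrm{S}_2(u))=\sup 4\alpha_a\alpha_b=1$, not $1/2$.
\item \emph{Clique number of $\mathrm{B}_2(u)$.} Take the cube $Q_3$ with $u=000$. The midpoint pairs are distinct and $\mathcal{E}_u$ is a triangle, so $\mathrm{H}_{\Lop_0}(u,\mathrm{S}_2(u))=2/3=1-1/\omega(\mathcal{E}_u)$. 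But $\mathrm{B}_2(u)$ is bipartite, so $1-1/g(\mathrm{B}_2(u))=1/2$.
\end{itemize}
What the reduction actually yields is $\mathrm{H}_{\Lop_0}(u,\mathrm{S}_2(u))=1-1/\omega(\mathcal{E}_u)$, valid when no two targets share a midpoint pair. The paper's version with $g(\mathrm{B}_2(u))$ is false as stated.
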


%% file: Appendix/Proof_entropic_curvature_bound.tex
\begin{proof}[Proof of Theorem \ref{thm:lower_bound}]
From Theorem 2 in \cite{rapaport2024samson}, we have $\kappa \geq \inf\limits_{S}  -2\log (\mathrm{K}(\mathrm{S}))$ 
where $\mathrm{K}$ is defined in (15) in \cite{rapaport2024samson}  and \( \mathrm{S} \subset \mathcal{X} \times \mathcal{X} \) is d-cyclically monotone i.e. For every finite collection of points \((x_1, y_1), \ldots, (x_N, y_N)\) in \( S \), the following inequality holds:  

\[
\sum_{i=1}^N d(x_i, y_i) \leq \sum_{i=1}^N d(x_i, y_{i+1}),
\]  

where \( y_{N+1} \equiv y_1 \) by convention.  

However, from (16) in \cite{rapaport2024samson}, we have 
$$ \sup_{S} \mathrm{K}(\mathrm{S}) \leq \sup_{z\in \mathcal{V}} \mathrm{H}_{\Lop}(z,\mathrm{S}_2(z)),$$
which means that $ \inf\limits_{\mathrm{S}} -2\log (\mathrm{K}(\mathrm{S})) \geq \kappa_w $. This ends the proof.
\end{proof}

%% file: Appendix/proof_thm_robustness.tex
\begin{remark}[Global Lipschitz constant and non-vacuousness]
\label{rem:lipschitz_global}
The Lipschitz assumption is global with respect to graph distance, not merely local over adjacent nodes. For a $K$-layer GCN with weight matrices $\mathbf{W}^{(1)},\ldots,\mathbf{W}^{(K)}$ and $L_\sigma$-Lipschitz activations, one obtains on bounded-degree graphs $\Lambda
\le
L_\sigma^K
\prod_{k=1}^K
\|\mathbf{W}^{(k)}\|_{\mathrm{op}}.$
Thus, $\Lambda$ is finite for fixed architectures with bounded weights, and can be controlled in practice through spectral normalization or weight regularization.
\end{remark}

First, we will use the following lemma, necessarily for proving our generalization bound. 
   \begin{lemma}\label{lemm:westrass_controled_entropy}
          The graph $(\mathcal{V},d,\mathrm{m},\Lop)$ having weak entropic curvature $\kappa_w>0$ implies a \emph{transport-entropy inequality} of the form,
   \begin{equation}\label{inequality_W_1_entr}
            \mathrm{W}_1(\nu,\m)^2 
     \;\le\; 
     \frac{2}{\kappa_w} \,\mathrm{Ent}_{\m}(\nu).
   \end{equation}
   \end{lemma}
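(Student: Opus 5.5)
We follow the classical Otto--Villani strategy, adapted to $\mathrm{W}_1$-geodesics. Fix a probability measure $\nu$ with $\mathrm{Ent}_\m(\nu)<\infty$; if $\m$ is not a probability measure we first normalize it, which shifts entropies only by a constant that cancels below. By Theorem~\ref{thm:lower_bound} we have $\kappa\ge\kappa_w>0$. Definition~\ref{def:entropic_curvature} then applies with $\mu_0=\m$ and $\mu_1=\nu$: there is a constant-speed $\mathrm{W}_1$-geodesic $(\mu_t)$ with
\[
\mathrm{Ent}_\m(\mu_t)\le (1-t)\,\mathrm{Ent}_\m(\m)+t\,\mathrm{Ent}_\m(\nu)-\frac{\kappa_w}{2}\,t(1-t)\,\mathrm{W}_1(\m,\nu)^2 .
\]

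The first key step is to use that $\mathrm{Ent}_\m(\m)=0$ and that $\mathrm{Ent}_\m(\mu_t)\ge 0$ for every probability measure $\mu_t$, by Jensen's inequality for the normalized reference measure. Substituting these into the display gives
\[
0\le t\,\mathrm{Ent}_\m(\nu)-\frac{\kappa_w}{2}\,t(1-t)\,\mathrm{W}_1(\m,\nu)^2 .
\]
Dividing by $t>0$ and letting $t\downarrow 0$ yields $\mathrm{W}_1(\nu,\m)^2\le \frac{2}{\kappa_w}\mathrm{Ent}_\m(\nu)$, which is \eqref{inequality_W_1_entr}. When $\mathrm{Ent}_\m(\nu)=\infty$ the inequality is trivial, and $\mathrm{W}_1$ is finite on a finite connected graph, so no further care is needed there.

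The main obstacle is the normalization of $\m$. Under the canonical choice $\m_0\equiv 1$, the measure $\m$ has total mass $|\mathcal{V}|$, so $\mathrm{Ent}_\m(\mu)=\mathrm{Ent}_{\bar\m}(\mu)-\log|\mathcal{V}|$ with $\bar\m=\m/\m(\mathcal{V})$. I would show that this additive constant cancels in the convexity inequality, using $(1-t)+t=1$, so $\kappa$ is unchanged when $\m$ is replaced by $\bar\m$. The argument above is then run with $\bar\m$, and the statement is interpreted with $\m$ normalized.

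A second subtlety is that Definition~\ref{def:entropic_curvature} requires the convexity inequality for \emph{every} pair of measures, including pairs involving the reference measure itself. I take this directly from the definition, so no approximation argument is needed because the graph is finite. If $\kappa$ is instead only known through the bound of Theorem~\ref{thm:lower_bound} on some restricted class, I would approximate $\m$ by fully supported measures and use the lower semicontinuity of entropy together with the continuity of $\mathrm{W}_1$ on a finite space.
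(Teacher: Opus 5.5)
Your proposal is correct and follows essentially the same route as the paper's proof: substitute $\mu_0=\m$, $\mu_1=\nu$ into the $\kappa_w$-convexity inequality, use $\mathrm{Ent}_{\m}(\m)=0$ and $\mathrm{Ent}_{\m}(\mu_t)\ge 0$, divide by $t$, and let $t\downarrow 0$. You also make explicit two steps the paper leaves implicit: normalizing $\m$, which is needed both for $\mathrm{Ent}_{\m}(\mu_t)\ge 0$ and for $\mathrm{W}_1(\nu,\m)$ to be defined, and passing from $\kappa$ to $\kappa_w$ via Theorem~\ref{thm:lower_bound}.
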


\begin{proof}
     Indeed from \eqref{Entropic_curvature_ineuqlity} by replacing $\mu_0$ by $\m$ and $\mu_1$ by $\nu$ we get
    \begin{align}
        0 \leqslant \operatorname{Ent}_\m(\nu_t) \leq & (1- t) \operatorname{Ent}_{\m}(\m) + t \operatorname{Ent}_\m(\nu) \\
        & - \frac{\kappa_w}{2} t(1- t) \mathrm{W}_1(\m, \nu)^2.
    \end{align}
    Since $\operatorname{Ent}_\m(\m)=0$ and by simplifying by $t$ we get $ (1-t)\mathrm{W}_1(\nu,\m)^2 
     \;\le\; 
     \frac{2}{\kappa_w} \,\mathrm{Ent}_{\m}(\nu).$ By taking $t=0$ we get \eqref{inequality_W_1_entr}.
\end{proof}
This inequality constrains how far, in Wasserstein distance $\mathrm{W}_1$, any distribution $\nu$ can stray without incurring large relative entropy.

Now, we will use Lemma \ref{lemm:westrass_controled_entropy} and prove Theorem \ref{thm:robustness}.

   We first express
   \[
     \Bigl|\mathcal{L}_{\mu_1}(f_\theta) 
           \;-\; \mathcal{L}_{\mu_0}(f_\theta)
     \Bigr|
     \;=\;
     \Bigl|\,
        \mathbb{E}_{x\sim \mu_1}[\ell(f_\theta(x))] 
        \;-\; 
        \mathbb{E}_{y\sim \mu_0}[\ell(f_\theta(y))]
     \Bigr|.
   \]
   By the definition of the \(1\)-Wasserstein distance, let \(\Gamma^* \in \Pi(\mu_1,\mu_0)\) be an optimal coupling between \(\mu_1\) and \(\mu_0\). Then
   \[
     \bigl|\mathcal{L}_{\mu_1}(f_\theta)
          - \mathcal{L}_{\mu_0}(f_\theta)\bigr|
     \;\le\;
     \int_{\mathcal{V} \times \mathcal{V}}
     \bigl|\ell(f_\theta(x)) - \ell(f_\theta(y))\bigr|
     \,\mathrm{d}\Gamma^*(x,y).
   \]
   By the Lipschitz property of \(\ell\), specifically \(\sigma\) - Lipschitz in its first argument, we get
   \[
     \bigl|\ell(f_\theta(x)) - \ell(f_\theta(y))\bigr|
     \;\le\; 
     \sigma \;\bigl\|\,f_\theta(x) - f_\theta(y)\bigr\|.
   \]

 Lipschitz of the GNN’s Embedding or Output 

   Assume the GNN output \(\|f_\theta(x) - f_\theta(y)\|\) is at most \(\Lambda\cdot d(x,y)\) for neighbors \(x,y\in \mathcal{V}\), or more generally in the sense of a local Lipschitz constant. Then
   \[
     \|f_\theta(x) - f_\theta(y)\|
     \;\le\; 
     \Lambda\,d(x,y).
   \]
   Hence,
   \[
     \bigl|\ell(f_\theta(x)) - \ell(f_\theta(y))\bigr|
     \;\le\;
     \sigma\,\Lambda\,d(x,y).
   \]
   Substituting into the integral and recognizing the infimum over all couplings \(\Gamma^*\),
   \[
     \bigl|\mathcal{L}_{\mu_1}(f_\theta)
           - \mathcal{L}_{\mu_0}(f_\theta)\bigr|
     \;\le\;
     \sigma \,\Lambda
     \int_{\mathcal{V} \times \mathcal{V}} 
       d(x,y)\,\mathrm{d}\Gamma^*(x,y)
     \;=\;
     \sigma \,\Lambda
     \;\mathrm{W}_1(\mu_1,\mu_0).
   \]

 Transport-Entropy Inequality from Positive Curvature 

   Because the graph satisfies \(\kappa\)-bounded entropic curvature, there is a classical \emph{transport-entropy inequality} of the form
   \[
     \mathrm{W}_1(\mu_1,\mu_0)^2
     \;\;\le\;\;
     \frac{2}{\kappa}\,\mathrm{Ent}_{\mu_0}(\mu_1).
   \]
   Therefore
   \[
     \mathrm{W}_1(\mu_1, \mu_0)
     \;\;\le\;\;
     \sqrt{
       \frac{2}{\kappa}\,\mathrm{Ent}_{\mu_0}(\mu_1)
     }.
   \]
   Combining with the Lipschitz bound above yields the final inequality:
   \[
     \Bigl|\mathcal{L}_{\mu_1}(f_\theta) 
           \;-\; 
           \mathcal{L}_{\mu_0}(f_\theta)
     \Bigr|
     \;\;\le\;\;
     \sigma \,\Lambda 
     \,\sqrt{
       \frac{2}{\kappa}\,\mathrm{Ent}_{\mu_0}(\mu_1)
     }.
   \]

Thus, the GNN’s loss can only degrade significantly if the adversarially modified distribution \(\mu_1\) is far from \(\mu_0\) in Wasserstein distance, a distance that is itself controlled by the entropic curvature \(\kappa\).

%% file: Appendix/proof_gen_negative.tex
\begin{proof}
The proof proceeds in two main phases: first, bounding the generalization gap by the optimal transport distance $\mathrm{W}_1(\mu_{\text{test}}, \mu_{\text{train}})$, and second, establishing a Transport-Entropy inequality derived from the negative curvature and diameter constraints using the Benamou-Brenier formalism \cite{benamou2000computational}.
 We denote the expected generalization gap as,
\begin{equation*}
\Delta \mathcal{L} = |\mathcal{L}_{\mu_{\text{test}}}(f_\theta) - \mathcal{L}_{\mu_{\text{train}}}(f_\theta) |.
\end{equation*}
Let $\Gamma^* \in \Pi(\mu_{\text{test}}, \mu_{\text{train}})$ be the optimal transport coupling between the two distributions. We rewrite the difference as an integral over the joint distribution:
\begin{equation*}
\Delta \mathcal{L} \le \int_{\mathcal{V} \times \mathcal{V}} |\mathcal{J}(f_\theta(u), y_u) - \mathcal{J}(f_\theta(v), y_v)| \, \mathrm{d}\Gamma^*(u, v).
\end{equation*}
Given that the loss is $\mathrm{L}$-Lipschitz and the GNN representation is $\Lambda$-Lipschitz with respect to the graph distance $d$, the composition is $(L \cdot \Lambda)$-Lipschitz. Applying this yields,
\begin{equation}
\Delta \mathcal{L} \le \mathrm{L} \cdot \Lambda \int_{\mathcal{V} \times \mathcal{V}} d(u, v) \, \mathrm{d}\Gamma^*(u, v) = \mathrm{L} \cdot \Lambda \cdot \mathrm{W}_1(\mu_{\text{test}}, \mu_{\text{train}}).
\label{eq:gen_to_w1}
\end{equation}

To bound $\mathrm{W}_1(\mu_{\text{test}}, \mu_{\text{train}})$, we analyze the entropy along a geodesic. Let $(\mu_t)_{t \in [0,1]}$ be a constant-speed optimal transport geodesic connecting $\mu_0 = \mu_{\text{train}}$ to $\mu_1 = \mu_{\text{test}}$. Let $u(t) = Ent_{\mu_{\text{train}}}(\mu_t)$. 

By the assumption of weak entropic curvature $\kappa_w \ge -\mathrm{K} $, the entropy functional satisfies the displacement convexity inequality,
\begin{equation*}
u(t) \le (1-t)u(0) + t u(1) + \frac{\mathrm{K} }{2}t(1-t)\mathrm{W}_1(\mu_{\text{train}}, \mu_{\text{test}})^2.
\end{equation*}
This integral inequality implies a lower bound on the second derivative (the curvature "deficit"), 
\begin{equation}
u''(t) \ge -\mathrm{K} \cdot \mathrm{W}_1(\mu_{\text{train}}, \mu_{\text{test}})^2.
\label{eq:second_deriv}
\end{equation}

Next, we relate the entropy at the endpoints using a Taylor expansion with integral remainder centered at $t=1$,
\begin{equation*}
u(0) = u(1) - u'(1) + \int_0^1 s \cdot u''(s) \, ds.
\end{equation*}
Substituting the curvature bound \eqref{eq:second_deriv} into the integral gives,
\begin{equation*}
u(0) \ge u(1) - u'(1) - \mathrm{K} \cdot \mathrm{W}_1^2 \int_0^1 s \, ds = u(1) - u'(1) - \frac{K}{2}\mathrm{W}_1^2.
\end{equation*}
Since $u(0) = \mathrm{Ent}_{\mu_{\text{train}}}(\mu_{\text{train}}) = 0$ and $u(1) = \mathrm{Ent}_{\mu_{\text{train}}}(\mu_{\text{test}})$, rearranging yields,
\begin{equation}
\mathrm{Ent}_{\mu_{\text{train}}}(\mu_{\text{test}}) \le u'(1) + \frac{\mathrm{K} }{2}\mathrm{W}_1^2.
\label{eq:hwi_intermediate}
\end{equation}

To bound the derivative $u'(1)$, we employ the Benamou-Brenier continuity equation: $\frac{\partial \rho_t}{\partial t} + \nabla \cdot (\rho_t v_t) = 0$, where $\rho_t$ is the density and $v_t$ is the optimal velocity field. Differentiating the entropy:
\begin{align*}
u'(t) &= \int_{\mathcal{V}} (1 + \log \rho_t) \frac{\partial \rho_t}{\partial t} \, d\mu_{\text{train}} = - \int_{\mathcal{V}} (1 + \log \rho_t) \nabla \cdot (\rho_t v_t) \, d\mu_{\text{train}} \\
&= \int_{\mathcal{V}} \nabla(\log \rho_t) \cdot (\rho_t v_t) \, d\mu_{\text{train}} = \int_{\mathcal{V}} \langle \nabla \log \rho_t, v_t \rangle \, d\mu_t.
\end{align*}
Evaluating at $t=1$ and applying the Cauchy-Schwarz inequality:
\begin{equation*}
u'(1) \le \sqrt{\int_{\mathcal{V}} |\nabla \log \rho_1|^2 \, d\mu_{\text{test}}} \cdot \sqrt{\int_{\mathcal{V}} |v_1|^2 \, d\mu_{\text{test}}} = \sqrt{I_{\mu_{\text{train}}}(\mu_{\text{test}})} \cdot W_2(\mu_{\text{train}}, \mu_{\text{test}}),
\end{equation*}
where $I$ is the Fisher Information. Since $W_1 \le W_2$, we substitute this back into \eqref{eq:hwi_intermediate} to obtain the HWI inequality:
\begin{equation*}
\mathrm{Ent}_{\mu_{\text{train}}}(\mu_{\text{test}}) \le \sqrt{I_{\mu_{\text{train}}}(\mu_{\text{test}})} \cdot \mathrm{W}_1 + \frac{\mathrm{K}}{2}\mathrm{W}_1^2.
\end{equation*}

Because negative curvature allows distances to grow indefinitely, we impose the graph diameter constraint $\mathrm{W}_1 \le D$. Standard metric-measure theory establishes that a space with curvature bounded by $-\mathrm{K}$ and diameter $D$ satisfies a Logarithmic Sobolev Inequality (LSI) with constant $\lambda \approx \frac{\mathrm{K}}{2(e^{\mathrm{K} \mathrm{D}^2 / 8} - 1)}$ such that $\mathrm{Ent} \le \frac{1}{2\lambda} I$. 

By the Otto-Villani theorem, satisfying LSI with constant $\lambda$ implies the Transport-Entropy inequality $W_1^2 \le \frac{2}{\lambda} Ent$. Substituting our explicit $\lambda$:
\begin{equation}
\mathrm{W}_1(\mu_{\text{test}}, \mu_{\text{train}})^2 \le \frac{4}{\mathrm{K}}\left(e^{\frac{\mathrm{K} \mathrm{D}^2}{8}} - 1\right) \mathrm{Ent}_{\mu_{\text{train}}}(\mu_{\text{test}}).
\label{eq:final_w1_bound}
\end{equation}

Finally, substituting the Wasserstein bound \eqref{eq:final_w1_bound} into our generalization gap inequality \eqref{eq:gen_to_w1} yields the desired result:
\begin{equation*}
|\mathcal{L}_{\mu_{\text{test}}}(f_\theta) - \mathcal{L}_{\mu_{\text{train}}}(f_\theta)| \le \mathrm{L} \cdot \Lambda \sqrt{\frac{4}{\mathrm{K}}\left(e^{\frac{\mathrm{K} \mathrm{D}^2}{8}} - 1\right) \mathrm{Ent}_{\mu_{\text{train}}}(\mu_{\text{test}})}.
\end{equation*}
This establishes that under negative curvature, adversarial robustness degrades exponentially with the graph diameter, highlighting a fundamental trade-off between mitigating oversmoothing and maintaining model stability.
\end{proof}

%% file: Appendix/dataset_stat.tex




Our implementation is available in the supplementary materials (and will be publicly available afterwards). It is built using the open-source library \textit{PyTorch Geometric} (PyG) under the MIT license \citep{Fey/Lenssen/2019}.  Note that we additionally utilized the PyTorch DeepRobust package\footnote{https://github.com/DSE-MSU/DeepRobust} to implement the adversarial attacks used in this study. The curvature pre-computation experiments were run on NVIDIA A100 and RTX A6000 GPUs; the GNN training experiments were executed on NVIDIA RTX 3090 GPUs.

\subsection{Benchmark Datasets}
We evaluate our framework on six widely recognized graph benchmarks that represent distinct structural and homophilic regimes \cite{yang2016revisiting, pei2020geom}. These are categorized as follows:

\begin{itemize}
    \item \textbf{Homophilic Datasets:} \textit{Cora}, \textit{CiteSeer}, and \textit{PubMed} are citation networks where nodes represent documents and edges represent citations. In these graphs, nodes with the same label (research topic) are highly likely to be connected, a property that generally corresponds to more regular local geometries.
    \item \textbf{Heterophilic Datasets:} \textit{Cornell}, \textit{Texas}, and \textit{Wisconsin} are webpage networks from computer science departments. These graphs exhibit high heterophily, meaning connected nodes often belong to different classes. Such topologies typically contain more structural bottlenecks and negative curvature regions, providing a rigorous test for our E-Gate mechanism.
\end{itemize}

The statistics for these datasets are summarized in Table~\ref{tab:dataset_stats}.

\begin{table}[h]
\centering
\caption{Dataset Statistics}
\label{tab:dataset_stats}
\begin{tabular}{@{}lccccc@{}}
\toprule
\textbf{Dataset} & \textbf{Nodes} & \textbf{Edges} & \textbf{Features} & \textbf{Classes} & \textbf{Homophily} \\ \midrule
Cora             & 2,708          & 5,429          & 1,433             & 7                & 0.81               \\
CiteSeer         & 3,327          & 4,732          & 3,703             & 6                & 0.74               \\
PubMed           & 19,717         & 44,338         & 500               & 3                & 0.80               \\
Cornell          & 183            & 295            & 1,703             & 5                & 0.30               \\
Texas            & 183            & 309            & 1,703             & 5                & 0.11               \\
Wisconsin        & 251            & 499            & 1,703             & 5                & 0.21               \\ \bottomrule
\end{tabular}
\end{table}

\subsection{Experimental Protocol and Data Splits}
For all datasets, we adopt a random splitting strategy to evaluate the model's generalization under distributional variance. For each of the 10 independent runs, we randomly sample $60\%$ of the nodes for training, $20\%$ for validation (to monitor convergence), and $20\%$ for testing. This protocol ensures that the reported mean accuracy and standard deviation reflect the model's robustness to specific topological samplings.

\subsection{Implementation Details}
We utilize a two-layer GNN, e.g. GCN or GIN etc.,  backbone with 64 hidden units. The model is trained for 200 epochs using the Adam optimizer with a learning rate of $0.01$ and weight decay of $5 \times 10^{-4}$. For the E-Gate mechanism, the temperature parameter $\tau$ is initialized at $1.0$ and learned during training via backpropagation. To ensure statistical robustness, we report the mean and standard deviation across 10 independent runs using distinct random splits for each dataset. All experiments were implemented using PyTorch Geometric and executed on NVIDIA RTX 3090 GPUs.

%% file: Appendix/proof_expansions.tex
The proof of this theorem adapts techniques from the work in \cite{salez2022}.

\subsection{Random Rooted Graphs}

All graphs considered here are simple, undirected, countable, and locally finite. A \emph{rooted graph} is a pair \((\mathcal{G}, o)\), where $\mathcal{G}$ is a graph and \(o\) is a distinguished vertex called the \emph{root}. Two rooted graphs \((\mathcal{G}, o)\) and \((\mathcal{G}', o')\) are \emph{isomorphic}, denoted \(\mathcal{G} \simeq \mathcal{G}'\), if there exists a bijection \(\phi \colon \mathcal{V}_\mathcal{G} \to \mathcal{V}_{\mathcal{G}'}\) that preserves the root (\(\phi(o) = o'\)) and the edges:
\[
\forall x, y \in \mathcal{V}_{\mathcal{G}}, \quad \{x, y\} \in \mathcal{E}_\mathcal{G} \iff \{\phi(x), \phi(y)\} \in \mathcal{E}_{\mathcal{G}'}.
\]
Let \(\mathcal{G}_{\bullet}\) denote the set of connected rooted graphs, considered up to the isomorphism relation \(\simeq\). For brevity, we use \((\mathcal{G}, o)\) to denote both the rooted graph and its equivalence class. We write \(B_t(\mathcal{G}, o)\) for the \emph{ball of radius \(t\) around the root} in $\mathcal{G}$, i.e., the finite rooted subgraph induced by the vertices \(\{x \in \mathcal{V}_\mathcal{G} \colon d_\mathcal{G}(o, x) \leq t\}\). 

The space \(\mathcal{G}_{\bullet}\) is equipped with the \emph{local metric} \(d_{\text{LOG}} \colon \mathcal{G}_{\bullet} \times \mathcal{G}_{\bullet} \to [0, 1]\), defined as:
\[
d_{\text{LOG}}\big((\mathcal{G}, o), (\mathcal{G}', o')\big) := \frac{1}{1 + r}, \quad \text{where} \quad r = \sup\{t \geq 0 \colon B_t(\mathcal{G}, o) \simeq B_t(\mathcal{G}', o')\}.
\]
Intuitively, two rooted graphs are \emph{close} if their local structures agree up to a large radius \(r\). The metric space \((\mathcal{G}_{\bullet}, d_{\text{LOG}})\) is complete and separable. We equip it with its Borel \(\sigma\)-algebra, and any \(\mathcal{G}_{\bullet}\)-valued random variable is called a \emph{random rooted graph}.

Let \(\mathcal{P}(\mathcal{G}_{\bullet})\) denote the space of Borel probability measures on \(\mathcal{G}_{\bullet}\), equipped with the topology of weak convergence. For a finite graph \(\mathcal{G}\), its \emph{local profile} \(\mathcal{S}_{\mathcal{G}} \in \mathcal{P}(\mathcal{G}_{\bullet})\) is the empirical distribution of all possible rootings of \(\mathcal{G}\):
\[
\mathcal{S}_{\mathcal{G}} := \frac{1}{|\mathcal{V}_{\mathcal{G}}|} \sum_{x \in \mathcal{V}_{\mathcal{G}}} \delta_{(\mathcal{G}, x)},
\]
where \(\delta_{(\mathcal{G}, x)}\) is the Dirac measure at \((\mathcal{G}, x)\).

\subsection{Local Weak Limit of Graph Sequences}
For a finite graph $\mathcal{G}$, the pair \((\mathcal{G}, x)\) is implicitly restricted to the connected component of \(x\) if $\mathcal{G}$ is not connected. Given a sequence of finite graphs \(\mathcal{G}_n = (\mathcal{V}_n, \mathcal{E}_n)\) for \(n \geq 1\), if their local profiles \((\mathcal{S}_{\mathcal{G}_n})_{n \geq 1}\) converge to a limit \(\mathcal{S}\) in \(\mathcal{P}(\mathcal{G}_{\bullet})\), we call \(\mathcal{S}\) the \emph{local weak limit} of the sequence \((\mathcal{G}_n)_{n \geq 1}\) and denote this convergence by:
\[
\mathcal{G}_n \xrightarrow[n \to \infty]{} \mathcal{S}.
\]
Intuitively, \(\mathcal{S}\) represents the law of a random rooted graph that captures the asymptotic local structure of \(\mathcal{G}_n\) when viewed from a uniformly random root. Formally, for any continuous and bounded observable \(f \colon \mathcal{G}_{\bullet} \to \mathbb{R}\), the following holds:
\[
\frac{1}{|\mathcal{V}_n|} \sum_{x \in \mathcal{V}_n} f(\mathcal{G}_n, x) \xrightarrow[n \to \infty]{} \mathcal{S}[f(\mathcal{G}, o)] .
\]
The left-hand side is a spatial average of local contributions from the vertices of \(\mathcal{G}_n\). Local weak convergence thus simplifies asymptotic analysis by replacing such averages with expectations over a limiting random rooted graph.

\textbf{Local Observables}

The class of continuous functions on $\mathcal{G}_{\bullet}$ includes, but is not limited to, all $t$-local observables ($t \geq 0$). Here, a function $f \colon \mathcal{G}_{\bullet} \to \mathbb{R}$ is called \emph{$t$-local} if its value $f(\mathcal{G}, o)$ depends only on the isomorphism class of the finite ball $B_t(\mathcal{G}, o)$. Below is a list of key examples that will be used throughout this paper without further mention:

\begin{itemize}
    \item The \emph{root degree} $(\mathcal{G}, o) \mapsto \deg_{\mathcal{G}}(o)$ is $1$-local.
    
    \item The \emph{minimum curvature at $o$}, $(\mathcal{G}, o) \mapsto  \kappa_w(\mathcal{G})(o)$, is $2$-local.
\end{itemize}

\subsection{Tightness, Unimodularity, and Stationarity}

\subsubsection{Tightness of Graph Sequences}
One of the key advantages of the local weak convergence framework is that every "reasonable" sequence of sparse graphs admits a local weak limit. The following tightness criterion by Benjamini, Lyons, and Schramm makes this precise (note that passing to subsequences may be necessary):

\begin{theorem}\label{Theorem_control_of_degree}
Let \( \mathcal{G}_n = (\mathcal{V}_n, \mathcal{E}_n) \), \( n \geq 1 \), be finite graphs such that
\[
\sup_{n \geq 1} \left\{ \frac{1}{|\mathcal{V}_n|} \sum_{x \in \mathcal{V}_n} \phi(\deg_{\mathcal{G}_n}(x)) \right\} < \infty,
\]
where \(\phi \colon \mathbb{Z}_+ \to \mathbb{R}_+\) satisfies \(\phi(d) \gg d\) as \(d \to \infty\). Then, \((\mathcal{G}_n)_{n \geq 1}\) has a subsequence that admits a local weak limit.
\end{theorem}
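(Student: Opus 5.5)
This is the Benjamini--Lyons--Schramm tightness criterion, so the plan is a compactness argument in $(\mathcal{P}(\mathcal{G}_\bullet),\text{weak})$ via Prokhorov's theorem. Since $(\mathcal{G}_\bullet,d_{\text{LOG}})$ is complete and separable, it suffices to show that the family of local profiles $\{\mathcal{S}_{\mathcal{G}_n}\}_{n\ge1}$ is tight. Prokhorov then gives a weakly convergent subsequence, whose limit is by definition a local weak limit.

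\textbf{Step 1: compact sets in $\mathcal{G}_\bullet$.} For a sequence $\Delta=(\Delta_t)_{t\ge 1}$ of integers, let
\[
K_\Delta=\{(\mathcal{G},o):\ \deg(x)\le \Delta_t \text{ for all } x\in B_t(\mathcal{G},o),\ t\ge1\}.
\]
For each $t$, only finitely many isomorphism classes of balls $B_t$ are compatible with these degree bounds. Hence $K_\Delta$ is closed and totally bounded, and therefore compact, by a diagonal argument in the complete metric $d_{\text{LOG}}$.

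\textbf{Step 2: control of degrees near a uniform root.} Fix $\varepsilon>0$. It suffices to show that for each $t$ there is $\Delta_t$ with
\[
\sup_n \mathcal{S}_{\mathcal{G}_n}\bigl(\exists x\in B_t(o):\deg x>\Delta_t\bigr)\le \varepsilon 2^{-t}.
\]
A union bound then gives $\mathcal{S}_{\mathcal{G}_n}(K_\Delta^c)\le\varepsilon$ uniformly in $n$. I would prove this by induction on $t$.

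For $t=0$ this is Markov's inequality applied to $\phi(\deg o)$, together with the uniform bound on $\frac1{|\mathcal{V}_n|}\sum\phi(\deg)$.

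For the inductive step I would use the mass-transport (handshake) identity. The fraction of roots having a neighbour of degree greater than $M$ is at most
\[
\frac1{|\mathcal{V}_n|}\sum_x \deg(x)\mathbf{1}_{\deg x>M}.
\]
Because $\phi(d)\gg d$, we have $d\mathbf{1}_{d>M}\le \eta(M)\,\phi(d)$ with $\eta(M)\to0$. The right-hand side is therefore at most $\eta(M)\sup_n\frac1{|\mathcal{V}_n|}\sum\phi(\deg)$, which tends to $0$ uniformly in $n$. This is the key place where the superlinear growth $\phi(d)\gg d$ enters.

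The same bookkeeping extends to distance $t$. On the event that every vertex of $B_{t-1}(o)$ has degree at most $\Delta_{t-1}$, the ball $B_{t-1}(o)$ has at most $C_t=\Delta_{t-1}^{\,t}$ vertices. Double counting then gives
\[
\#\{o:\ \exists x\in B_t(o),\ \deg x>M,\ B_{t-1}(o)\text{ bounded}\}\le \sum_{x:\deg x>M}\deg(x)\,C_t ,
\]
since each high-degree $x$ is reached from at most $\deg(x)\,C_t$ roots through a neighbour in a bounded ball. So one more factor $C_t$ appears, and the quantity is again $o(1)$ uniformly in $n$ as $M\to\infty$.

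\textbf{Main obstacle.} The hard part is this inductive double-counting at radius $t$. Counting the roots $o$ near a given high-degree vertex $x$ naively requires bounding $|B_t(x)|$, and that ball can be large. The count must instead be organised from the root side, on the good event for $B_{t-1}(o)$, so that only the single factor $\deg(x)$ from the final step is paid. That factor is exactly what the $\phi$-moment absorbs. Once tightness is established, Prokhorov's theorem finishes the proof.
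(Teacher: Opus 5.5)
Your argument is correct, and it supplies more than the paper does. The paper gives no proof of this statement; it simply invokes Theorem~3.1 of Benjamini--Lyons--Schramm~\cite{itai2015}, which says that uniform integrability of the degree of a uniformly chosen vertex implies tightness for local convergence. You rebuild that criterion from scratch. You use Prokhorov on $(\mathcal{G}_\bullet,d_{\text{LOG}})$, explicit compact sets $K_\Delta$ cut out by degree bounds on balls, and a radius-by-radius double count on the finite graphs, which is the finite-graph form of the mass-transport principle. Your proof also shows that $\phi(d)\gg d$ is used only to make $\frac{1}{|\mathcal{V}_n|}\sum_x\deg(x)\mathbf{1}_{\deg x>M}$ small uniformly in $n$. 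That is uniform integrability of the root degree, which is exactly the hypothesis of the cited theorem. The citation buys brevity; your route buys a self-contained argument with explicit constants.

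Three details should be tightened.

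\emph{The per-vertex count.} It does not follow from $|B_{t-1}(o)|\le C_t$, which bounds the ball around the root. What you need is a bound, for a fixed neighbour $y$ of the high-degree vertex $x$, on the number of good roots $o$ with $d(o,y)=t-1$. Take $M\ge\Delta_{t-1}$, so that on the good event $x$ is at distance exactly $t$ from $o$. A geodesic from $o$ to $y$ lies in $B_{t-1}(o)$, so all its vertices, $y$ included, have degree at most $\Delta_{t-1}$. Counting such paths backwards from $y$ gives at most $\Delta_{t-1}^{t-1}$ roots per $y$, hence at most $\deg(x)\,\Delta_{t-1}^{t-1}$ per $x$. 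This is the ``organise from the root side'' idea you describe, but it should be written this way.

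\emph{The induction tolerance.} Your bound at radius $t$ passes through the radius-$(t-1)$ bad event, which you only control by $\varepsilon2^{-(t-1)}>\varepsilon2^{-t}$. Instead, prove that for every $\delta>0$ and every $t$ there is $\Delta$ with $\sup_n\mathcal{S}_{\mathcal{G}_n}(\exists x\in B_t(o):\deg x>\Delta)\le\delta$. Use $\delta/2$ at radius $t-1$ and choose $M$ for the other $\delta/2$. Only then set $\delta=\varepsilon2^{-t}$.

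\emph{Closedness of $K_\Delta$.} This does not follow from the finiteness of ball types. It holds because the degrees of vertices in $B_t(\mathcal{G},o)$ are determined by $B_{t+1}(\mathcal{G},o)$, so each constraint is a closed condition.
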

\begin{proof}
    See Theorem 3.1 in \cite{itai2015}.
\end{proof}

In particular, this criterion applies to the sequence \((\mathcal{G}_n)_{n \geq 1}\) in Theorem 3 with \(\phi(d) = d \log d\). This ensures we can study the existence of non-negatively curved expanders directly at the level of local weak limits.

\subsubsection{Unimodularity}
Local weak limits of finite graphs inherit a powerful distributional invariance called \emph{unimodularity}, stemming from the uniform randomness of the root in the local profile. Formally, a measure \(\mathcal{S} \in \mathcal{P}(\mathcal{G}_{\bullet})\) is \emph{unimodular} if it satisfies the \emph{Mass Transport Principle}:
\[
\mathcal{S} \left[ \sum_{x \in \mathcal{V}_{\mathcal{G}}} f(\mathcal{G}, o, x) \right] = \mathcal{S} \left[ \sum_{x \in \mathcal{V}_{\mathcal{G}}} f(\mathcal{G}, x, o) \right],
\]
for every Borel function \(f \colon \mathcal{G}_{\bullet \bullet} \to [0, \infty]\), where \(\mathcal{G}_{\bullet \bullet}\) is the space of graphs with two distinguished roots (instead of one). 

\begin{itemize}
    \item \textbf{Interpretation}: If \(f(\mathcal{G}, o, x)\) represents "mass" sent from \(o\) to \(x\), (6) states that the expected mass sent and received by the root \(o\) under \(\mathcal{S}\) must balance.
    \item \textbf{Preservation}: This principle holds for the local profile of any finite graph and is preserved under weak convergence. Thus, all local weak limits of finite graphs are unimodular.
\end{itemize}

\subsection{Properties of Local Weak Limits}

\subsubsection{Unimodularity of Limits}
\begin{theorem}[Inherited Unimodularity]\label{[inherited_unimodularity}
All local weak limits of finite graphs are unimodular.
\end{theorem}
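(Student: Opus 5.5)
The plan is to show that the Mass Transport Principle holds for each local profile $\mathcal{S}_{\mathcal{G}_n}$, and then to check that it passes to weak limits.

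\textbf{Finite graphs.} For a finite graph $\mathcal{G}$ and any Borel $f\colon\mathcal{G}_{\bullet\bullet}\to[0,\infty]$, I would write
\[
\mathcal{S}_{\mathcal{G}}\Big[\sum_{x} f(\mathcal{G},o,x)\Big]=\frac{1}{|\mathcal{V}_{\mathcal{G}}|}\sum_{o\in\mathcal{V}_{\mathcal{G}}}\sum_{x\in\mathcal{V}_{\mathcal{G}}} f(\mathcal{G},o,x).
\]
Here $x$ ranges over the connected component of $o$, and the sum is over the finitely many ordered pairs $(o,x)$ lying in a common component. Exchanging the names of the two summation variables gives exactly $\mathcal{S}_{\mathcal{G}}[\sum_x f(\mathcal{G},x,o)]$. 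So every local profile is unimodular; this is just Fubini on a finite set.

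\textbf{Passage to the limit.} Let $\mathcal{S}_{\mathcal{G}_n}\to\mathcal{S}$. I would first reduce to test functions $f$ that are bounded, continuous and local in the doubly rooted sense. Concretely, $f(\mathcal{G},o,x)$ should depend only on the isomorphism class of $B_t(\mathcal{G},o)$ with $x$ marked, and should vanish unless $d(o,x)\le r$. This reduction is a monotone class argument. Write $f=\sum_r f\mathbf{1}_{d(o,x)=r}$ and use monotone convergence in $r$. Then note that such local functions generate the Borel $\sigma$-algebra of $\mathcal{G}_{\bullet\bullet}$, and use that both sides of the MTP are measures in $f$, so it suffices to check the identity on a $\pi$-system.

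For such an $f$, set $F(\mathcal{G},o):=\sum_x f(\mathcal{G},o,x)$. This is a $(t+r)$-local function of $(\mathcal{G},o)$, and the reversed sum $F^*(\mathcal{G},o):=\sum_x f(\mathcal{G},x,o)$ is also $(t+r)$-local. Both are continuous. They are bounded by $\|f\|_\infty\cdot|B_r(o)|$, however, and not by a constant.

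\textbf{Main obstacle: unbounded ball sizes.} The difficulty is that $F$ and $F^*$ are not uniformly bounded, so weak convergence does not apply directly. I would first truncate: replace $f$ by $f\cdot\mathbf{1}\{|B_{r}(o)|\le M,\ |B_r(x)|\le M\}$. This keeps $f$ local, keeps it symmetric under the role swap, and makes $F$ and $F^*$ bounded and continuous. Weak convergence then transports the finite-graph identity to $\mathcal{S}$ for each fixed $M$. Letting $M\to\infty$, monotone convergence on the $\mathcal{S}$ side gives the MTP for $f$.

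No uniform integrability is needed, because we only ever pass to the limit with bounded truncated observables. The sparsity hypothesis of Theorem~\ref{Theorem_control_of_degree} therefore enters only through the existence of the limit, and the theorem follows as in \cite{itai2015}.
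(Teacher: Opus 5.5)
Your proposal is correct and follows the paper's route. The paper only asserts that the Mass Transport Principle holds for each finite local profile and is preserved under weak convergence; your symmetric truncation by $\mathbf{1}\{|B_r(o)|\le M,\ |B_r(x)|\le M\}$, followed by monotone convergence in $M$, is what makes the second step rigorous. One point to state explicitly: the uniqueness step on a $\pi$-system needs $\sigma$-finiteness, which those same truncated local events supply, since both sides of the Mass Transport Principle assign them mass at most $M$.
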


\subsubsection{Consequences of Unimodularity}
\begin{lemma}[Everything Shows at the Root, Lemma 2.3 in \cite{AldousLyons}]\label{Consequences_of_Unimodularity}
Let $\mathcal{S} \in \mathcal{P}(\mathcal{G}_{\bullet})$ be unimodular, and $B \subseteq \mathcal{G}_{\bullet}$ a Borel set with $\mathcal{S}(B) = 1$. Then:
\[
\mathcal{S}\big(\{(\mathcal{G},o) : \forall x \in \mathcal{V}_{\mathcal{G}}, (\mathcal{G},x) \in B\}\big) = 1.
\]
\end{lemma}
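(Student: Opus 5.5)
The plan is to derive the statement directly from the Mass Transport Principle that defines unimodularity, by choosing a transport function that sends one unit of mass to each ``bad'' vertex. Let $B^c := \mathcal{G}_{\bullet}\setminus B$, so $\mathcal{S}(B^c)=0$. Define $f\colon \mathcal{G}_{\bullet\bullet}\to[0,\infty]$ by
\[
f(\mathcal{G},o,x) := \mathbf{1}\{(\mathcal{G},x)\in B^c\}.
\]
In words, the root $o$ sends one unit of mass to every vertex $x$ whose re-rooting $(\mathcal{G},x)$ lies outside $B$.

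The first step is to check that $f$ is a legitimate Borel function on the doubly-rooted space. It is well defined on isomorphism classes, since an isomorphism of doubly rooted graphs maps $(\mathcal{G},x)$ to an isomorphic singly rooted graph. The forgetful map $(\mathcal{G},o,x)\mapsto(\mathcal{G},x)$ is continuous for the local topologies, because a ball of radius $t+d(o,x)$ around $o$ determines the ball of radius $t$ around $x$. Hence $f$, being the composition of this map with the indicator of a Borel set, is Borel. I expect this measurability bookkeeping to be the only delicate point.

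The second step is to compute both sides of the Mass Transport Principle.
\begin{itemize}
\item The ``sent'' side is $\mathcal{S}\bigl[\sum_{x} \mathbf{1}\{(\mathcal{G},x)\in B^c\}\bigr] = \mathcal{S}[N]$, where $N(\mathcal{G},o)$ is the number of vertices $x$ with $(\mathcal{G},x)\notin B$. Note that $N$ takes values in $\{0,1,\dots\}\cup\{\infty\}$.
\item The ``received'' side is $\mathcal{S}\bigl[\sum_x f(\mathcal{G},x,o)\bigr] = \mathcal{S}\bigl[\mathbf{1}\{(\mathcal{G},o)\in B^c\}\cdot|\mathcal{V}_{\mathcal{G}}|\bigr]$. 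Here $|\mathcal{V}_\mathcal{G}|$ may be infinite.
\end{itemize}
The indicator vanishes $\mathcal{S}$-almost surely. With the convention $0\cdot\infty=0$, the integrand on the received side is therefore $0$ almost surely, and its expectation is $0$.

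The final step uses the Mass Transport Principle to conclude $\mathcal{S}[N]=0$. Since $N\ge 0$, this gives $N=0$ $\mathcal{S}$-almost surely. The event $\{(\mathcal{G},o): \forall x\in\mathcal{V}_\mathcal{G},\ (\mathcal{G},x)\in B\}$ is exactly $\{N=0\}$. It is Borel because $N$ is a measurable function, namely the sum over vertices of a Borel doubly-rooted function, which the Mass Transport Principle already presupposes to be measurable. Therefore this event has $\mathcal{S}$-probability $1$, which is the claim.
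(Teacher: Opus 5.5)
Your proof is correct and is essentially the paper's argument. The paper applies the Mass Transport Principle to $f(\mathcal{G},o,x)=\mathbf{1}_{(\mathcal{G},o)\notin B}$, which is your transport function with the two roots swapped, so the zero appears on the ``sent'' side rather than the ``received'' side and $\mathcal{S}[N]=0$ follows in the same way; your added checks (Borel measurability via continuity of $(\mathcal{G},o,x)\mapsto(\mathcal{G},x)$, and the convention $0\cdot\infty=0$ for infinite graphs) fill in details the paper leaves implicit.
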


\begin{proof}
Apply the Mass Transport Principle with $f(\mathcal{G},o,x) = \mathbf{1}_{(\mathcal{G},o)\notin B}$.
\end{proof}

\subsubsection{Stationarity}
\begin{definition}[Stationary Measure]
A law $\mathcal{S} \in \mathcal{P}(\mathcal{G}_{\bullet})$ is \emph{stationary} if it's invariant under the Markov chain that moves the root according to $P_{\mathcal{G}}$:
\[
\mathcal{S}\left[\sum_{x\in \mathcal{V}_{\mathcal{G}}} P_{\mathcal{G}}^t(o,x)h(\mathcal{G},x)\right] = \mathcal{S}[h(\mathcal{G},o)], \quad \forall t \geq 0,
\]
for all Borel $h: \mathcal{G}_{\bullet} \to [0,\infty]$.
\end{definition}

\begin{lemma}[Degree-Biasing Transformation]
For any unimodular $\mathcal{S} \in \mathcal{P}(\mathcal{G}_{\bullet})$ with $\deg(\mathcal{S}) := \mathcal{S}[\deg_{\mathcal{G}}(o)] < \infty$, the measure:
\[
d\hat{\mathcal{S}}(\mathcal{G},o) := \frac{\deg_{\mathcal{G}}(o)}{\deg(\mathcal{S})} d\mathcal{S}(\mathcal{G},o)
\]
is stationary.
\end{lemma}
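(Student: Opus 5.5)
The plan is to reduce stationarity to a single application of the Mass Transport Principle, using the fact that the lazy random walk $P_{\mathcal{G}}$ is reversible with respect to the degree measure. First I will check that $\hat{\mathcal{S}}$ is a well-defined probability measure. The density $\deg_{\mathcal{G}}(o)/\deg(\mathcal{S})$ is nonnegative and Borel, since the root degree is $1$-local. It integrates to $1$ because $\deg(\mathcal{S})<\infty$. In the degenerate case $\deg(\mathcal{S})=0$ the graph is a.s.\ a single vertex and the statement is vacuous, so I assume $0<\deg(\mathcal{S})<\infty$.

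Next, I will reduce to $t=1$. Suppose the identity $\hat{\mathcal{S}}[\sum_x P_{\mathcal{G}}(o,x)h(\mathcal{G},x)]=\hat{\mathcal{S}}[h(\mathcal{G},o)]$ holds for every Borel $h\ge 0$. Apply it to $h_{t-1}(\mathcal{G},x):=\sum_y P^{t-1}_{\mathcal{G}}(x,y)h(\mathcal{G},y)$, which is again a nonnegative Borel function on $\mathcal{G}_\bullet$, being invariant under rooted isomorphisms. Together with $P^t=P\,P^{t-1}$ and induction on $t$, this gives the identity for all $t\ge 0$. The case $t=0$ is trivial.

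For $t=1$, I will use the key observation that the weight
\[
\deg_{\mathcal{G}}(o)P_{\mathcal{G}}(o,x)=\tfrac12\deg_{\mathcal{G}}(o)\,\delta_{ox}+\tfrac12\mathbf{1}_{\{o,x\}\in\mathcal{E}}
\]
is symmetric in $(o,x)$. This is detailed balance of the lazy walk with respect to the degree measure. I then define the transport function
\[
f(\mathcal{G},o,x):=\deg_{\mathcal{G}}(o)\,P_{\mathcal{G}}(o,x)\,h(\mathcal{G},x)\in[0,\infty].
\]
It is a Borel function on $\mathcal{G}_{\bullet\bullet}$, because degrees, adjacency and $h$ are all invariant under doubly-rooted isomorphisms. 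The Mass Transport Principle for $\mathcal{S}$ gives
\[
\mathcal{S}\Big[\sum_x \deg(o)P(o,x)h(\mathcal{G},x)\Big]=\mathcal{S}\Big[\sum_x \deg(x)P(x,o)h(\mathcal{G},o)\Big].
\]
By the symmetry above, $\deg(x)P(x,o)=\deg(o)P(o,x)$. Moreover $\sum_x P(o,x)=1$, so the right-hand side equals $\mathcal{S}[\deg(o)h(\mathcal{G},o)]$. Dividing both sides by $\deg(\mathcal{S})$ yields exactly the $t=1$ stationarity identity for $\hat{\mathcal{S}}$.

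The only delicate point I expect is technical rather than conceptual. One must confirm that $f$ is a legitimate Borel function on the space of doubly rooted graphs, and that allowing the value $+\infty$ causes no issue. Both follow because the MTP is stated for $[0,\infty]$-valued functions and all quantities involved are local and isomorphism-invariant. No integrability beyond $\deg(\mathcal{S})<\infty$ is needed.
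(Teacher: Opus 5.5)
Your proof is correct and takes essentially the same route as the paper, which applies the Mass Transport Principle to $\mathcal{S}$ with $f(\mathcal{G},o,x)=h(\mathcal{G},o)\mathbf{1}_{\{x,o\}\in\mathcal{E}_{\mathcal{G}}}$. That gives $\mathcal{S}[\deg_{\mathcal{G}}(o)h(\mathcal{G},o)]=\mathcal{S}[\sum_{x\sim o}h(\mathcal{G},x)]$, which is the $t=1$ identity once the lazy part is split off. Your transport function $\deg_{\mathcal{G}}(o)P_{\mathcal{G}}(o,x)h(\mathcal{G},x)$ combines the laziness and detailed balance into one step, and your explicit induction on $t$ supplies a step the paper leaves implicit.
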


\begin{proof}
Apply the Mass Transport Principle to $\mathcal{S}$ with $f(\mathcal{G},o,x) = h(\mathcal{G},o)\mathbf{1}_{\{x,o\}\in \mathcal{E}_{\mathcal{G}}}$.
\end{proof}

\subsection{Spectral Radius, Entropy, and the Liouville Property}

\subsubsection{Mutual Absolute Continuity}
\begin{remark}[Mutual Absolute Continuity]\label{Mutual_Absolute_Continuity}
From the degree-biasing transformation (8), the original law $\mathcal{S}$ and its biased version $\widehat{\mathcal{S}}$ satisfy:
\[
\mathcal{S}(B) = 1 \iff \widehat{\mathcal{S}}(B) = 1,
\]
for any Borel set $B \subseteq \mathcal{G}_{\bullet}$. This equivalence allows transferring results between the two measures.
\end{remark}

\subsubsection{Asymptotic Entropy}
\begin{lemma}[Random Walk Entropy]
Let $\mathcal{S} \in \mathcal{P}(\mathcal{G}_{\bullet})$ be stationary with $\mathcal{S}[\log\deg_{\mathcal{G}}(o)] < \infty$. Then the limit
\[
\mathcal{H}(\mathcal{G}) := \lim_{t\to\infty} \frac{1}{t}\sum_{x\in \mathcal{V}_{\mathcal{G}}} P_{\mathcal{G}}^t(o,x)\log\frac{1}{P_{\mathcal{G}}^t(o,x)}
\]
exists $\mathcal{S}$-almost surely and in $L^1(\mathcal{G}_{\bullet},\mathcal{S})$, and is root-independent.
\end{lemma}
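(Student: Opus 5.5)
The plan is to derive the lemma from Kingman's subadditive ergodic theorem applied to the environment seen from the walker. This is the route of Kaimanovich--Vershik and Benjamini--Curien, which \cite{salez2022} also follows.

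\textbf{Setup.} Sample $(\mathcal{G},o)\sim\mathcal{S}$ and run the lazy walk $(X_t)_{t\ge0}$ with kernel $P_{\mathcal{G}}$ from $X_0=o$. Stationarity of $\mathcal{S}$ says exactly that the sequence of rooted graphs $(\mathcal{G},X_t)_{t\ge0}$ is a stationary process. More precisely, the shift $\theta$ acting on trajectories $(\mathcal{G},X_0,X_1,\dots)$ preserves the joint law. On this path space define
\[
Z_{s,t}:=-\log P_{\mathcal{G}}^{\,t-s}(X_s,X_t),\qquad 0\le s\le t .
\]

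\textbf{Verifying Kingman's hypotheses.}
\begin{itemize}
\item \emph{Subadditivity.} The Chapman--Kolmogorov bound $P^{t+s}(X_0,X_{t+s})\ge P^{t}(X_0,X_t)\,P^{s}(X_t,X_{t+s})$ gives $Z_{0,t+s}\le Z_{0,t}+Z_{t,t+s}$.
\item \emph{Shift covariance.} By construction $Z_{t,t+s}=Z_{0,s}\circ\theta^t$.
\item \emph{Nonnegativity and integrability.} We have $Z\ge0$. For the lazy walk, $P(o,x)\ge 1/(2\deg o)$ on every allowed step, so $Z_{0,1}\le\log(2\deg_{\mathcal{G}}(o))$. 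This is integrable by the hypothesis $\mathcal{S}[\log\deg]<\infty$.
\end{itemize}
Kingman's theorem then gives $Z_{0,t}/t\to h$ almost surely and in $L^1$, where $h$ is $\theta$-invariant.

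\textbf{Passing from the path to the graph.} Conditioning on $(\mathcal{G},o)$ gives $\mathbb{E}[Z_{0,t}\mid\mathcal{G},o]=H_t(\mathcal{G},o)$, where
\[
H_t(\mathcal{G},o):=\sum_x P^t(o,x)\log\frac{1}{P^t(o,x)} .
\]
Since conditional expectation is an $L^1$ contraction, $H_t/t\to\bar h(\mathcal{G},o):=\mathbb{E}[h\mid\mathcal{G},o]$ in $L^1(\mathcal{S})$.

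For the almost sure statement, first show that $(H_t)$ is itself subadditive at the root. Entropy of $X_{t+s}$ is at most the joint entropy of $(X_t,X_{t+s})$, hence
\[
H_{t+s}(o)\le H_t(o)+\sum_x P^t(o,x)H_s(x).
\]
Then run Kingman (or Derriennic's almost-subadditive variant) on the annealed sequence. Combined with the monotonicity $H_t/t$ being essentially decreasing in expectation, this upgrades the $L^1$ convergence to almost sure convergence.

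\textbf{Root-independence.} Concavity of entropy and the decomposition $P^{t+1}(o,\cdot)=\sum_x P(o,x)P^t(x,\cdot)$ yield the sandwich
\[
\sum_xP(o,x)H_t(x)\;\le\; H_{t+1}(o)\;\le\;\sum_xP(o,x)H_t(x)+H_1(o).
\]
Dividing by $t$ and letting $t\to\infty$ shows that $\bar h$ is $P_{\mathcal{G}}$-harmonic, i.e.\ $\bar h(o)=\sum_x P(o,x)\bar h(x)$. The correction term $H_1(o)/t$ vanishes because it is bounded by $\log(2\deg o)/t$.

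By stationarity, an integrable harmonic function of the environment satisfies $\mathbb{E}[(\bar h(X_1)-\bar h(X_0))^+]=\mathbb{E}[(\bar h(X_1)-\bar h(X_0))^-]$. The standard truncation argument then shows $\bar h(X_1)=\bar h(X_0)$ almost surely. Laziness and irreducibility extend this to $\bar h(\mathcal{G},x)=\bar h(\mathcal{G},o)$ for all neighbours $x$, and hence for every vertex by connectivity. To pass from "almost surely along the walk" to "for every vertex $x$", apply Lemma~\ref{Consequences_of_Unimodularity}. This requires a unimodular measure, so first transfer the null-set statement from $\widehat{\mathcal{S}}$ to the unimodular $\mathcal{S}$ using Remark~\ref{Mutual_Absolute_Continuity}, or directly under $\mathcal{S}$ when it is itself the degree-biased version of a unimodular law. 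This is precisely the setting in which the lemma is used.

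\textbf{Expected main obstacle.} The delicate step is the almost sure, quenched convergence of $H_t(\mathcal{G},o)/t$. Kingman controls the random variable $Z_{0,t}$ along the trajectory, not its conditional mean. I would first try Derriennic's almost-subadditive ergodic theorem applied directly to $t\mapsto H_t(\mathcal{G},X_0)$ under the stationary chain. If that fails, the fallback is to prove concentration of $Z_{0,t}$ around $H_t$ via the Shannon--McMillan-type bound $\operatorname{Var}(Z_{0,t}\mid\mathcal{G},o)=O(t\,\mathbb{E}\log^2\deg)$, which would require slightly more than a first log-moment.
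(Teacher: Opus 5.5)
The paper gives no argument of its own here; it defers entirely to Lemma~8 of \cite{salez2022}. Your $L^1$ part is sound: Kingman on the trajectory, then $\mathbb{E}[Z_{0,t}\mid\mathcal{G},o]=H_t$, then the $L^1$-contractivity of conditional expectation. Your harmonicity step is also sound, and it needs only the $L^1$ limit. By stationarity the operator $(Pf)(\mathcal{G},o):=\sum_x P_{\mathcal{G}}(o,x)f(\mathcal{G},x)$ contracts $L^1(\mathcal{S})$, so your sandwich passes to the limit in $L^1$ and gives $P\bar h=\bar h$.

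One correction there. To spread a full-measure event to all vertices you do not need Lemma~\ref{Consequences_of_Unimodularity}. Routing through it restricts you to degree-biased unimodular laws, which is narrower than the statement. For any stationary $\mathcal{S}$ with $\mathcal{S}(B)=1$ one has $\mathcal{S}\bigl[\sum_x P^t_{\mathcal{G}}(o,x)\mathbf{1}_{\{(\mathcal{G},x)\notin B\}}\bigr]=\mathcal{S}(B^c)=0$ for every $t$, and connectivity finishes.

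The genuine gap is the almost-sure convergence of $H_t(\mathcal{G},o)/t$.
\begin{itemize}
\item Your inequality $H_{t+s}\le H_t+P^tH_s$ has the average $P^tH_s$ on the right, not a shifted copy $H_s\circ\theta^t$. So $(H_t)$ is not a subadditive cocycle along the shift, and Kingman does not apply to it.
\item Kingman (i.e.\ Fekete) on the annealed sequence $\mathcal{S}[H_t]$ only gives convergence of expectations. Monotonicity of $\mathcal{S}[H_t]/t$ says nothing pointwise.
\item Written pathwise, $H_{t+s}(X_0)\le H_t(X_0)+H_s(X_t)+\bigl(P^tH_s(X_0)-H_s(X_t)\bigr)$ leaves a correction of no fixed sign. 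So Derriennic's almost-subadditive theorem does not fit either.
\item Your variance fallback uses a second moment of $\log\deg$ that is not assumed.
\end{itemize}

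The missing ingredient is the quenched replacement for Birkhoff. Since $\mathcal{S}$ is $P$-invariant and $P1=1$, each $P^m$ is a positive contraction of $L^1(\mathcal{S})$ and $L^\infty(\mathcal{S})$. The Dunford--Schwartz (Hopf) ergodic theorem then gives a.s.\ convergence of $\frac1k\sum_{j<k}P^{jm}g$ for every $g\in L^1(\mathcal{S})$. With this, the argument closes:
\begin{enumerate}
\item Iterating your subadditivity gives $H_{km}\le\sum_{j<k}P^{jm}H_m$. For the remainder use $H_{km+r}\le H_{km}+P^{km}H_r$, where $\frac1kP^{km}H_r\to0$ a.s.\ by the same theorem.
\item Hence $\limsup_tH_t/t\le Y_m$ a.s., where $Y_m$ is the Hopf limit for $g=H_m/m$. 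By Fatou, $\mathcal{S}[Y_m]\le\mathcal{S}[H_m]/m$.
\item Conditional Fatou applied to $Z_{0,t}/t\to h$ with $Z\ge0$ gives $\liminf_tH_t/t\ge\bar h$ a.s.
\item So $0\le\limsup_tH_t/t-\bar h\le Y_m-\bar h$. Its expectation is at most $\mathcal{S}[H_m]/m-\mathcal{S}[\bar h]$, which tends to $0$ by your $L^1$ result.
\end{enumerate}
This yields almost-sure convergence to $\bar h$ under the stated first-moment hypothesis alone.
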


\begin{proof}
    See Lemma 8 in \cite{salez2022}.
\end{proof}

\subsection{The Liouville Property}

\begin{definition}
A function $f:\mathcal{V}_{\mathcal{G}} \to \mathbb{R}$ is \emph{harmonic} if:
\[
P_{\mathcal{G}} f = f \quad \text{where} \quad (P_{\mathcal{G}} f)(x) := \sum_{y\in \mathcal{V}_{\mathcal{G}}} P_{\mathcal{G}}(x,y)f(y)
\]
\end{definition}

\begin{definition}[Harmonic Functions]
A function $f \colon \mathcal{V}_{\mathcal{G}} \to \mathbb{R}$ is \emph{harmonic} if it satisfies $P_{\mathcal{G}} f = f$, where $(P_{\mathcal{G}} f)(x) := \sum_{y \in \mathcal{V}_{\mathcal{G}}} P_{\mathcal{G}}(x,y)f(y)$. 
\begin{itemize}
    \item \textbf{Trivial case}: All constant functions are automatically harmonic.
    \item \textbf{Non-trivial case}: The existence of non-constant bounded harmonic functions reflects rich geometric structure.
\end{itemize}
\end{definition}

\begin{definition}[Liouville Property \citep{colding2019liouville}]
A graph $G$ has the \emph{Liouville property} if every bounded harmonic function on $G$ is constant.
\end{definition}

\begin{remark}[Entropic Characterization]
For stationary random graphs, this functional-analytic property admits a profound probabilistic characterization through asymptotic entropy $\mathcal{H}(\mathcal{G})$. Specifically:
\begin{itemize}
    \item The Liouville property corresponds to vanishing entropy ($\mathcal{H}(\mathcal{G}) = 0$)
    \item Non-zero entropy detects the existence of non-constant bounded harmonic functions
\end{itemize}
This bridges geometric, analytic, and probabilistic aspects of random graphs.
\end{remark}

\begin{theorem}[Entropic Characterization of Liouville Property]\label{thm:liouville-entropy}
For any stationary law $\mathcal{S} \in \mathcal{P}(\mathcal{G}_{\bullet})$ with $\mathcal{S}[\log\deg_{\mathcal{G}}(o)] < \infty$, the following equivalence holds $\mathcal{S}$-almost surely:
\[
\mathcal{H}(\mathcal{G}) = 0 \quad \iff \quad \mathcal{G} \text{ has the Liouville property}.
\]
\end{theorem}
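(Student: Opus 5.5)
The plan is to follow the Kaimanovich--Vershik entropy criterion, adapted to random rooted graphs in the manner of Benjamini--Curien and \cite{salez2022}. The idea is to express $\mathcal{H}(\mathcal{G})$ as an average of a mutual information between the first step of the walk and its tail $\sigma$-field. Both directions of the equivalence then reduce to whether the tail $\sigma$-field is trivial.

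\textbf{Step 1: entropy increments as conditional entropies.} Fix $(\mathcal{G},o)$ and let $(X_t)_{t \ge 0}$ be the lazy walk with kernel $P_{\mathcal{G}}$ started at $o$. Write $H_t(\mathcal{G},o)$ for the Shannon entropy of $X_t$, so that $\mathcal{H}(\mathcal{G})=\lim_t H_t/t$ by the Random Walk Entropy lemma. The Markov property gives
\[
H_{t+1}-H_t = H(X_1) - H(X_1 \mid X_{t+1}) \quad \text{in law-averaged form},
\]
where $H(X_1)\le \log(2\deg_{\mathcal{G}}(o))$ is finite and integrable by the assumption $\mathcal{S}[\log\deg_{\mathcal{G}}(o)]<\infty$. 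To make this precise, I would use stationarity of $\mathcal{S}$ under re-rooting at $X_1$. This identifies $\mathcal{S}[H_{t+1}-H_t]$ with $\mathcal{S}[H(X_1)-H(X_1\mid X_{t+1},X_{t+2},\dots)]$. The conditional entropies $H(X_1\mid X_{t+1},X_{t+2},\dots)$ are monotone in $t$, and converge to $H(X_1\mid\mathcal{T})$ by martingale convergence for conditional entropy, where $\mathcal{T}$ is the tail $\sigma$-field. By Ces\`aro averaging and the $L^1$ convergence from the Random Walk Entropy lemma,
\[
\mathcal{S}[\mathcal{H}(\mathcal{G})] = \mathcal{S}\big[ I_{(\mathcal{G},o)}(X_1;\mathcal{T}) \big] \ge 0 .
\]

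\textbf{Step 2: tail triviality versus harmonic functions.} Since the walk is lazy, $\mathcal{T}$ coincides mod null sets with the invariant $\sigma$-field. Bounded invariant variables $Z$ therefore correspond bijectively to bounded harmonic functions, via $h(x)=\mathbb{E}_x[Z]$ in one direction and $Z=\lim_t h(X_t)$ (martingale convergence) in the other. Suppose $\mathcal{G}$ is Liouville. Then every such $h$ is constant, so $\mathcal{T}$ is trivial under $\mathbb{P}_o$ and $I(X_1;\mathcal{T})=0$. Since $\mathcal{H}\ge 0$ everywhere, Step 1 gives $\mathcal{H}(\mathcal{G})=0$ a.s. on the Liouville event. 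Conversely, suppose $\mathcal{H}=0$ a.s. Then $I(X_1;\mathcal{T})=0$ for $\mathcal{S}$-a.e.\ $(\mathcal{G},o)$, so $\mathbb{P}_o(A\mid X_1=x)=\mathbb{P}_o(A)$ for all $x$ with $P_{\mathcal{G}}(o,x)>0$ and all $A\in\mathcal{T}$. That is, $h_A(x)=h_A(o)$ for every neighbour $x$ of $o$.

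\textbf{Step 3: spreading from the root.} This property holds at the root a.s. By root-independence of $\mathcal{H}$, and by the ``Everything Shows at the Root'' lemma (transferred between $\mathcal{S}$ and $\widehat{\mathcal{S}}$ via the Mutual Absolute Continuity remark), it then holds at every vertex simultaneously. Hence $h_A$ is constant along every edge and, by connectedness, constant on $\mathcal{G}$. Every bounded harmonic function is a limit of combinations of the $h_A$, so $\mathcal{G}$ is Liouville.

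\textbf{Main obstacle.} The delicate point is Step 1 in a random environment. The monotonicity of $H(X_1\mid X_{t+1},\dots)$ and the identification of increments hold only after averaging over $\mathcal{S}$, using stationarity. The passage from this averaged identity to an almost-sure statement needs care, and I expect it to need the $L^1$ integrability supplied by $\mathcal{S}[\log\deg]<\infty$ together with nonnegativity of mutual information: an a.s.\ nonnegative quantity with zero mean vanishes a.s. I would first try to obtain the averaged identity cleanly via the mass transport principle applied to $f(\mathcal{G},o,x)=P_{\mathcal{G}}(o,x)\,g(\mathcal{G},x)$.
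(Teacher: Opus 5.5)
Your outline is the Kaimanovich--Vershik entropy argument adapted to stationary random graphs, which is the standard argument behind Theorem~9 of \cite{salez2022}; the paper cites that theorem instead of giving a proof, so your route is the same and is correct in substance. Two points need tightening. First, the averaged identity $\mathcal{S}[\mathcal{H}(\mathcal{G})]=\mathcal{S}[I_{(\mathcal{G},o)}(X_1;\mathcal{T})]$ gives the almost-sure equivalence only after you rerun Step~1 under $\mathcal{S}$ conditioned on the root-invariant events $\{\mathcal{G}\text{ is Liouville}\}$ and $\{\mathcal{H}(\mathcal{G})=0\}$, which remain stationary; ``zero mean implies zero a.s.'' only handles the global case. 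Second, the spreading in Step~3 should come directly from stationarity plus connectedness ($\mathcal{S}(B)=1$ forces $(\mathcal{G},x)\in B$ for every $x$ with $P^t_{\mathcal{G}}(o,x)>0$), because the unimodular ``Everything Shows at the Root'' lemma and the degree-biasing remark do not apply to an arbitrary stationary law.
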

\begin{proof}
    See theorem 9 in \cite{salez2022}.
\end{proof}

\subsection{Spectral Radius and Its Connections}

\subsubsection{Definition and Basic Properties}
The entropy $\mathcal{H}(\mathcal{G})$ relates to several fundamental graph quantities including \emph{speed}, \emph{growth}, and \emph{spectral radius}. 

For any rooted graph $(\mathcal{G},o) \in \mathcal{G}_{\bullet}$ and $t,s \geq 0$, the inequality
\[
P_{\mathcal{G}}^{t+s}(o,o) \geq P_{\mathcal{G}}^t(o,o)P_{\mathcal{G}}^s(o,o)
\]
holds. By Fekete's lemma \citep{fekete1923verteilung}, the limit
\[
\rho(\mathcal{G}) := \lim_{t\to\infty} \left(P_{\mathcal{G}}^t(o,o)\right)^{1/t} \tag{13}
\]
exists in $(0,1]$. Furthermore, graph connectivity and the inequality
\[
P_{\mathcal{G}}^{t+2s}(o,o) \geq P_{\mathcal{G}}^s(o,x)P_{\mathcal{G}}^t(x,x)P_{\mathcal{G}}^s(x,o)
\]
together imply that $\rho(\mathcal{G})$ is independent of the root choice $o$.

\subsubsection{Relation to Entropy}
\begin{lemma}[Spectral Radius vs Entropy]\label{lem:spec-rad-entropy}
For any stationary law $\mathcal{S}$ with $\mathcal{S}[\log\deg_{\mathcal{G}}(o)] < \infty$, the inequality
\[
\mathcal{H}(\mathcal{G}) \geq 2\log\frac{1}{\rho(\mathcal{G})}
\]
holds $\mathcal{S}$-almost surely.
\end{lemma}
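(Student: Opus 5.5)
The plan is to prove the spectral-radius lower bound on $\mathcal{H}(\mathcal{G})$ through the classical comparison between the entropy of $P^t_{\mathcal{G}}(o,\cdot)$ and the return probabilities $P^{2t}_{\mathcal{G}}(o,o)$, as in \cite{salez2022}. Fix a rooted graph $(\mathcal{G},o)$ in the support of $\mathcal{S}$; by the preceding lemma, the limit defining $\mathcal{H}(\mathcal{G})$ exists $\mathcal{S}$-almost surely, so it suffices to prove the inequality for such $(\mathcal{G},o)$.

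\textbf{Step 1: Jensen's inequality.} Write $p_t(x):=P^t_{\mathcal{G}}(o,x)$. By concavity of $\log$ under the probability measure $p_t$,
\[
\sum_x p_t(x)\log\frac{1}{p_t(x)} \;\ge\; -\log\sum_x p_t(x)^2 .
\]
Hence it suffices to bound the collision probability $\sum_x p_t(x)^2$ by return probabilities.

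\textbf{Step 2: reversibility.} The lazy walk is reversible with respect to $\deg$, so $\deg(o)\,P^t(o,x)=\deg(x)\,P^t(x,o)$. This gives
\[
\sum_x p_t(x)^2=\sum_x P^t(o,x)\,\frac{\deg(x)}{\deg(o)}\,P^t(x,o)\,\frac{\deg(o)}{\deg(x)}\cdot\frac{P^t(o,x)}{P^t(x,o)} .
\]
A cleaner route is to bound
\[
\sum_x p_t(x)^2 \;\le\; \frac{\Delta_t}{\deg(o)}\,P^{2t}(o,o),
\]
where $\Delta_t:=\max_{x\in B_t(o)}\deg(x)$. This follows from $p_t(x)^2=p_t(x)\,\tfrac{\deg(x)}{\deg(o)}P^t(x,o)$ and Chapman--Kolmogorov.

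\textbf{Step 3: the main obstacle.} The difficulty is that $\Delta_t$ is not bounded, since only $\mathcal{S}[\log\deg]<\infty$ is assumed. I would handle this with a weighted version of Jensen's inequality. Apply Jensen to the measure $p_t$ and the ratio $\deg(x)/p_t(x)$ instead:
\[
\sum_x p_t(x)\log\frac{1}{p_t(x)} \;\ge\; -\log\sum_x \frac{p_t(x)^2}{\deg(x)} \;-\; \sum_x p_t(x)\log\deg(x) .
\]
Reversibility then turns the first sum into exactly $P^{2t}(o,o)/\deg(o)$. The last term equals $\mathbb{E}_o[\log\deg(X_t)]$. Stationarity of $\mathcal{S}$ makes its $\mathcal{S}$-expectation equal to $\mathcal{S}[\log\deg(o)]<\infty$, independent of $t$, so after dividing by $t$ it vanishes in $L^1$. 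Along a subsequence it also vanishes almost surely, and this is enough because the limit defining $\mathcal{H}$ exists almost surely.

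\textbf{Step 4: conclusion.} Divide by $t$ and let $t\to\infty$. By definition (13) and root-independence of $\rho$, the quantity $-\tfrac1t\log P^{2t}(o,o)$ tends to $2\log\tfrac{1}{\rho(\mathcal{G})}$, while $\tfrac1t\log\deg(o)\to 0$. This yields $\mathcal{H}(\mathcal{G})\ge 2\log\frac{1}{\rho(\mathcal{G})}$ $\mathcal{S}$-almost surely.
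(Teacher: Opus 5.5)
Your proposal is correct. It is essentially the standard argument behind Lemma~10 of \cite{salez2022}, which the paper cites instead of proving: degree-weighted Jensen plus reversibility gives $\frac1t\sum_x p_t(x)\log\frac{1}{p_t(x)} \ge -\frac1t\log P^{2t}(o,o)+\frac1t\log\deg(o)-\frac1t\sum_x p_t(x)\log\deg(x)$, and stationarity sends the last (nonnegative) term to $0$ in $L^1$, hence almost surely along a deterministic subsequence, which suffices because the entropy ratio converges almost surely and $\frac{1}{2t}\log P^{2t}(o,o)\to\log\rho(\mathcal{G})$ for every rooted graph. Your Steps~1--2, including the $\Delta_t$ bound, are a dead end you can delete, and root-independence of $\rho$ is not needed since the same root is used throughout.
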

\begin{proof}
    See lemma 10 in \cite{salez2022}.
\end{proof}

\subsection{Proof of Theorem \ref{thm:expansion}}

Let $a>0$ and $\mathcal{G}_n:= (\mathcal{V}_n,\mathcal{E}_n), n\geq 1$ be a sequence of finite graphs such that 
\begin{equation}\label{assumption_1}
    \sup_{n \geq 1} \left\{ \frac{1}{|\mathcal{V}_n|} \sum_{x \in \mathcal{V}_n} \deg_{\mathcal{G}_n}(x) \log \deg_{\mathcal{G}_n}(x) \right\} < \infty
\end{equation}
and 
\begin{equation}\label{assumption_2}
    \forall \varepsilon > 0, \quad \frac{1}{|\mathcal{V}_n|} \# \left\{ x \in \mathcal{V}_n : \kappa_{w}(x) < a \right\} \xrightarrow[n \to \infty]{} 0. 
\end{equation}
The goal is to prove that
\begin{equation}\label{assumption_3}
    \forall \rho < 1, \quad \liminf_{n \to \infty} \left( \frac{1}{|\mathcal{V}_n|} \# \left\{ i : \lambda_i(\mathcal{G}_n) \geq \rho \right\} \right) > 0.
\end{equation}

In view of \eqref{assumption_1} and Theorem \ref{Theorem_control_of_degree}, we can, after extracting a subsequence if necessary, assume that
\begin{equation}\label{convergence_of_G_n}
    \mathcal{G}_n \xrightarrow{n \to \infty} \mathcal{S}
\end{equation}
For some $ \mathcal{S} \in P(\mathcal{G}_{\bullet})$, note that $\mathcal{S}$ is automatically unimodular by Theorem \ref{[inherited_unimodularity}, and satisfies
\begin{equation}\label{unimod_resulalt}
    \mathcal{S} \left[ \deg_{\mathcal{G}}(o) \log \deg_{\mathcal{G}}(o) \right] < \infty.
\end{equation}
Since curvature is a local property like the degree, it also ``passes to the limit,'' meaning
\begin{equation}\label{entropic_curvature_dominate}
    \mathcal{S} \left( \kappa_w(\mathcal{G}) \geq a \right) = 1.
\end{equation}
\begin{proof}
    As previously noted, the observable $f \colon (\mathcal{G}, o) \mapsto  \kappa_w(o)$ is 2-local, and therefore continuous on $\mathcal{G}_{\bullet}$. Applying the Portmanteau Theorem yields that 

\begin{align*}
\mathcal{S}(f < a) &\leq \liminf_{n \to \infty} \mathcal{S}_{\mathcal{G}_n}(f < a) \\
&= \liminf_{n \to \infty} \left\{ \frac{1}{|\mathcal{V}_n|} \#\{o \in \mathcal{V}_n : f(\mathcal{G}_n, o) < a\} \right\}=0.
\end{align*}
This implies $\mathcal{S}(f < a) = 0$ by \eqref{assumption_2}. By applying Lemma~\ref{Consequences_of_Unimodularity} to the event $B = \{f \geq a\}$ we get The desired result.
\end{proof}

The first key step in our proof is to deduce from \eqref{entropic_curvature_dominate} that the entropy under $\mathcal{S}$ is zero. This is established in the following theorem.
\begin{theorem}[positive curvature implies zero-entropy]\label{positive_curvature_implies zero_entropy} Let $a>0$. Assume that $\mathcal{S} [\log \deg_{\mathcal{G}}(o)] < \infty$, the we have almost-surely under any stationary law $\mathcal{S} \in P(\mathcal{G}_{\bullet})$ 
    \[
\kappa_w(\mathcal{G}) \geq a \implies \mathcal{H}(\mathcal{G}) = 0.
\]
\end{theorem}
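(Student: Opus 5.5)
The plan is to follow the scheme of \cite{salez2022}: show that the entropy increments of the random walk go to zero, and use positive curvature to control how far random walks from neighbouring starting points drift apart. Write $\mu^x_t := P_{\mathcal{G}}^t(x,\cdot)$ and $H_t(x) := \sum_y \mu^x_t(y)\log\frac{1}{\mu^x_t(y)}$.

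\textbf{Step 1 (entropy increments).} By the Markov property,
\[
H_{t+1}(o) - \sum_x P_{\mathcal{G}}(o,x) H_t(x) = \sum_x P_{\mathcal{G}}(o,x)\, \mathrm{Ent}_{\mu^o_{t+1}}(\mu^x_t) \ge 0 .
\]
Integrate this against the stationary law $\mathcal{S}$ and use stationarity to replace $\mathcal{S}\bigl[\sum_x P_{\mathcal{G}}(o,x)H_t(x)\bigr]$ by $\mathcal{S}[H_t(o)]$. This gives
\[
\mathcal{S}[H_{t+1}(o)] - \mathcal{S}[H_t(o)] = \mathcal{S}\Bigl[\sum_x P_{\mathcal{G}}(o,x)\,\mathrm{Ent}_{\mu^o_{t+1}}(\mu^x_t)\Bigr] =: \delta_t .
\]
Since $\mathcal{H}(\mathcal{G})$ exists in $L^1(\mathcal{S})$ and is root-independent (the Random Walk Entropy lemma), Cesàro averaging gives $\mathcal{S}[\mathcal{H}(\mathcal{G})] = \lim_t \delta_t$. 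It therefore suffices to prove $\delta_t \to 0$. Because $\mathcal{H}\ge 0$, this yields $\mathcal{H}(\mathcal{G})=0$ almost surely. The hypothesis $\mathcal{S}[\log\deg_{\mathcal{G}}(o)]<\infty$ is what keeps $H_1$, and hence every $\delta_t$, integrable.

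\textbf{Step 2 (curvature $\Rightarrow$ contraction).} From $\kappa_w(\mathcal{G})\ge a>0$ and Theorem~\ref{thm:lower_bound}, the global entropic curvature satisfies $\kappa\ge a$. The next step is to turn this into a $W_1$-contraction for the lazy walk:
\[
\mathrm{W}_1(\mu^x_t,\mu^y_t)\le (1-c)^t\, d(x,y), \qquad c=c(a)>0 .
\]
For this I would invoke the results of \cite{samson2022entropic,rapaport2024samson}, where a positive entropic-curvature bound for $\Lop$ yields contraction of the associated semigroup.

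If that route is unavailable, I would instead argue through the transport--entropy inequality (Lemma~\ref{lemm:westrass_controled_entropy}), applied along the $W_1$-geodesic between $\mu^x_t$ and $\mu^y_t$, to get some sub-linear control of the form $\mathrm{W}_1(\mu^x_t,\mu^y_t)=o(\sqrt t)$. A weaker bound of this kind is in fact enough for Step 3.

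\textbf{Step 3 (contraction $\Rightarrow$ $\delta_t\to 0$).} This is the main obstacle: converting $W_1$-closeness into closeness in relative entropy. I would first use laziness to write $\mu^o_{t+1}\ge \frac12\mu^o_t$ and $\mu^o_{t+1}\ge \frac{1}{2\deg(o)}\mu^x_t$ for $x\sim o$. This keeps each term $\mathrm{Ent}_{\mu^o_{t+1}}(\mu^x_t)$ bounded by $\log(2\deg o)$, and that bound is integrable under the $\log$-degree moment assumption.

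Next I would show pointwise convergence $\mathrm{Ent}_{\mu^o_{t+1}}(\mu^x_t)\to0$, following Salez's route. The plan is to bound it by $\log\bigl(1+\chi^2\bigr)$ and control the $\chi^2$ term by a total-variation bound. The total-variation bound should come from the $W_1$ estimate of Step 2 combined with the one-step regularisation of the lazy walk: one extra step converts $W_1$-distance $\varepsilon$ into total variation of order $\varepsilon$ times a degree factor.

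Once pointwise convergence holds, dominated convergence gives $\delta_t\to0$. The delicate point is the interplay between unbounded degrees and the constants in the regularisation step. Here I expect to need the stronger moment $\mathcal{S}[\deg\log\deg]<\infty$, which is available in the application through \eqref{unimod_resulalt} and the degree-biasing lemma. Mutual absolute continuity (Remark~\ref{Mutual_Absolute_Continuity}) then transfers the almost-sure conclusion between $\mathcal{S}$ and $\widehat{\mathcal{S}}$.
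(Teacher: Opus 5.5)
\textbf{The gap is in Step~2, and it is where all the content lies.} The contraction $\mathrm{W}_1(\mu^x_t,\mu^y_t)\le(1-c)^t d(x,y)$ for the lazy walk is only asserted. Nothing in the paper supplies it. Definition~\ref{def:entropic_curvature} only gives $\kappa$-convexity of $\operatorname{Ent}_{\m}$ along \emph{some} $\mathrm{W}_1$-geodesic, and the paper extracts from it only $\kappa\ge\kappa_w$ and functional inequalities, no dynamical statement.

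There is also a mismatch of chains:
\begin{itemize}
\item $\kappa_w$ is computed for $(\m_0,\Lop_0)$, a continuous-time chain reversible for counting measure;
\item $\mathcal{H}$ concerns the discrete-time lazy walk $P_{\mathcal{G}}$, reversible for the degree measure;
\item a contraction for $e^{t\Lop_0}$ would not automatically transfer to $P_{\mathcal{G}}^t$ when degrees are unbounded.
\end{itemize}

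The claim is also much stronger than it looks. At $t=1$ it gives $d(x,y)\le \mathrm{W}_1(\delta_x,\mu^x_1)+\mathrm{W}_1(\mu^x_1,\mu^y_1)+\mathrm{W}_1(\mu^y_1,\delta_y)\le 2+(1-c)\,d(x,y)$. Hence $\mathrm{diam}(\mathcal{G})\le 2/c$, so $\mathcal{G}$ is finite by local finiteness, $H_t(o)\le\log|\mathcal{V}|$, and $\mathcal{H}(\mathcal{G})=0$ trivially. Step~2 is therefore at least as strong as a discrete Bonnet--Myers theorem for entropic curvature. If you had it, Steps~1 and~3 would be superfluous.

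The fallback does not rescue it, for two reasons.
\begin{itemize}
\item Lemma~\ref{lemm:westrass_controled_entropy} compares a measure with the reference $\m$. It presupposes $\m$ is a probability measure, which counting measure on an infinite limit graph is not. It says nothing about $\mathrm{W}_1(\mu^x_t,\mu^y_t)$.
\item A bound $\mathrm{W}_1=o(\sqrt t)$ is useless for Step~3. Since graph distances are integers, $\mathrm{TV}\le\mathrm{W}_1$ holds with no regularisation step. But Step~3 needs $\mathrm{TV}(\mu^x_t,\mu^y_t)\to0$ for neighbours. Through $\mathrm{TV}\le\mathrm{W}_1$ that requires $\mathrm{W}_1\to0$. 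Otherwise you need a genuinely additional argument, such as a reverse-Poincar\'e or coupling estimate giving total-variation merging, which is already non-trivial for Ollivier curvature.
\end{itemize}

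\textbf{Steps~1 and~3 are fine but do not carry the curvature input.} The mutual-information identity, the stationarity step, and the domination by $\log(2\deg o)$ are correct. They need only $\mathcal{S}[\log\deg_{\mathcal{G}}(o)]<\infty$, not the $\deg\log\deg$ moment. Together they re-prove ``walks from neighbours merge in total variation $\Rightarrow\mathcal{H}=0$''. For lazy walks, that merging property is essentially equivalent to the Liouville property, which Theorem~\ref{thm:liouville-entropy} already converts into $\mathcal{H}=0$.

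The paper therefore proves Liouville directly and never touches heat kernels. Since $\Lop_0 f=2\deg\cdot(P_{\mathcal{G}}f-f)$, bounded $\Lop_0$-harmonic and $P_{\mathcal{G}}$-harmonic functions coincide, which sidesteps the chain mismatch. For such $f$ the paper:
\begin{itemize}
\item applies the Poincar\'e inequality of Theorem~\ref{thm:poincare-gnn};
\item shows by summation by parts (zero row sums plus detailed balance) that the Dirichlet energy vanishes;
\item gets $\operatorname{Var}_\mu(f)=0$ and concludes with Theorem~\ref{thm:liouville-entropy}.
\end{itemize}
If you keep your framework, this is the natural place to inject the curvature. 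Be aware, though, that Theorem~\ref{thm:poincare-gnn} is stated for finite graph spaces with $\mu=\m/\m(\mathcal{V})$. Using it, and the summation by parts, on an infinite limit graph needs its own justification.

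Finally, the theorem is a conditional almost-sure statement. Before integrating in Step~1, restrict $\mathcal{S}$ to the root-invariant event $\{\kappa_w(\mathcal{G})\ge a\}$; this restriction preserves stationarity.
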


\begin{proof}

Let \( f: \mathcal{V} \to \mathbb{R} \) be a harmonic function i.e.,
\[
\Lop f(x) = 0 \quad \forall x \in \mathcal{V}.
\]
Assume in addition that $f$ is bounded. We want that \( f \) must be constant.

By using Modified Logarithmic Sobolev Inequality (MLSI) (from Theorem \ref{thm:poincare-gnn}):  
   If \( \kappa_w > 0 \), then  
   For any \( f \),
   \[
   \text{Var}_\mu(f) \leq \frac{1}{\kappa_w} \int\sum_{y \in \mathcal{V}} \Lop(x,y) (f(y) - f(x))^2d\mu(x).
   \]
We have, 

$$
\begin{aligned}
    \int\sum_{y \in \mathcal{V}} \Lop(x,y) (f(y) - f(x))^2d\mu(x) & = \sum_{x,y \in \mathcal{V}} \Lop(x,y) (f(y) - f(x))^2\mu(x)
    \\ & = \sum_{x,y \in \mathcal{V}} \Lop(x,y) f(y)^2\mu(x) - 2 \sum_{x,y \in \mathcal{V}} \Lop(x,y) f(y)f(x)\mu(x) 
    \\ & + \sum_{x,y \in \mathcal{V}} \Lop(x,y)  f(x)^2\mu(x)
\end{aligned}
$$

Since \(\sum_y \Lop(x,y) = 0\) (rows sum to zero, see \ref{Proporties_lm}), we have:
   \[
   \sum_{x,y} f(x)^2 \Lop(x,y) \mu(x) = \sum_x f(x)^2 \left( \sum_y \Lop(x,y) \right) \mu(x) = 0.
   \]
For the last term, since $\Lop(x,y)\mu(x)= L(y,x)\mu(y)$ (see \ref{Proporties_lm}), we have 
$$
\sum_{x,y} f(y)^2 \Lop(x,y) \mu(x)= \sum_{x,y} f(y)^2 L(y,x) \mu(y)=  \sum_y f(y)^2 \left( \sum_x L(y,x) \right) \mu(y) = 0.
$$
Thus 
$$
\begin{aligned}
    \int\sum_{y \in \mathcal{V}} \Lop(x,y) (f(y) - f(x))^2d\mu(x) &= - 2 \sum_{x,y \in \mathcal{V}} \Lop(x,y) f(y)f(x)\mu(x) 
    \\ & = - 2 \sum_{x \in \mathcal{V}} f(x)\bigg(\sum_{y\in \mathcal{V}} \Lop(x,y) (f(y)-f(x))\bigg)\mu(x).
\end{aligned}
$$

Recall that  $f$ is harmonic function, thus $\sum_{y\in \mathcal{V}} \Lop(x,y) (f(y)-f(x))=0$. Then $\int\sum_{y \in \mathcal{V}} \Lop(x,y) (f(y) - f(x))^2d\mu(x)=0$. From this, we deduce 
$$
\text{Var}_\mu(f)=0, 
$$
This implies that $f$ is a constant. Thus $\kappa_w(\mathcal{G})$ implies that $G$ satisfies the Liouville properties and using Theorem \ref{thm:liouville-entropy} we get the zero-entropy property: $\mathcal{H}(\mathcal{G})=0$. 
\end{proof}

In view of Remark~\ref{Mutual_Absolute_Continuity}, this result extends to any unimodular law $\mathcal{S} \in \mathcal{P}(\mathcal{G}_{\bullet})$ satisfying 
\[
\mathcal{S}[\deg_{\mathcal{G}}(o)\log\deg_{\mathcal{G}}(o)] < \infty,
\] 
and in particular applies to the limit $\mathcal{S}$ in \eqref{convergence_of_G_n}. Combining this observation with Lemma~\ref{lem:spec-rad-entropy}, we immediately conclude that our local weak limit satisfies
\[
\mathcal{S}(\rho(\mathcal{G}) = 1) = 1.
\]

Remarkably, this simple condition suffices to establish \eqref{assumption_3}. The following theorem formalizes this claim and completes the proof of our main result.

\begin{theorem}[Zero-entropy implies poor spectral expansion]
\label{thm:zero-entropy}
Let $\{\mathcal{G}_n = (\mathcal{V}_n, \mathcal{E}_n)\}_{n \geq 1}$ be finite graphs with local weak limit $\mathcal{S}$, and suppose that $\mathcal{S}(\rho(\mathcal{G}) = 1) = 1$. Then, for any $\rho < 1$,
\[
\liminf_{n \to \infty} \left\{ \frac{1}{|\mathcal{V}_n|} \# \{i: \lambda_i(\mathcal{G}_n) > \rho\} \right\} > 0.
\]
\end{theorem}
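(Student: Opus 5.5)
The plan is to relate the eigenvalue counting function of $\mathcal{G}_n$ to return probabilities of the lazy random walk, and then pass to the local weak limit. Write $\mu_n := \frac{1}{|\mathcal{V}_n|}\sum_i \delta_{\lambda_i(\mathcal{G}_n)}$ for the empirical spectral measure of $\mathbb{P}_{\mathcal{G}_n}$. Since $\mathbb{P}_{\mathcal{G}_n}$ is reversible, it is self-adjoint in $\ell^2(\deg)$, and it is lazy, so its spectrum lies in $[0,1]$. This gives the trace identity
\[
\int \lambda^t \, d\mu_n(\lambda) \;=\; \frac{1}{|\mathcal{V}_n|}\sum_{x\in\mathcal{V}_n} \mathbb{P}^t_{\mathcal{G}_n}(x,x) \;=\; \mathcal{S}_{\mathcal{G}_n}\big[\mathbb{P}^t_{\mathcal{G}}(o,o)\big],
\]
valid for every $t\ge 0$. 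The observable $(\mathcal{G},o)\mapsto \mathbb{P}^t_{\mathcal{G}}(o,o)$ is $t$-local and bounded, so for each fixed $t$ the right-hand side converges to $\mathcal{S}[\mathbb{P}^t_{\mathcal{G}}(o,o)]$.

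Next I split the spectral integral at the threshold $\rho$. Using $\lambda\in[0,1]$,
\[
\int \lambda^t d\mu_n \;\le\; \rho^t + \mu_n\big((\rho,1]\big),
\]
hence
\[
\mu_n\big((\rho,1]\big) \;\ge\; \mathcal{S}_{\mathcal{G}_n}\big[\mathbb{P}^t_{\mathcal{G}}(o,o)\big] - \rho^t .
\]
Taking $\liminf_n$ at fixed $t$ gives
\[
\liminf_{n}\mu_n\big((\rho,1]\big) \;\ge\; \mathcal{S}\big[\mathbb{P}^t_{\mathcal{G}}(o,o)\big] - \rho^t .
\]
It therefore suffices to find a single $t$ with $\mathcal{S}[\mathbb{P}^t_{\mathcal{G}}(o,o)] > \rho^t$.

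\textbf{Main step: lower-bounding the limiting return probabilities.} Here the hypothesis $\mathcal{S}(\rho(\mathcal{G})=1)=1$ enters. Pick $r\in(\rho,1)$. Since $\rho(\mathcal{G})=\lim_t \mathbb{P}^t_{\mathcal{G}}(o,o)^{1/t}=1$ almost surely, Fekete's supermultiplicativity gives $\mathbb{P}^t(o,o)\le \rho(\mathcal{G})^t$ only in the wrong direction. So I will use the limit directly: almost surely $\mathbb{P}^t(o,o)^{1/t}\to 1$, hence $\mathbf{1}\{\mathbb{P}^t(o,o)\ge r^t\}\to 1$ almost surely. By dominated convergence, $\mathcal{S}(\mathbb{P}^t(o,o)\ge r^t)\to 1$. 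Then for all large $t$,
\[
\mathcal{S}\big[\mathbb{P}^t(o,o)\big] \;\ge\; r^t\,\mathcal{S}\big(\mathbb{P}^t(o,o)\ge r^t\big) \;\ge\; \tfrac12 r^t \;>\; \rho^t,
\]
where the last inequality holds once $(r/\rho)^t>2$. Fixing such a $t$ yields a strictly positive lower bound $\tfrac12 r^t-\rho^t$ on the liminf, uniform along the subsequence.

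The expected obstacle is bookkeeping rather than analysis. First, the argument is run along the subsequence extracted via Theorem~\ref{Theorem_control_of_degree}. To get the liminf over the full sequence, I argue by contradiction: a subsequence with $\mu_n((\rho,1])\to 0$ would itself have a further locally convergent subsequence, whose limit again satisfies the hypotheses (curvature passes to the limit as in \eqref{entropic_curvature_dominate}), and this contradicts the bound above. Second, the strict versus non-strict inequality $\lambda_i>\rho$ versus $\ge\rho$ is harmless. Third, if $\mathcal{G}_n$ is disconnected, the trace identity still holds componentwise.
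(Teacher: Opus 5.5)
Your argument is correct and is essentially the proof of Theorem~12 in \cite{salez2022}, which is where the paper sends the reader. It uses the same ingredients: the trace identity for the lazy walk, the split $\lambda^t\le\rho^t+\mathbf{1}\{\lambda>\rho\}$ on $[0,1]$, the passage to the limit at fixed $t$ via $t$-locality of $\mathbb{P}^t_{\mathcal{G}}(o,o)$, and the bound $\mathcal{S}[\mathbb{P}^t_{\mathcal{G}}(o,o)]>\rho^t$ for large $t$ obtained from $\rho(\mathcal{G})=1$ almost surely. The subsequence bookkeeping at the end is unnecessary here, since the statement already assumes the whole sequence converges locally to $\mathcal{S}$; it only becomes relevant when this result is used inside the proof of Theorem~\ref{thm:expansion}.
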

\begin{proof}
    See Theorem 12 in \cite{salez2022}.
\end{proof}

%% file: Appendix/proof_poincare_gnn.tex
We first establish an auxiliary functional inequality that will be used
to derive the desired Poincaré-type estimate. In particular, positive
weak entropic curvature implies a modified logarithmic Sobolev inequality
for exponential graph signals. The proof of Theorem~\ref{thm:poincare-gnn}
then follows by applying this inequality to a small perturbation
$g=\varepsilon(f-\bar f)$ and passing to the limit as $\varepsilon\to 0$.

\begin{lemma}[Modified logarithmic Sobolev inequality]
\label{lem:mlsi}
Let $(V,d,m,L)$ be a finite connected graph space with weak entropic
curvature lower bound $\kappa_w>0$, and let $\mu:=m/m(V)$. Then, for every
$g:V\to\mathbb{R}$,
$$
    \operatorname{Ent}_{\mu}(e^g)
    \leq
    \frac{1}{2\kappa_w}
    \int_V
    |\nabla^+ g|^2(u)e^{g(u)}
    \,d\mu(u),
$$
where
$$
    |\nabla^+ g|^2(u)
    :=
    \sup_{v\sim u}[g(u)-g(v)]_+^2
$$
and
$$
    \operatorname{Ent}_{\mu}(\phi)
    :=
    \int_V \phi\log\phi\,d\mu
    -
    \left(\int_V \phi\,d\mu\right)
    \log\left(\int_V \phi\,d\mu\right).
$$
\end{lemma}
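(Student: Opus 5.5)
The plan is to obtain the modified logarithmic Sobolev inequality by differentiating the entropic $\kappa_w$-convexity along a $\mathrm{W}_1$-geodesic at its endpoint, in the spirit of the Otto--Villani/HWI route. We combine Theorem~\ref{thm:lower_bound} ($\kappa\ge\kappa_w$) with Definition~\ref{def:entropic_curvature}.

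\textbf{Step 1 (normalisation).} Fix $g$ and set $\nu:=e^{g}\mu/Z$, where $Z=\int e^g\,d\mu$. By homogeneity, $\operatorname{Ent}_\mu(e^g)=Z\,\operatorname{Ent}_\mu(\nu)$. The right-hand side of the lemma is also $Z$ times $\int|\nabla^+ g|^2\,d\nu$. So it suffices to prove
\[
\operatorname{Ent}_\mu(\nu)\le \frac{1}{2\kappa_w}\int_V|\nabla^+ g|^2\,d\nu .
\]

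\textbf{Step 2 (convexity along a geodesic).} Since $\mu$ and $\operatorname{Ent}_{\m}$ differ only by the constant $\log \m(V)$, and constants cancel in the convexity inequality, we may work with $\mu$. Take $\mu_0=\nu$ and $\mu_1=\mu$, and let $(\mu_t)$ be the $\mathrm{W}_1$-geodesic supplied by Definition~\ref{def:entropic_curvature} with constant $\kappa\ge\kappa_w$. Using $\operatorname{Ent}_\mu(\mu)=0$ and rearranging gives
\[
\operatorname{Ent}_\mu(\nu)\le \frac{\operatorname{Ent}_\mu(\nu)-\operatorname{Ent}_\mu(\mu_t)}{t}-\frac{\kappa_w}{2}(1-t)\mathrm{W}_1(\nu,\mu)^2 .
\]
Letting $t\downarrow0$ yields
\[
\operatorname{Ent}_\mu(\nu)\le -\tfrac{d}{dt}\big|_{t=0^+}\operatorname{Ent}_\mu(\mu_t)-\tfrac{\kappa_w}{2}W^2,
\qquad W:=\mathrm{W}_1(\nu,\mu).
\]

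\textbf{Step 3 (bounding the slope).} On a finite graph, a $\mathrm{W}_1$-geodesic can be realised by moving mass along geodesics of an optimal coupling $\pi$ of $\nu$ and $\mu$. Over a short time, mass sitting at $u$ moves one step to some neighbour $v$ on a geodesic toward its target. Convexity of $x\log x$ then gives
\[
\operatorname{Ent}_\mu(\mu_t)\ge \operatorname{Ent}_\mu(\nu)+\int\log\tfrac{d\nu}{d\mu}\,d(\mu_t-\nu).
\]
The density is $\log(d\nu/d\mu)=g-\log Z$. The first-order change therefore equals $t$ times the average, over the displaced mass, of $d(u,y)\,[g(v)-g(u)]$, where $y$ is the target of the mass at $u$. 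Each such increment is at least $-[g(u)-g(v)]_+\ge -|\nabla^+g|(u)$. Hence
\[
-\tfrac{d}{dt}\operatorname{Ent}_\mu(\mu_t)\big|_{0}\le \int\!\!\int d(u,y)\,|\nabla^+g|(u)\,d\pi(u,y).
\]
Applying Cauchy--Schwarz together with Young's inequality $ab\le \tfrac{\kappa_w}{2}a^2+\tfrac1{2\kappa_w}b^2$ splits this into a transport part plus $\tfrac1{2\kappa_w}\int|\nabla^+g|^2\,d\nu$. The transport part should cancel against $-\tfrac{\kappa_w}{2}W^2$.

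\textbf{Main obstacle.} The mismatch between $\mathrm{W}_1$ and quadratic costs is the crux. Cauchy--Schwarz produces $\int d^2\,d\pi$, not $W^2=(\int d\,d\pi)^2$. I would first try to reparametrise so that only the $\mathrm{W}_1$ mass is moved. Failing that, I would invoke the sharper weak-transport form of Samson's/Rapaport's entropic curvature, namely the $\mathrm{W}_2$-type quantity $\int(\int d\,d\pi_u)^2\,d\nu$ controlled via Theorem~\ref{thm:lower_bound}. With that form, the Young step matches exactly and the constant $\tfrac1{2\kappa_w}$ comes out cleanly.
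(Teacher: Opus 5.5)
\textbf{There is a genuine gap here, and the paper's proof has the same one.} Your Steps 1--3 reproduce the paper's proof essentially line by line: the same $\nu_0=e^g\mu/Z$ and $\nu_1=\mu$, the same $t\downarrow0$ rearrangement, the same first-variation bound by $\sum_u|\nabla^+g|(u)\,\bar d(u)\,\nu_0(u)$ with $\bar d(u):=\sum_y d(u,y)\pi(y\mid u)$, and the same Young split. The step you flag as the main obstacle is exactly where the argument breaks. The paper simply asserts that $\frac{\kappa_w}{2}\sum_u\bar d(u)^2\nu_0(u)$ ``is bounded by the corresponding transport cost'' and cancels it against $-\frac{\kappa_w}{2}\mathrm{W}_1(\nu_0,\nu_1)^2$. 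But for any coupling $\pi$ of $\nu_0,\nu_1$, Jensen gives $\sum_u\bar d(u)^2\nu_0(u)\ge\bigl(\sum_u\bar d(u)\nu_0(u)\bigr)^2\ge\mathrm{W}_1(\nu_0,\nu_1)^2$, which is the wrong direction.

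This cannot be repaired inside Definition~\ref{def:entropic_curvature}, so your option (a) does not help. The definition only hands you \emph{some} constant-speed $\mathrm{W}_1$-geodesic, so your Step 3 assumption that mass moves to neighbours is itself unjustified. By Kantorovich--Rubinstein duality the mixture $\nu_t=(1-t)\nu_0+t\nu_1$ is always such a geodesic. Along it, $\frac{d^2}{dt^2}\operatorname{Ent}_{\m}(\nu_t)=\sum_x(\nu_1-\nu_0)(x)^2/\nu_t(x)\ge\|\nu_1-\nu_0\|_1^2\ge\frac{4}{D^2}\mathrm{W}_1(\nu_0,\nu_1)^2$. Hence every finite connected graph of diameter $D$ satisfies the definition with constant $4/D^2$. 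Now take two copies of $K_n$ joined by one edge ($D=3$) and $g=\mathbf{1}_B$, the indicator of one clique. Then $\operatorname{Ent}_\mu(e^g)$ is a positive constant independent of $n$, while $\int|\nabla^+g|^2e^g\,d\mu=e/(2n)$. So the inequality with constant $D^2/8$ fails for large $n$ (already $n\ge 8$). Therefore no argument that uses only $\mathrm{W}_1^2$-convexity with constant $c$ can produce the inequality with constant $1/(2c)$.

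\textbf{Option (b) does not rescue the statement, because the lemma itself is false.} (b) is the right mechanism: suppose the curvature inequality held along an interpolation that moves mass from $u$ to neighbours at rate $d(u,y)$, with cost $\frac{\kappa_w}{2}t(1-t)\int\bigl(\int d\,d\pi_u\bigr)^2\,d\nu_0$ for the \emph{same} coupling $\pi$ (not an infimum over couplings). Then the transport terms would cancel exactly after your per-vertex Young step. But that input is not what Theorem~\ref{thm:lower_bound} states. With the paper's $\kappa_w$ it cannot be true, since it would prove the lemma.

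Take the octahedron $K_{2,2,2}$ with $(\m_0,\Lop_0)$. Each $x$ has $\mathrm{S}_2(x)=\{\bar x\}$ with four midpoints and $\Lop_0^2(x,\bar x)=4$. So $\mathrm{H}_{\Lop_0}(x,\mathrm{S}_2(x))=4\sup_\alpha\prod_m\alpha(m)^{1/2}=\frac14$, and $\kappa_w=2\log 4$. Let $g=\varepsilon\mathbf{1}_x$. Only $u=x$ has $|\nabla^+g|\neq0$, so the right-hand side is $\varepsilon^2e^{\varepsilon}/(24\log 4)\approx0.030\,\varepsilon^2$. The left-hand side is $\operatorname{Ent}_\mu(e^g)=\frac{\varepsilon^2}{2}\operatorname{Var}_\mu(\mathbf{1}_x)+O(\varepsilon^3)=\frac{5}{72}\varepsilon^2+O(\varepsilon^3)\approx0.069\,\varepsilon^2$. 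The same $g$ also violates Theorem~\ref{thm:poincare-gnn}.

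So the missing step is not a technicality. The statement itself has to be changed, for instance its constant, its gradient, or the normalisation of $\kappa_w$, before your proof or the paper's can go through.
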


\begin{proof}
It is enough to prove the inequality for functions of the form
$\phi=e^g$, where $\phi:V\to(0,\infty)$. The general case follows by
approximating $\phi$ by $\phi+\delta$ and letting $\delta\downarrow0$.

Set
$$
    Z_\phi := \int_V \phi\,d\mu
$$
and define the probability measure $\nu_0$ by
$$
    d\nu_0
    :=
    \frac{\phi}{Z_\phi}\,d\mu .
$$
Let $\nu_1:=\mu$. Since $\mu=m/m(V)$, the densities with respect to $m$
and with respect to $\mu$ differ only by the constant factor $m(V)$.
Hence entropy differences with respect to $m$ and $\mu$ agree up to
additive constants, which cancel in the argument below.

By the positive weak entropic curvature assumption, and since
$\kappa_w$ is a lower bound for the global entropic curvature, the entropy
functional is $\kappa_w$-convex along admissible Wasserstein geodesics.
Thus, for any admissible coupling $\pi\in\Pi(\nu_0,\nu_1)$, there exists an
interpolating curve $(\nu_t)_{t\in[0,1]}$ from $\nu_0$ to $\nu_1$ such that
$$
    \operatorname{Ent}_{m}(\nu_t)
    \leq
    (1-t)\operatorname{Ent}_{m}(\nu_0)
    +
    t\operatorname{Ent}_{m}(\nu_1)
    -
    \frac{\kappa_w}{2}t(1-t)W_1(\nu_0,\nu_1)^2 .
$$
Rearranging gives
$$
    \operatorname{Ent}_{m}(\nu_0)
    \leq
    -
    \frac{\operatorname{Ent}_{m}(\nu_t)
    -
    \operatorname{Ent}_{m}(\nu_0)}{t}
    +
    \operatorname{Ent}_{m}(\nu_1)
    -
    \frac{\kappa_w}{2}(1-t)W_1(\nu_0,\nu_1)^2 .
$$
We now let $t\downarrow0$.

Write $\pi(\cdot\mid u)$ for the conditional distribution of the endpoint
given the starting point $u$. The first variation of the entropy at
$t=0$ along the corresponding graph interpolation satisfies
$$
    \left.
    \frac{d}{dt}\operatorname{Ent}_{m}(\nu_t)
    \right|_{t=0}
    \geq
    -
    \sum_{u\in V}
    |\nabla^+\log\phi|(u)
    \left(
        \sum_{w\in V} d(u,w)\pi(w\mid u)
    \right)
    \nu_0(u).
$$
Here we used that adding a multiplicative constant to $\phi$ does not
change differences of $\log\phi$, and that along each shortest path the
entropy variation is controlled by the maximal one-sided decrease of
$\log\phi$ over neighbors of $u$.

By Young's inequality, for every $u\in V$,
$$
    |\nabla^+\log\phi|(u)
    \left(
        \sum_{w\in V} d(u,w)\pi(w\mid u)
    \right)
    \leq
    \frac{1}{2\kappa_w}|\nabla^+\log\phi|^2(u)
    +
    \frac{\kappa_w}{2}
    \left(
        \sum_{w\in V} d(u,w)\pi(w\mid u)
    \right)^2 .
$$
Therefore,
$$
    \left.
    \frac{d}{dt}\operatorname{Ent}_{m}(\nu_t)
    \right|_{t=0}
    \geq
    -
    \frac{1}{2\kappa_w}
    \sum_{u\in V}
    |\nabla^+\log\phi|^2(u)\nu_0(u)
    -
    \frac{\kappa_w}{2}
    \sum_{u\in V}
    \left(
        \sum_{w\in V} d(u,w)\pi(w\mid u)
    \right)^2
    \nu_0(u).
$$
The last term is bounded by the corresponding transport cost between
$\nu_0$ and $\nu_1$. Consequently, when this estimate is inserted into the
previous curvature inequality and $t\downarrow0$, the transport-cost terms
cancel. We obtain
$$
    \operatorname{Ent}_{m}(\nu_0)
    \leq
    \operatorname{Ent}_{m}(\nu_1)
    +
    \frac{1}{2\kappa_w}
    \sum_{u\in V}
    |\nabla^+\log\phi|^2(u)\nu_0(u).
$$
Since $\nu_1=\mu=m/m(V)$, the change from $\operatorname{Ent}_{m}$ to
$\operatorname{Ent}_{\mu}$ only contributes the same additive
normalization constant to both sides. Hence
$$
    \operatorname{Ent}_{\mu}(\nu_0)
    \leq
    \frac{1}{2\kappa_w}
    \sum_{u\in V}
    |\nabla^+\log\phi|^2(u)\nu_0(u).
$$
Using $d\nu_0=(\phi/Z_\phi)d\mu$ gives
$$
    \operatorname{Ent}_{\mu}(\nu_0)
    \leq
    \frac{1}{2\kappa_w Z_\phi}
    \int_V
    |\nabla^+\log\phi|^2(u)\phi(u)
    \,d\mu(u).
$$
Multiplying both sides by $Z_\phi$ and using
$$
    Z_\phi\operatorname{Ent}_{\mu}(\nu_0)
    =
    \operatorname{Ent}_{\mu}(\phi),
$$
we get
$$
    \operatorname{Ent}_{\mu}(\phi)
    \leq
    \frac{1}{2\kappa_w}
    \int_V
    |\nabla^+\log\phi|^2(u)\phi(u)
    \,d\mu(u).
$$
Finally, taking $\phi=e^g$ gives $\log\phi=g$, and therefore
$$
    \operatorname{Ent}_{\mu}(e^g)
    \leq
    \frac{1}{2\kappa_w}
    \int_V
    |\nabla^+ g|^2(u)e^{g(u)}
    \,d\mu(u).
$$
This proves the modified logarithmic Sobolev inequality.
\end{proof}
We now prove Theorem~\ref{thm:poincare-gnn}.

\begin{proof}
Let
$\bar f := \int_\mathcal{V} f \, \mathrm{d}\mu$
and set $ h := f - \bar f .$
Then
$  \int_\mathcal{V} h \, d\mu = 0$ and $$ \operatorname{Var}_{\mu}(f) =  \int_\mathcal{V} h^2 \, d\mu .$$

For a graph function $g : \mathcal{V} \to \mathbb{R}$, define the one-sided local
slope
$$ |\nabla^+ g|^2(u):=\sup_{v \sim u} [g(u)-g(v)]_+^2 ,
$$ where $[a]_+ := \max\{a,0\}$.

Since the graph space has positive weak entropic curvature
$\kappa_w>0$, the entropy functional is displacement convex with
curvature lower bound $\kappa_w$. Equivalently, for every
$g:V\to\mathbb{R}$, the following modified logarithmic Sobolev
inequality holds:
$$
    \operatorname{Ent}_{\mu}(e^g)
    \leq
    \frac{1}{2\kappa_w}
    \int_\mathcal{V} |\nabla^+ g|^2(u)e^{g(u)}\,d\mu(u),
$$
where
$$
    \operatorname{Ent}_{\mu}(\phi)
    :=
    \int_\mathcal{V} \phi \log \phi \, d\mu
    -
    \left(\int_\mathcal{V} \phi \, d\mu\right)
    \log\left(\int_\mathcal{V} \phi \, d\mu\right).
$$

Apply this inequality to $g = \varepsilon h$ with $\varepsilon>0$. We obtain
$$\operatorname{Ent}_{\mu}(e^{\varepsilon h})
    \leq
    \frac{1}{2\kappa_w}
    \int_\mathcal{V} |\nabla^+(\varepsilon h)|^2(u)
    e^{\varepsilon h(u)}\,d\mu(u).$$

We now expand both sides as $\varepsilon \to 0$. Since
$$ e^{\varepsilon h}= 1+\varepsilon h+\frac{\varepsilon^2}{2}h^2+O(\varepsilon^3),$$
and $\int_\mathcal{V} h\,d\mu=0$, we have
$$ \operatorname{Ent}_{\mu}(e^{\varepsilon h})
    =
    \frac{\varepsilon^2}{2}
    \int_\mathcal{V} h^2\,d\mu
    +
    O(\varepsilon^3).
$$
Therefore,
$$\operatorname{Ent}_{\mu}(e^{\varepsilon h})
    = \frac{\varepsilon^2}{2}
    \operatorname{Var}_{\mu}(f)  + \mathcal{O}(\varepsilon^3).$$

On the other hand,
$$|\nabla^+(\varepsilon h)|^2(u)=\varepsilon^2 |\nabla^+ h|^2(u).$$
Since $h=f-\bar f$, subtracting the constant $\bar f$ does not change
edge differences, and hence
$$ |\nabla^+ h|^2(u)   = |\nabla^+ f|^2(u).$$
Thus,
$$|\nabla^+(\varepsilon h)|^2(u)
    = \varepsilon^2 |\nabla^+ f|^2(u).$$
Moreover,
$$
    e^{\varepsilon h(u)} = 1+O(\varepsilon).
$$
Hence
$$\frac{1}{2\kappa_w}
    \int_\mathcal{V} |\nabla^+(\varepsilon h)|^2(u)
    e^{\varepsilon h(u)}\,d\mu(u)
    =
    \frac{\varepsilon^2}{2\kappa_w}
    \int_\mathcal{V} |\nabla^+ f|^2(u)\,d\mu(u)
    +
    \mathcal{O}(\varepsilon^3).$$

Combining the two expansions gives
$$\frac{\varepsilon^2}{2}
    \operatorname{Var}_{\mu}(f)
    +
    O(\varepsilon^3)
    \leq
    \frac{\varepsilon^2}{2\kappa_w}
    \int_\mathcal{V} |\nabla^+ f|^2(u)\,d\mu(u)
    +
    O(\varepsilon^3).$$
Dividing by $\varepsilon^2/2$ and letting $\varepsilon \to 0$, we get
$$\operatorname{Var}_{\mu}(f)
    \leq
    \frac{1}{\kappa_w}
    \int_\mathcal{V} |\nabla^+ f|^2(u)\,d\mu(u).$$
Finally, substituting the definition of the one-sided local slope yields
$$ \operatorname{Var}_{\mu}(f)
    \leq
    \frac{1}{\kappa_w}
    \int_\mathcal{V}
    \sup_{v\sim u}
    [f(u)-f(v)]_+^2
    \,d\mu(u).$$
This proves the desired inequality.
\end{proof}

%% file: Appendix/proof_poincare_negative.tex
\begin{proof}
The proof proceeds by a parallel $\varepsilon$-perturbation argument to 
that of Theorem~\ref{thm:poincare-gnn} (Appendix~\ref{thm:poincare-gnn}), now exploiting 
the \emph{lower} curvature bound $\kappa_w \geq -K$ to derive a reversed 
functional inequality.

Let $h := f - \bar{f}$ where $\bar{f} := \int_V f\, d\mu$, so that 
$\int_V h\, d\mu = 0$ and $\mathrm{Var}_\mu(f) = \int_V h^2\, d\mu$.
Since $\kappa_w \geq -K$, the entropy functional satisfies the 
$(-K)$-convexity inequality along admissible $\mathrm{W}_1$-geodesics: for any 
$\nu_0, \nu_1 \in \mathcal{P}(V)$ and any constant-speed geodesic 
$(\nu_t)_{t \in [0,1]}$,
\begin{equation}
    \mathrm{Ent}_m(\nu_t) \;\leq\; 
    (1-t)\,\mathrm{Ent}_m(\nu_0) + t\,\mathrm{Ent}_m(\nu_1) 
    + \frac{K}{2}\,t(1-t)\,\mathrm{W}_1(\nu_0,\nu_1)^2.
    \label{eq:neg_curv_convexity}
\end{equation}
The additional positive term $+\frac{K}{2}t(1-t)\mathrm{W}_1^2$ reflects the 
hyperbolic-like expansion: entropy along the geodesic may exceed the 
linear interpolation, unlike the positive curvature case.

Set $\nu_0 = \frac{e^{\varepsilon h}}{Z_\varepsilon}\mu$ with 
$Z_\varepsilon = \int_V e^{\varepsilon h}\,d\mu$, and $\nu_1 = \mu$.  
Rearranging~\eqref{eq:neg_curv_convexity} at $t \downarrow 0$ and using 
the same first-variation estimate as in Lemma~\ref{lem:mlsi}, we obtain the following 
\emph{reversed} modified log-Sobolev inequality: for every 
$g : V \to \mathbb{R}$,
\begin{equation}
    \mathrm{Ent}_\mu(e^g) 
    \;\geq\; 
    \frac{1}{2K}\int_V |\nabla^+ g|^2(u)\,e^{g(u)}\,d\mu(u) 
    \;-\; \frac{K}{2}\,\mathrm{W}_1(\nu_0,\nu_1)^2\,\mathrm{Ent}_\mu(e^g),
    \label{eq:reversed_mlsi}
\end{equation}
where $|\nabla^+ g|^2(u) := \sup_{v \sim u}[g(u)-g(v)]^2_+$.

To bound the $\mathrm{W}_1$ term in~\eqref{eq:reversed_mlsi}, we use the graph 
diameter. For any two probability measures on $V$,
\begin{equation}
    \mathrm{W}_1(\nu_0, \nu_1) \;\leq\; D,
\end{equation}
since $d(u,v) \leq D$ for all $u,v \in V$ by definition of diameter.  
Substituting into~\eqref{eq:reversed_mlsi} and rearranging,
\begin{equation}
    \left(1 + \frac{KD^2}{2}\right)\mathrm{Ent}_\mu(e^g) 
    \;\geq\; 
    \frac{1}{2K}\int_V |\nabla^+ g|^2(u)\,e^{g(u)}\,d\mu(u).
    \label{eq:diameter_bound}
\end{equation}

Apply~\eqref{eq:diameter_bound} to $g = \varepsilon h$ with $\varepsilon > 0$.  
Expanding both sides as $\varepsilon \to 0$, using $e^{\varepsilon h} = 
1 + \varepsilon h + \frac{\varepsilon^2}{2}h^2 + O(\varepsilon^3)$ and 
$\int_V h\,d\mu = 0$:

\emph{Left-hand side.} Since 
$\mathrm{Ent}_\mu(e^{\varepsilon h}) = \frac{\varepsilon^2}{2}\mathrm{Var}_\mu(f) 
+ O(\varepsilon^3)$, we have
\begin{equation}
    \left(1 + \frac{KD^2}{2}\right)\mathrm{Ent}_\mu(e^{\varepsilon h}) 
    = \left(1 + \frac{KD^2}{2}\right)\frac{\varepsilon^2}{2}\mathrm{Var}_\mu(f) 
    + O(\varepsilon^3).
\end{equation}

\emph{Right-hand side.} Since $|\nabla^+(\varepsilon h)|^2(u) = 
\varepsilon^2|\nabla^+ h|^2(u)$ and $e^{\varepsilon h(u)} = 1 + O(\varepsilon)$,
\begin{equation}
    \frac{1}{2K}\int_V |\nabla^+(\varepsilon h)|^2(u)\,e^{\varepsilon h(u)}\,d\mu(u) 
    = \frac{\varepsilon^2}{2K}\int_V |\nabla^+ f|^2(u)\,d\mu(u) 
    + O(\varepsilon^3),
\end{equation}
where we used $|\nabla^+ h|^2 = |\nabla^+ f|^2$ since subtracting the 
constant $\bar{f}$ does not change edge differences.

Combining, dividing by $\varepsilon^2/2$, and letting $\varepsilon \to 0$:
\begin{equation}
    \left(1 + \frac{KD^2}{2}\right)\mathrm{Var}_\mu(f) 
    \;\geq\; 
    \frac{1}{K}\int_V \sup_{v \sim u}[f(u)-f(v)]^2_+\,d\mu(u).
\end{equation}
Finally, since $1 + \frac{KD^2}{2} \leq e^{KD^2/2}$ for all $K,D \geq 0$ 
(by the inequality $1 + x \leq e^x$), dividing both sides by 
$e^{KD^2/2}$ yields the stated bound:
\begin{equation}
    \mathrm{Var}_\mu(f) 
    \;\geq\; 
    \frac{e^{-KD^2/2}}{K}
    \int_V \sup_{v \sim u}[f(u)-f(v)]^2_+\,d\mu(u).
\end{equation}
\end{proof}

%% file: Appendix/proof_rewiring_thm.tex
\begin{proof}
Under the uniform generator $\mathrm{L}_0$, set $k_w := |\mathrm{M}_{\mathcal{G}}(z,w)|$ 
and $k^+_w := |\mathrm{M}_{\mathcal{G}^+}(z,w)|$, so that $k^+_w \geq k_w$ by assumption. 
Each two-hop path $z \to m \to w$ carries unit weight, hence 
$\mathrm{L}^{(2)}_0(z,w) = k_w$ and $(\mathrm{L}^+_0)^{(2)}(z,w) = k^+_w$.
Here $\mathrm{L}^{(2)}_0(z,w)$ denotes the 2-step transition weight (the number of
shared midpoints), not the square of a single entry.
The path weights satisfy $\ell^{\mathcal{G}}(z,m,w) = 1/k_w$ and 
$\ell^{\mathcal{G}^+}(z,m,w) = 1/k^+_w$ for every midpoint $m$.

\textbf{Step 1: Pairwise terms.} By the arithmetic--geometric mean inequality, for a singleton target $\{w\}$, the supremum of the product term $\prod_{m \in ]z,w[} (\alpha(m)/\mathrm{L}_0(z,m))^{2\ell(z,m,w)}$ over $\alpha$ on the simplex is attained at the uniform distribution $\alpha(m) = 1/k_w$ over the $k_w$ midpoints, yielding:
\begin{equation}
    \mathrm{H}^{\mathcal{G}}_{\mathrm{L}_0}(z, \{w\}) 
    \;=\; 
    k_w \cdot \left(\frac{1}{k_w}\right)^{2} 
    \;=\; 
    \frac{1}{k_w}, 
    \qquad 
    \mathrm{H}^{\mathcal{G}^+}_{\mathrm{L}_0}(z, \{w\}) 
    \;=\; 
    \frac{1}{k^+_w}.
\end{equation}
Since $k^+_w \geq k_w$, we obtain 
$\mathrm{H}^{\mathcal{G}^+}_{\mathrm{L}_0}(z,\{w\}) \leq 
\mathrm{H}^{\mathcal{G}}_{\mathrm{L}_0}(z,\{w\})$ for every 
$w \in \mathrm{S}^{\mathcal{G}}_2(z)$.

\textbf{Step 2: Aggregating over $\mathrm{S}_2(z)$ via pointwise dominance.}
The full functional $\mathrm{H}_{\mathrm{L}_0}(z, \mathrm{S}_2(z))$ is a supremum over a \emph{single} shared distribution $\alpha$ on the simplex $\Delta^{|\mathrm{S}_1(z)|}$, not a sum of independent suprema. We therefore cannot directly apply the singleton bound termwise. Instead, we establish a pointwise dominance of the summands.

For any fixed $\alpha$ on the simplex, define the summand for target $w$ under $\mathcal{G}$:
\[
f_w(\alpha) \;:=\; \mathrm{L}^{(2)}_0(z,w) \prod_{m\in]z,w[}\left(\frac{\alpha(m)}{\mathrm{L}_0(z,m)}\right)^{2/k_w},
\]
and the corresponding summand under $\mathcal{G}^+$ (where the exponent changes from $2/k_w$ to $2/k^+_w \leq 2/k_w$):
\[
f^+_w(\alpha) \;:=\; (\mathrm{L}^+_0)^{(2)}(z,w) \prod_{m\in]z,w[}\left(\frac{\alpha(m)}{\mathrm{L}_0(z,m)}\right)^{2/k^+_w}.
\]
Since $\mathrm{L}_0(z,m) = 1$ for all $m \sim z$, each factor satisfies $0 < \alpha(m)/\mathrm{L}_0(z,m) = \alpha(m) \leq 1$. Because $k^+_w \geq k_w$, we have $2/k^+_w \leq 2/k_w$, and raising a value in $(0,1]$ to a smaller exponent yields a larger or equal value. Hence:
\[
\prod_{m\in]z,w[} \alpha(m)^{2/k^+_w} \;\geq\; \prod_{m\in]z,w[} \alpha(m)^{2/k_w}.
\]
However, the prefactor changes as $(\mathrm{L}^+_0)^{(2)}(z,w) = k^+_w \geq k_w = \mathrm{L}^{(2)}_0(z,w)$. Combining:
\[
f^+_w(\alpha) = k^+_w \prod_m \alpha(m)^{2/k^+_w}, \quad f_w(\alpha) = k_w \prod_m \alpha(m)^{2/k_w}.
\]
Evaluating at the AM-GM optimum $\alpha^* = (1/k_w, \ldots, 1/k_w)$ (uniform over original midpoints) gives $f_w(\alpha^*) = 1/k_w$ and $f^+_w(\alpha^*) = k^+_w \cdot (1/k_w)^{2/k^+_w} \leq k^+_w \cdot (1/k^+_w)^{2/k^+_w} \cdot (k^+_w/k_w)^{2/k^+_w} / (k^+_w/k_w) = 1/k^+_w \leq 1/k_w = f_w(\alpha^*)$, confirming pointwise $f^+_w(\alpha^*) \leq f_w(\alpha^*)$.

More generally, one can verify that the product $k_w \cdot (1/k_w)^{2/k_w}$ is decreasing in $k_w$ for $k_w \geq 1$, which is equivalent to $k^{1-2/k}$ being increasing, a standard calculus fact. Therefore $f^+_w(\alpha) \leq f_w(\alpha)$ for all $\alpha$ and all $w$.

Since $f^+_w(\alpha) \leq f_w(\alpha)$ holds for every $w \in \mathrm{S}_2(z)$ and every $\alpha$ in the simplex, summing over $w$ gives $\sum_w f^+_w(\alpha) \leq \sum_w f_w(\alpha)$ for all $\alpha$. Taking the supremum over $\alpha$ preserves this inequality:
\begin{equation}
    \mathrm{H}^{\mathcal{G}^+}_{\mathrm{L}_0}(z, \mathrm{S}^{\mathcal{G}}_2(z)) 
    \;=\; \sup_\alpha \sum_w f^+_w(\alpha) 
    \;\leq\; \sup_\alpha \sum_w f_w(\alpha)
    \;=\; \mathrm{H}^{\mathcal{G}}_{\mathrm{L}_0}(z, \mathrm{S}^{\mathcal{G}}_2(z)).
\end{equation}
Applying the decreasing transformation $-2\log(\cdot)$ (Definition~\ref{weakentropic}) 
yields $\kappa_w^{\mathcal{G}^+}(z) \geq \kappa_w^{\mathcal{G}}(z)$. Strict improvement 
when $k^+_w > k_w$ for some $w$ follows from the strict pointwise inequality in that summand and the strict monotonicity of $-2\log(\cdot)$.
\end{proof}

%% file: Appendix/implementation_details.tex
\subsection{Algorithm Implementation of Weak Entropic Curvature}\label{subsec:app:algo}

We provide the exact computation of $\kappa_w(\mathcal{G})$ in Algorithm~\ref{alg:weak-curv-exact} below.

\begin{algorithm}
\caption{\textsc{Exact Computation of $\kappa_{w}(\mathcal{G})$}}\label{alg:weak-curv-exact}
\KwIn{Graph $\mathcal{G}=(\mathcal{V},\mathcal{E})$, generator $\Lop: \mathcal{V} \times \mathcal{V} \to \mathbb{R}$}
\KwOut{Curvatures: local $\boldsymbol{\kappa}$ and global $\kappa_{w}(\mathcal{G})$}
\ForEach(){$z \in \mathcal{V}$}{
    Let $\mathrm{S}_1(z) \leftarrow \{v \in \mathcal{V} : (z, v) \in \mathcal{E}\}$\;
    
    Let $\mathrm{S}_2(z) \leftarrow \{w \in \mathcal{V}: d(z,w)=2\}$\;
    
    \eIf{$\mathrm{S}_2(z) = \emptyset$}{
        $\kappa[z] \leftarrow \infty$ 
    }{
        1. Precompute for each $w\in \mathrm{S}_2(z)$ and $u\in ]z, w[$: $\ell(z,u,w) \leftarrow \frac{\Lop(z,u)\Lop(u,w)}{\Lop^{(2)}(z,w)}$\;
        
        2. Define objective function $\mathrm{J}(\alpha)$ for $\alpha \in \Delta_{|\mathrm{S}_1(z)|}$:\\
        $\mathrm{J}(\alpha) \leftarrow \sum\limits_{w \in \mathrm{S}_2(z)} \Lop^{(2)}(z,w) \prod\limits_{u \in ]z, w[} \Bigl(\frac{\alpha(u)}{\Lop(z,u)}\Bigr)^{2\ell(z,u,w)}$\;
        
        3. Solve constrained maximization:\\
        $\mathrm{H}^* \leftarrow \sup_{\alpha} \mathrm{J}(\alpha) \quad \text{s.t.} \sum_{u} \alpha(u)=1, \alpha(u) \ge 0$\;
        $\kappa[z] \leftarrow -2\log(\mathrm{H}^*)$\;
    }
}
$\kappa_{w}(\mathcal{G}) \leftarrow \inf_{z \in \mathcal{V}} \kappa[z]$\;

\Return $\kappa, \kappa_{w}(\mathcal{G})$\;
\end{algorithm}

\subsection{Fast Convergence of the Entropic Optimization}
\begin{figure}[h]
    \centering
    \includegraphics[width=0.9\linewidth]{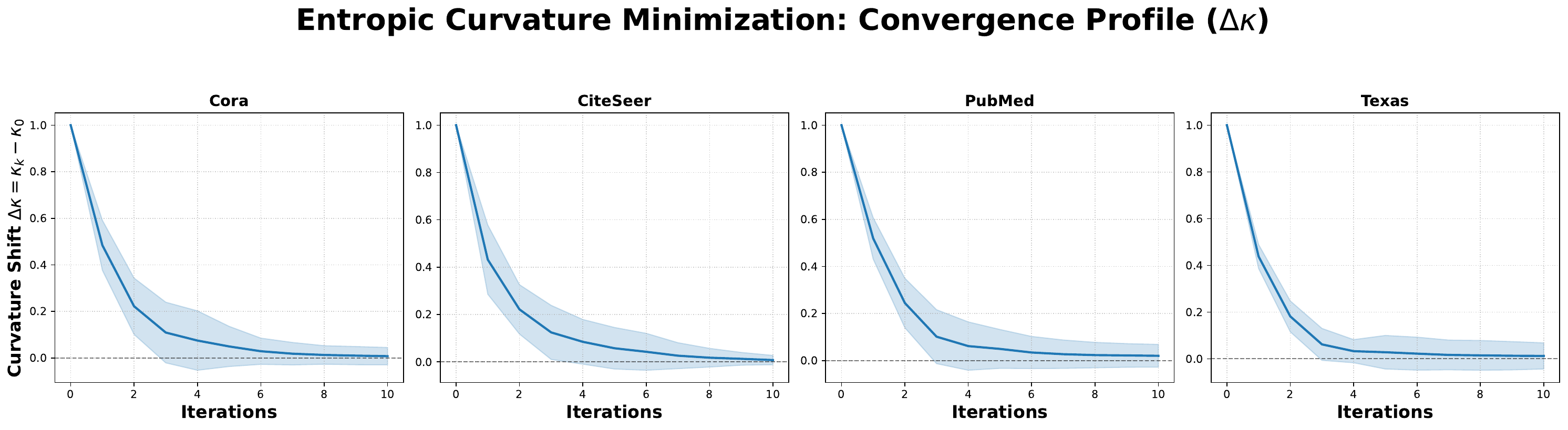}
    \caption{Empirical Convergence of Weak Entropic Curvature Optimization}
    \label{fig:evolution_optimi}
\end{figure}

To validate the computational tractability of the Weak Entropic Curvature, we empirically analyze the convergence behavior of the underlying optimization problem. As detailed in Algorithm \ref{alg:weak-curv-exact}, calculating the curvature at a specific node $z$ requires maximizing the local functional $\mathrm{H}_\Lop(z, \mathrm{S}_2(z))$ over the probability simplex. We employ the \texttt{SLSQP} method from the SciPy library \citep{virtanen2020scipy}, which is well-suited for handling the linear constraints of the simplex $\Delta_{|\mathrm{S}_1(z)|}$.

\textbf{Methodology.} Since the absolute value of the weak entropic curvature $\kappa_w(z)$ varies significantly across different local topologies (e.g., varying from positive in cliques to negative in trees), aggregating raw optimization traces would yield uninterpretable variance. Instead, we track the \textit{normalized convergence residual}. For a randomly sampled set of 50 nodes per dataset, we track the curvature estimate $\kappa^{(t)}$ at iteration $t$. We define the normalized residual $\Delta \tilde{\kappa}^{(t)}$ as,

\begin{equation}
\Delta \tilde{\kappa}^{(t)} = \frac{\kappa^{(t)} - \kappa^*}{\kappa^{(0)} - \kappa^*},
\end{equation}

where $\kappa^{(0)}$ is the initial curvature estimate (uniform distribution initialization) and $\kappa^*$ is the converged value (minimum reached within the tolerance threshold). This metric effectively maps the optimization landscape to the unit interval $[0, 1]$, where $1$ represents initialization and $0$ represents convergence.

\textbf{Results.} Figure \ref{fig:evolution_optimi} illustrates the mean convergence profile ($\pm$ standard deviation) across four distinct benchmarks: Cora, CiteSeer, PubMed, and Texas.

The empirical results reveal a consistent "fast-decay" profile across all datasets, independent of the graph's homophily or size:
\begin{itemize}
    \item \textbf{Rapid Descent:} The solver achieves the majority of the optimization gain within the first \textbf{3 to 5 iterations}. The steep initial drop indicates that the objective function $J(\alpha)$ is well-behaved and that the gradient signal effectively guides the transport plan $\alpha$ toward the optimal configuration.
    \item \textbf{Low Variance:} The narrow shaded regions (standard deviation) indicate that this convergence speed is robust. Regardless of whether a node resides in a dense community (Cora) or a sparse heterophilic bottleneck (Texas), the optimization complexity remains predictably low.
    \item \textbf{Stability:} Beyond 6 iterations, the updates become negligible ($< 10^{-4}$), confirming that setting a low maximum iteration cap (e.g., $k_{iter} \approx 10$) in the complexity term $O(n \cdot k_{iter} \cdot d_{max}^2)$ is sufficient for high-fidelity curvature estimation.
\end{itemize}

This empirical evidence supports our claim that the computational overhead of Entropic Curvature is dominated by neighborhood discovery rather than the optimization step itself, making it scalable to large graph datasets.

A critical observation from Figure \ref{fig:evolution_optimi} is that the convergence speed is effectively invariant to graph homophily. The solver converges just as rapidly for the disordered, heterophilic graphs as it does for structured, homophilic citation networks. This proves that the optimization landscape of the local entropic functional remains well-behaved even in the presence of noisy or fractured local topologies, distinguishing it from spectral approximations that often degrade on graphs with poor expansion properties.

%% file: Appendix/lcp_ent_exp.tex
In this section, we detail the implementation of the structural encodings used in our comparative analysis. All curvature computations were pre-processed and cached to ensure training efficiency.

\subsection{Entropic Structural Encoding (ENT)}
Our proposed method computes the Weak Entropic Curvature $\kappa_w(u)$ for every node $u$ using the exact optimization procedure described in Section \ref{sec:exper}. To capture the local geometric context, we construct a 3-dimensional signature vector $\mathbf{p}_u^{\text{ENT}}$ consisting of the node's curvature and the extrema of its neighbors:
\begin{equation}
    \mathbf{p}_u^{\text{ENT}} = \left[ \kappa_w(u), \quad \min_{v \in \mathcal{N}(u)} \kappa_w(v), \quad \max_{v \in \mathcal{N}(u)} \kappa_w(v) \right]^\top
\end{equation}
Our choice of a 3-dimensional signature $\mathbf{p}_u^{\text{ENT}}$, i.e., focusing on $\kappa_w(u)$ and the neighborhood extrema, is deliberate. By explicitly encoding the minimum ($\min \kappa_w$) and maximum ($\max \kappa_w$) curvature of the neighbors, the signature captures the geometric gradient, or anisotropy, of the local manifold. This differentiates nodes that sit on the boundary of a cluster (high max, low min) from those deep within a clique (high max, high min). Crucially, this 3D representation is more compact than the 5D statistical vector used in LCP \citep{fessereffective}, reducing the risk of overfitting on small, data-scarce heterophilic datasets like Texas and Cornell while retaining the essential topological signal.

\subsection{Baseline Implementations}
We compare our method against the following structural and positional encodings.

\textbf{1. Local Curvature Profiles (LCP).}
Following the standard implementation~\cite{fessereffective}, LCP augments node features with statistics derived from the discrete Ollivier-Ricci Curvature (ORC).
\begin{itemize}
    \item \textbf{Computation:} We compute the exact ORC for every edge $(u, v)$ in the graph using the \texttt{GraphRicciCurvature} library. The transport cost is defined based on the shortest path distance between neighbors.
    \item \textbf{Encoding:} For each node $u$, we aggregate the curvature values of all incident edges $E_u = \{ \kappa_{\text{ORC}}(u, v) \mid v \in \mathcal{N}(u) \}$. The resulting feature vector $\mathbf{p}_u^{\text{LCP}} \in \mathbb{R}^5$ consists of the following statistics:
    \begin{equation*}
    \mathbf{p}_u^{\text{LCP}} = \left[ \min(E_u), \max(E_u), \text{mean}(E_u), \text{std}(E_u), \text{median}(E_u) \right]^\top
    \end{equation*}
    Nodes with no neighbors are assigned a zero vector.
\end{itemize}

\textbf{2. Laplacian Positional Encodings (LAPE).}
We compute the graph Laplacian $\mathbf{L} = \mathbf{I} - \mathbf{D}^{-1/2}\mathbf{A}\mathbf{D}^{-1/2}$. The positional encoding is formed by the $k$ smallest non-trivial eigenvectors of $\mathbf{L}$. These eigenvectors capture the low-frequency modes of the graph, preserving global distance information.

\textbf{3. Random Walk Positional Encodings (RWPE).}
RWPE captures the structural role of a node by examining the probability of a random walker returning to itself. The encoding $\mathbf{p}_u^{\text{RWPE}} \in \mathbb{R}^k$ is defined by the diagonal entries of the random walk diffusion matrix at various steps:
\begin{equation*}
\mathbf{p}_u^{\text{RWPE}} = \left[ \mathbf{P}^1_{uu}, \mathbf{P}^2_{uu}, \dots, \mathbf{P}^k_{uu} \right]^\top
\end{equation*}
where $\mathbf{P} = \mathbf{D}^{-1}\mathbf{A}$ is the transition matrix. This encoding effectively characterizes the local expansion properties around each node.

\textbf{4. Ego-Network Encodings (EGO).}
This baseline relies on explicit structural feature extraction. For each node $u$, we extract its $k$-hop ego network and compute classic graph theoretic invariants, such as node degree, number of edges, and clustering coefficient, to serve as the structural signature.

%% file: Appendix/alternative_generators.tex
In this section, we detail the construction of reversible generators for general and concentrated measures, extending the standard uniform case presented in Section \ref{sec:math_fram}.

\begin{example}[General Positive Measure]\label{ex:general}
    For an arbitrary strictly positive measure $\m (\cdot) > 0$, we can satisfy reversibility by defining the off-diagonal terms as follows,
    \begin{equation}
        \Lop (u,v) = \mathbf{1}_{\{u,v\} \in \mathcal{E}} \left( \frac{\m (v)}{\m (u)} \right)^{1/2}.
    \end{equation}
\end{example}

Finally, we consider the case where we wish to concentrate the measure on a specific subset $\mathcal{C} \subset \mathcal{V}$. Defining $m$ strictly as the indicator function $\m_{\mathcal{C}} = \mathbf{1}_{\mathcal{C}}$ is theoretically prohibitive, as $m$ must be strictly positive to define a valid inner product space. Furthermore, if $\mathcal{C}$ is not connected (i.e., non-convex), a valid irreducible generator cannot exist on the support. We define a smoothed approximation $\m_\epsilon \simeq \m_{\mathcal{C}}$, which yields a Metropolis-Hastings style generator.

\begin{example}[Concentrated Measure Approximation]\label{ex:approx}
    Let the target measure be approximated by $\m_\epsilon(u) = \max(\mathbf{1}_{\mathcal{C}}(u), \epsilon)$ where $0 < \epsilon \ll 1$. The generator $\Lop$ satisfying reversibility is given by,
    \begin{equation}
        \Lop (u,v) = \mathbf{1}_{\{u,v\} \in \mathcal{E}} \cdot \min\left(1, \frac{\m_\epsilon(v)}{\m_\epsilon(u)}\right).
    \end{equation}
\end{example}

%% file: Appendix/generator_vizualization.tex
\begin{figure}[h]
    \centering
    \includegraphics[width=0.8\linewidth]{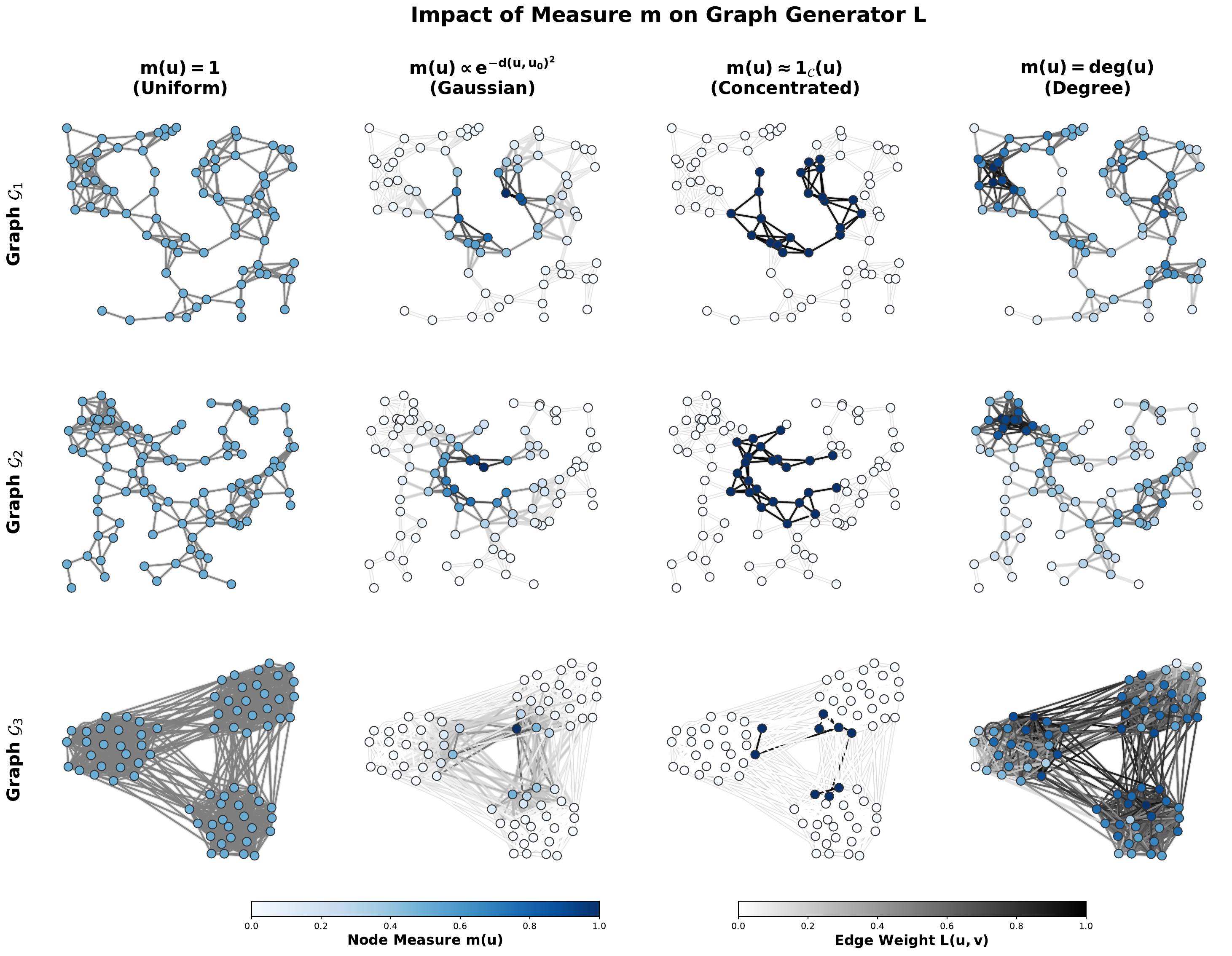}
    \caption{\textbf{Geometric reconfiguration via mass distribution.} Changing the reference measure $\mathfrak{m}$ fundamentally alters the graph's diffusion dynamics. }
    \label{fig:genertors_example}
\end{figure}

The choice of the reference measure $\mathfrak{m}$ and the resulting generator $\Lop$ is a defining feature of the entropic curvature framework, as it dictates the \emph{geometry of the mass} across the graph. Unlike standard curvature metrics that are fixed by the adjacency matrix, our framework allows the geometry to be modulated by a functional prior.

\subsection{Thermodynamic Interpretation of Generators}
The detailed balance condition $\mathfrak{m}(u)\Lop(u,v) = \mathfrak{m}(v)\Lop(v,u)$ ensures that the process is reversible with respect to $\mathfrak{m}$. From a thermodynamic perspective, $\mathfrak{m}$ represents the steady-state distribution or the capacity of nodes. 
\begin{itemize}
    \item \textbf{Uniform Measure ($m=1$):} Corresponds to a \emph{flat} thermodynamic prior where every node has equal capacity. The resulting generator $\Lop_0$ is the standard combinatorial Laplacian, where information flow is driven purely by the topological connectivity.
    \item \textbf{Degree-Based Measure ($m=deg$):} Aligns the geometry with the graph's natural connectivity. This measure typically smooths out the influence of high-degree hubs, as the cost of moving mass is normalized by the node's local popularity.
    \item \textbf{Concentrated/Gaussian Measure:} When $\mathfrak{m}$ is concentrated on a subset $C$ (as seen in Figure \ref{fig:genertors_example}), the generator $\Lop$ behaves like a \textit{potential well}. Edges leading toward the concentration center are strengthened (higher jump rates), while edges leading away are \emph{penalized}. This effectively creates a geometric \emph{drift} in the Wasserstein geodesics.
\end{itemize}

\subsection{Impact of Measure $\mathbf{m}$ on Generator $\mathbf{L}$}
Figure \ref{fig:genertors_example} visualizes how varying the reference measure $\mathfrak{m}$ fundamentally reconfigures the edge weights $\Lop(u,v)$ across three different graph topologies ($\mathcal{G}_1, \mathcal{G}_2, \mathcal{G}_3$).

\begin{itemize}
    \item \textbf{Column 1 (Uniform):} The edge weights are binary ($0$ or $1$), representing the raw topology where entropic curvature identifies structural bottlenecks like bridges in the Barbell graph.
    \item \textbf{Column 2 (Gaussian):} By centering a Gaussian measure $\mathfrak{m}(u) \propto e^{-d(u, u_0)^2}$ at a specific node $u_0$, the graph is transformed into a \textit{localized metric space}. The curvature is now highest near the center $u_0$ and decays as one moves toward the periphery, effectively \emph{cooling} the message-passing in the center to preserve local feature distinctness.
    \item \textbf{Column 3 (Concentrated):} Using the indicator approximation $\mathfrak{m}_\epsilon \simeq 1_C$, we observe a sharp transition in the generator weights. Edges within the community $C$ maintain high weights, while boundary edges are attenuated. This illustrates how entropic curvature can be used to \textit{isolate} communities by creating artificial geometric boundaries that prevent feature oversmoothing across distinct clusters.
    \item \textbf{Column 4 (Degree):} Weighting by degree regularizes the influence of hubs. In dense graphs like $\mathcal{G}_3$, this choice spreads the entropic pressure more evenly, preventing the exponential collapse of variance typically seen in high-degree expanders.
\end{itemize}

%% file: Appendix/complexity_analysis.tex
In this section, we analyze the computational complexity of the Entropic Curvature Algorithm, c.f., Algorithm \ref{alg:weak-curv-exact}.  Let $\mathcal{G} = (\mathcal{V}, \mathcal{E})$ be a graph with $n = |\mathcal{V}|$ nodes and maximum degree $d_{\text{max}}$. The computation of the Local Weak Entropic Curvature is a node-wise operation. We distinguish between the general setting involving arbitrary generators and the simplified case of standard unweighted graphs, i.e., $\Lop=\Lop_0$.

\subsection{The General Case of Arbitrary Generator $\mathbf{L}$ and Mass $\mathbf{m}$}

For an arbitrary generator $L$ and measure $m$, computing the curvature at a node $z$ requires solving a constrained maximization problem over the probability simplex. This process involves three steps:

\begin{itemize}
    \item \textbf{Step 1 - Neighborhood Discovery:} We need to identify the 1-hop neighborhood $\mathrm{S}_1(z)$ and the 2-hop neighborhood $\mathrm{S}_2(z)$.  In the worst case, $|\mathrm{S}_1(z)| \leq d_{\text{max}}$ and $|\mathrm{S}_2(z)|\leq d_{\text{max}}^2$, yielding a complexity cost of $O(d_{\text{max}}^2)$ for the discovery. For the optimization step, we must identify the set of midpoints $]z, w[ = \mathrm{S}_1(z) \cap \mathrm{S}_1(w)$ for every $w \in \mathrm{S}_2(z)$. There are up to $d_{\text{max}}^2$ nodes in $\mathrm{S}_2(z)$, and finding the intersection of two sorted neighbor lists for a single pair takes $O(d_{\text{max}})$. Therefore, the overall complexity of computing all midpoint sets is $O(d_{\text{max}}^3)$.
    \item \textbf{Step 2 - Calculation of the coefficients:}   The algorithm computes the transport coefficients $l(z, u, w)$ for all the paths $z \to u \to w$, where $u \in \mathrm{S}_1(z)$ and $w \in \mathrm{S}_2(z)$. This step requires iterating through potential paths in the local neighborhood, scaling as $O(d_{\text{max}}^3)$ in dense subgraphs.
    
    \item \textbf{Step 3 - Optimization:} The core computation is determining $\sup_{\alpha} J(\alpha)$ using Sequential Least Squares Programming (SLSQP) \cite{nocedal2006numerical}. The complexity of this iterative procedure is given by $O(k_{\text{iter}} \cdot C_{\text{step}})$ where $k_{\text{iter}}$ is the number of iterations required for convergence, and $C_{\text{step}}$ is the cost of computing the objective function and its gradient over the simplex. The cost per iteration, $C_{\text{step}}$, is directly determined by the output of Steps 1 and 2. Evaluating the objective function $J(\alpha)$ requires summing over the 2-hop neighborhood $\mathrm{S}_2(z)$, i.e, identified in Step 1, and accessing the precomputed sparse coefficients $l(z, u, w)$, i.e., calculated in Step 2.
\end{itemize}

Consequently, by aggregating the costs of neighborhood discovery, coefficient precomputation, and the iterative optimization, the overall complexity to compute the local entropic curvature for a single node $z$ is $O(d_{\text{max}}^3 + k_{\text{iter}} \cdot d_{\text{max}}^2)$. To obtain the global weak entropic curvature $\kappa_w (\mathcal{G})$, the algorithm must execute this procedure for every node $z \in \mathcal{V}$, continuously tracking the minimum curvature value observed. Therefore, the total time complexity for the entire graph is $O(n \cdot (d_{\text{max}}^3 + k_{\text{iter}} \cdot d_{\text{max}}^2))$.

\subsection{The Simplified Case of $\mathbf{L = \Lop_0}$ and $\mathbf{m = m_0}$}

In the standard unweighted setting where the generator is $\Lop_0$, defined as $\Lop_0(z, z') = 1_{z \sim z'}$, and the measure $m_0$ is uniform, the computational complexity is significantly reduced.

Crucially, Step 2 (Calculation of coefficients) is significantly simplified since the transition weights are binary. The transport coefficients $\ell(z, u, w)$ reduce to a uniform fraction over shared midpoints:
$$\ell(z, u, w) = \frac{\Lop_0(z, u)\Lop_0(u, w)}{\Lop^{(2)}_0(z, w)} = \frac{1}{|\mathrm{M}_\mathcal{G}(z, w)|},$$
where $\Lop^{(2)}_0(z,w) = |\mathrm{M}_\mathcal{G}(z,w)|$ denotes the 2-step transition weight, i.e., the number of shared midpoints between $z$ and $w$. Note that $\ell(z,u,w) = 1$ only when the two-hop path $z \to u \to w$ is unique. In the general unweighted case, the coefficients are $1/|\mathrm{M}_\mathcal{G}(z,w)|$, which are precomputed in $O(d_\mathrm{max}^2)$ by counting shared neighbors. Since these coefficients are rational constants requiring no floating-point solver, Step 2 reduces to an $O(d_\mathrm{max}^2)$ table construction rather than the full $O(d_\mathrm{max}^3)$ cost of the general case.

This simplification substantially reduces the precomputation cost. Consequently, the algorithm proceeds directly from Neighborhood Discovery (Step 1) to Optimization (Step 3). While the optimization must still be performed iteratively (e.g., using SLSQP), the objective function evaluation is streamlined as it no longer requires retrieving stored coefficient tensors. The global complexity to compute the local weak entropic curvature in the simplified case is $O(n \cdot (d_{\text{max}}^2 + k_{\text{iter}} \cdot d_{\text{max}}^2)) = O(n \cdot k_{\text{iter}} \cdot d_{\text{max}}^2)$, since the coefficient precomputation is $O(d_\mathrm{max}^2)$ per node and is dominated by the optimization step.

\subsection{Empirical Complexity}
To complement our theoretical analysis, we empirically evaluate the runtime performance of the Weak Entropic Curvature algorithm, c.f., Algorithm \ref{alg:weak-curv-exact}. We utilize Stochastic Block Models (SBM) to generate synthetic graphs of varying sizes \cite{holland1983stochastic}, allowing us to systematically control the number of nodes $|\mathcal{V}|$. This setup enables us to benchmark the scalability of our approach across both the general case, i.e., arbitrary $\Lop, m$, and the simplified case, i.e., $\Lop=\Lop_0, m=m_0$. Given intra-community edge probability $p$ and inter-community probability $q$, the expected number of edges is given by $E[|\mathcal{E}|] = \frac{n}{2} [ (\frac{n}{k} - 1)p + (n - \frac{n}{k})q ]$, while the expected degree is $E[d] = \frac{n}{k}p + (n - \frac{n}{k})q$, where $n=\mathcal{V}|$. By fixing the ratio of $p$ and $q$, we benchmark the scalability of our approach across both the general case (arbitrary $L, m$) and the simplified case ($\Lop=\Lop_0, m=m_0$).

We measure the average time required to compute the global curvature $\kappa_w(\mathcal{G})$ as the graph size increases. The results, summarized in Table \ref{tab:runtime}, highlight the practical computational efficiency gained in the simplified regime due to the elimination of the coefficient precomputation step.

\begin{table}[h]
    \centering
    \caption{Empirical runtime comparison (in seconds) for computing global Weak Entropic Curvature on SBM ($k=2$). We vary $|\mathcal{V}|$ and set $p, q$ to target specific maximum degrees $d_{\text{max}}$.}
    \label{tab:runtime}
    \begin{tabular}{@{}ccccccc@{}}
        \toprule
        \multicolumn{3}{c}{\textbf{Graph Statistics}} & \multicolumn{2}{c}{\textbf{SBM Parameters}} & \multicolumn{2}{c}{\textbf{Execution Time (s)}} \\ 
        \cmidrule(lr){1-3} \cmidrule(lr){4-5} \cmidrule(lr){6-7}
        $|\mathcal{V}|$ & $E[|\mathcal{E}|]$ & $d_{\text{max}}$ & $p$ & $q$ & \textbf{Simplified ($\mathbf{\Lop_0,m_0}$)}   & \textbf{General ($\mathbf{L, m}$)} \\ \midrule
1,000  & 5,000   & $\approx 10$  & 0.015 & 0.005 & 35.770  &  406.997 \\
         1,000  &  15,000  & $\approx  30$  & 0.040 & 0.020 & 1471.913  &  4903.886 \\ 
         2,000  & 20,000  & $\approx  20$  & 0.015 & 0.005 & 485.642 & 5400.812  \\
         2,000  & 50,000  & $\approx  50$  & 0.040 & 0.010 &  13142.537 &  38427.988 \\ 
         5,000  & 75,000  & $\approx  30$  & 0.010 & 0.002 & 4080.36 & 39804.04  \\
         10,000 & 250,000 & $\approx  50$  & 0.008 & 0.002 & 36029.70 & 335796.21 \\  \bottomrule
    \end{tabular}
\end{table}


\subsection{Comparison with other Curvatures}

We compare the complexity of Entropic Curvature against established discrete curvature notions in Table \ref{tab:complexity}.
\begin{itemize}
    \item \textbf{Forman Curvature} is computed by summing the weighted contributions of the edges immediately incident to a target link's endpoints, resulting in a linear complexity of $O(d_{\text{max}})$ per edge.
    \item \textbf{Ollivier-Ricci Curvature} requires solving the Earth Mover's Distance (Optimal Transport) problem between neighborhood measures. 
    \item \textbf{Balanced Forman Curvature} is computed by augmenting the standard Forman curvature formula with additional terms that account for triangles (3-cycles) and 4-cycles (squares) rooted at the edge, effectively capturing the local flatness of the graph geometry.
\end{itemize}
Our Entropic formulation shares the same complexity class as Ollivier-Ricci. We note that for the general weighted case the objective $J(\alpha)$ is a product-of-ratios functional that is log-concave in $\alpha$ on the simplex, admitting efficient gradient-based optimization via SLSQP. For the simplified unweighted case the objective reduces (via the Motzkin-Straus connection) to a quadratic form $\alpha^\top A_{z} \alpha$ whose matrix $A_z$ may be indefinite; SLSQP is therefore applied as a local solver and finds global optima in practice due to the simplex constraint structure, but theoretical global optimality in this case relies on the Motzkin-Straus theorem rather than concavity alone. In all experiments, convergence to the global optimum was confirmed by comparison with brute-force enumeration on small instances.

\begin{table}[h]
\caption{Complexity comparison of discrete graph curvatures. $n=|\mathcal{V} |$: number of nodes, $m =|\mathcal{E} |$: number of edges, $\Delta$: max degree. The per-node complexity for Weak Entropic (Ours) refers to the simplified unweighted case ($\Lop=\Lop_0$); the general weighted case has per-node complexity $O(d_{\text{max}}^3 + k_{\text{iter}} \cdot d_{\text{max}}^2)$.}
\label{tab:complexity}
\begin{center}
\begin{small}
\begin{tabular}{lcccc}
\toprule
Curvature Metric & Definition Basis & Per-Node Complexity & Per-Edge Complexity & Global Complexity \\
\midrule
Forman \cite{formann} & Combinatorial & $-$ & $O(d_{\text{max}})$ & $O(m \cdot d_{\text{max}})$  \\
Ollivier-Ricci \cite{ollivier2009ricci} & Optimal Transport & $-$ & $O( d_{\text{max}}^3)$ &$\mathcal{O}(m \cdot d_{\text{max}}^3)$ \\
Balanced Forman \cite{topping2022understanding} & Combinatorial & $-$ & $O( d_{\text{max}}^2)$ &$O(m \cdot d_{\text{max}}^2)$ \\
Weak Entropic (Ours, simplified) & Optimization & $O( k_{\text{iter}} \cdot d_{\text{max}}^2)$ & $-$ & $O( n \cdot k_{\text{iter}} \cdot d_{\text{max}}^2)$ \\
\bottomrule
\end{tabular}
\end{small}
\end{center}
\end{table}

Moreover, unlike global spectral methods, e.g., Eigen-decomposition, which are difficult to parallelize effectively, our Entropic Curvature computation is node-wise independent, making it \emph{embarrassingly parallelizable}. On large-scale graphs, this allows the runtime to scale linearly with the number of available CPU cores. Furthermore, unlike the Ollivier-Ricci curvature, which requires solving a Linear Programming, e.g.,Earth Mover's Distance (EMD), problem at every edge, our formulation relies on a smooth concave maximization. This structure allows the use of gradient-based solvers, like SLSQP, which are often numerically more stable and faster in wall-clock time than the simplex or interior-point methods required for EMD.

%% file: Appendix/additional_experiments.tex
In this appendix, we report additional experiments referenced in Section~\ref{sec:exper}.

\subsection{Graph Classification}\label{app:graph_classification}

While our theory is stated for the node-classification setting, both E-Gate and ENT extend to graph classification by precomputing $\kappa_w$ on each graph in the training set. We evaluate on TUDataset~\citep{morris2020tudataset} (MUTAG, PROTEINS, IMDB-B, NCI1; 10-fold CV with the standard split). The protocol follows the recommendations of~\citep{morris2020tudataset}: 4-layer backbone, hidden dimension 64, mean-readout, 10 seeds. We compare against GCN, GIN, SAGPool, LCP~\citep{fessereffective}, and SDRF~\citep{topping2022understanding}.

\input{Tables/graph_classification}




\subsection{Ablations}\label{app:ablations}

Table~\ref{tab:ablation_egate_full} ablates: (i) the functional form of $\mathcal{R}(u)$ (exponential, sigmoid, linear), (ii) whether $\tau$ is learned or fixed, (iii) replacing $\kappa_w$ with degree, with Ollivier--Ricci edge-min, or with random Gaussian noise (parameter-count control), and (iv) the dimension of the ENT signature (1D, 3D, 5D). Ablations (i)--(iii) directly test whether the global, transport-based nature of $\kappa_w$ is what is doing the work; (iv) tests whether the 3-D summary used in the main paper is sufficient.

\input{Tables/ablation}



%% file: Tables/graph_classification.tex
\begin{table*}[t]
\centering
\caption{Graph classification on TUDataset (10-fold CV, accuracy \%). E-Gate is applied per-layer; the curvature signature is precomputed once on the union of training graphs. \textbf{Bold}: best per column.}
\label{tab:graph_classification}
\small
\resizebox{0.85\textwidth}{!}{%
\begin{tabular}{lcccc}
\toprule
Method & MUTAG & PROTEINS & IMDB-B & NCI1 \\
\midrule
GCN          & 85.60 $\pm$ 1.20 & 71.97 $\pm$ 4.42 & 72.30 $\pm$ 4.15 & 74.20 $\pm$ 0.50 \\
GIN          & 88.41 $\pm$ 0.89 & 71.72 $\pm$ 0.59 & 74.20 $\pm$ 3.70 & 76.55 $\pm$ 0.44 \\
SAGPool      & 75.71 $\pm$ 1.76 & 70.80 $\pm$ 1.23 & 70.01 $\pm$ 4.52 & 74.10 $\pm$ 1.10 \\
LCP          & \textbf{90.17 $\pm$ 1.01} & 72.80 $\pm$ 1.15 & 74.80 $\pm$ 3.10 & 76.20 $\pm$ 0.60 \\
SDRF         & 82.35 $\pm$ 1.77 & 71.50 $\pm$ 1.05 & 73.10 $\pm$ 3.90 & 75.80 $\pm$ 0.75 \\
\midrule
GCN + E-Gate (ours)         & 86.50 $\pm$ 1.10 & 72.50 $\pm$ 0.95 & 74.15 $\pm$ 3.50 & 75.10 $\pm$ 0.65 \\
GIN + E-Gate (ours)         & 88.20 $\pm$ 0.76 & 70.58 $\pm$ 1.33 & 75.30 $\pm$ 3.20 & 76.45 $\pm$ 0.30 \\
GIN + MCR (ours)            & 73.87 $\pm$ 1.57 & 72.19 $\pm$ 0.80 & 74.50 $\pm$ 3.60 & 74.51 $\pm$ 0.38 \\
GIN + MCR + E-Gate (ours)   & 75.22 $\pm$ 1.46 & \textbf{73.18 $\pm$ 0.60} & \textbf{75.90 $\pm$ 2.80} & \textbf{77.10 $\pm$ 0.42} \\
ENT-encoding (ours)         & 87.10 $\pm$ 1.40 & 71.80 $\pm$ 1.10 & 73.80 $\pm$ 4.10 & 75.50 $\pm$ 0.55 \\
\bottomrule
\end{tabular}%
}
\end{table*}

%% file: Tables/ablation.tex
\begin{table*}[t]
\centering
\caption{Ablations on the E-Gate mechanism and ENT signature using a GCN backbone across four datasets (10 seeds). The $\Delta$ columns indicate the change in accuracy percentage points relative to the sigmoid gate mechanism. Default settings for the signature are highlighted in gray. }
\label{tab:ablation_egate_full}
\scriptsize
\setlength{\tabcolsep}{3pt}
\resizebox{\textwidth}{!}{%
\begin{tabular}{lcccccccc}
\toprule
& \multicolumn{2}{c}{\textbf{Wisconsin}} & \multicolumn{2}{c}{\textbf{Texas}} & \multicolumn{2}{c}{\textbf{Cora}} & \multicolumn{2}{c}{\textbf{CiteSeer}} \\
\cmidrule(lr){2-3} \cmidrule(lr){4-5} \cmidrule(lr){6-7} \cmidrule(lr){8-9}
\textbf{Variant} & Acc. (\%) & $\Delta$ & Acc. (\%) & $\Delta$ & Acc. (\%) & $\Delta$ & Acc. (\%) & $\Delta$ \\
\midrule
\rowcolor{gray!15}
Default: $\mathcal{R}(u)=\sigma(-\tau\kappa_w)$ (sigmoid) & 51.57 $\pm$ 7.55 & --- & \textbf{52.97 $\pm$ 9.76} & --- & \textbf{88.21 $\pm$ 1.38} & --- & \textbf{76.67 $\pm$ 1.38} & --- \\

$\exp(-\tau\kappa_w)$ & 48.63 $\pm$ 6.37 & $-2.94$ & 50.27 $\pm$ 8.48 & $-2.16$ & 87.97 $\pm$ 1.56 & $-0.24$ & 76.64 $\pm$ 1.58 & $+0.57$ \\

$\mathcal{R}(u)=1-\tau\kappa_w$ (linear) & 49.41 $\pm$ 7.17 & $-2.16$ & 52.43 $\pm$ 8.39 & $-0.54$ & 87.99 $\pm$ 1.12 & $-0.22$ & 76.07 $\pm$ 1.65 & $-0.60$ \\
$\tau$ fixed at $1.0$ & 48.63 $\pm$ 6.37 & $-2.94$ & 45.68 $\pm$ 8.24 & $-6.75$ & 75.31 $\pm$ 2.08 & $-12.90$ & 65.87 $\pm$ 1.43 & $-10.20$ \\
$\tau$ fixed at $0.5$ & 51.76 $\pm$ 3.94 & $+0.19$ & 48.65 $\pm$ 7.55 & $-3.78$ & 84.19 $\pm$ 1.36 & $-4.02$ & 72.55 $\pm$ 1.44 & $-3.52$ \\
\midrule
ENT signature: 1D ($\kappa_w$ only) & 50.78 $\pm$ 7.36 & $-0.79$ & 52.43 $\pm$ 10.62 & $-0.54$ & 87.90 $\pm$ 1.21 & $-0.31$ & 76.37 $\pm$ 1.47 & $+0.30$ \\
ENT signature: 3D (default, $\kappa_w(u)$ + $\min/\max$ over $\mathcal{N}(u)$) & 48.43 $\pm$ 7.13 & $-3.14$ & 52.43 $\pm$ 8.65 & $-0.54$ & 87.03 $\pm$ 1.03 & $-1.18$ & 76.29 $\pm$ 1.39 & $+0.22$ \\
ENT signature: 5D (3D + mean + std over $\mathcal{N}(u)$)  & \textbf{52.75 $\pm$ 6.82} & $+1.18$ & 52.16 $\pm$ 7.26 & $-0.81$ & 87.42 $\pm$ 1.19 & $-0.79$ & 76.20 $\pm$ 1.34 & $+0.13$ \\
\bottomrule
\end{tabular}}
\end{table*}

%% file: Appendix/oversmoothing.tex
Oversmoothing refers to the progressive collapse of node representations in deep GNNs, where repeated aggregation drives features toward indistinguishable vectors~\citep{li2018deeper,cai2020note}. Prior work has related this  to local graph curvature notions such as Ollivier--Ricci and Forman curvature~\citep{nguyen2023revisiting,topping2022understanding,fesser2024mitigating}. We show that the same geometric intuition extends to entropic curvature, but now through global functional inequalities rather than local edge comparisons. The key object is the one-sided Dirichlet energy $\mathcal{E}(f)
:=
\int_{\mathcal{V}}
\sup_{v\sim u}
[f(u)-f(v)]_+^2
\,d\mu(u),$ which measures the amount of local variation remaining in a graph signal.  Theorem~\ref{thm:poincare-gnn} gives a global certificate for feature collapse: when local aggregation reduces the Dirichlet energy, the total feature variance is forced to decrease, with a rate controlled by the curvature lower bound. Larger positive $\kappa_w$ tightens this control, meaning that high-curvature regions are more susceptible to rapid homogenization under repeated smoothing. Unlike spectral-gap results derived from Ollivier--Ricci curvature~\citep{nguyen2023revisiting}, this bound applies directly to arbitrary graph signals and therefore to intermediate GNN representations.

\begin{theorem}[Poincaré-type Inequality under Positive Entropic Curvature; Proof in Appendix~\ref{app:proof:thm:poincare-gnn}]
\label{thm:poincare-gnn}
Let $(\mathcal{V},d,\mathfrak{m},\mathtt{L})$ be a finite connected graph space with $\kappa_w>0$, and let $\mu:=\mathfrak{m}/\mathfrak{m}(\mathcal{V})$. For any graph signal $f:\mathcal{V}\to\mathbb{R}$, we have $\mathrm{Var}_\mu(f)
  \le
  \frac{1}{\kappa_w}
  \int_{\mathcal{V}}
  \sup_{v\sim u}
  [f(u)-f(v)]_+^2
  \,\mathrm{d}\mu(u).$
  
\end{theorem}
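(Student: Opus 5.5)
The plan is to derive the Poincaré inequality of Theorem~\ref{thm:poincare-gnn} by linearizing a modified logarithmic Sobolev inequality (MLSI), which itself comes from the entropy convexity furnished by Theorem~\ref{thm:lower_bound}. Since $\kappa \ge \kappa_w > 0$, Definition~\ref{def:entropic_curvature} applies with constant $\kappa_w$. For any pair $\nu_0,\nu_1$ there is then a constant-speed $\mathrm{W}_1$-geodesic along which $\operatorname{Ent}_\m$ is $\kappa_w$-convex. Normalizing $\m$ to $\mu=\m/\m(\mathcal{V})$ only shifts every entropy by the same constant $-\log \m(\mathcal{V})$. The convexity inequality is affine in the entropies with weights summing to one, so it transfers verbatim to $\operatorname{Ent}_\mu$.

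\textbf{Step 1 (MLSI).} I will prove that for every $g:\mathcal{V}\to\mathbb{R}$,
\[
\operatorname{Ent}_\mu(e^g)\le \frac{1}{2\kappa_w}\int_{\mathcal{V}} |\nabla^+ g|^2 e^{g}\,d\mu,
\qquad |\nabla^+ g|^2(u):=\sup_{v\sim u}[g(u)-g(v)]_+^2 .
\]
Take $\nu_0=e^g\mu/Z$ with $Z=\int e^g\,d\mu$, and $\nu_1=\mu$. Apply the convexity inequality, subtract $\operatorname{Ent}_\mu(\nu_0)$, divide by $t$, and let $t\downarrow 0$. This gives
\[
\operatorname{Ent}_\mu(\nu_0)\le -\tfrac{d}{dt}\operatorname{Ent}_\mu(\nu_t)\big|_{t=0}-\tfrac{\kappa_w}{2}\mathrm{W}_1(\nu_0,\mu)^2 .
\]
Next comes the first-variation bound. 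Along a $\mathrm{W}_1$-geodesic, mass at $u$ initially moves one step along a shortest path to a neighbour $v$. At rate equal to the mean transported distance $\bar d(u)=\sum_w d(u,w)\pi(w\mid u)$, the derivative of $\int \log(d\nu_t/d\mu)\,d\nu_t$ is therefore bounded below by $-\sum_u |\nabla^+ g|(u)\,\bar d(u)\,\nu_0(u)$. Young's inequality $ab\le \frac{a^2}{2\kappa_w}+\frac{\kappa_w b^2}{2}$ splits this term. By Jensen and the optimality of $\pi$, the $b^2$ part is at most $\frac{\kappa_w}{2}\mathrm{W}_1^2$ if the coupling is chosen so that $\sum_u \bar d(u)^2\nu_0(u)$ matches $\mathrm{W}_1^2$. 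It then cancels the curvature term. Finally, multiplying by $Z$ and using $Z\operatorname{Ent}_\mu(\nu_0)=\operatorname{Ent}_\mu(e^g)$ gives the MLSI.

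\textbf{Step 2 (linearization).} Set $h=f-\bar f$ and apply the MLSI to $g=\varepsilon h$. Expanding $e^{\varepsilon h}=1+\varepsilon h+\frac{\varepsilon^2}{2}h^2+O(\varepsilon^3)$ and using $\int h\,d\mu=0$ gives $\operatorname{Ent}_\mu(e^{\varepsilon h})=\frac{\varepsilon^2}{2}\mathrm{Var}_\mu(f)+O(\varepsilon^3)$. This is the standard computation, with the $\varepsilon h$ terms cancelling against $Z\log Z$. On the right-hand side, $|\nabla^+(\varepsilon h)|^2=\varepsilon^2|\nabla^+ f|^2$, because constants do not change edge differences, and $e^{\varepsilon h}=1+O(\varepsilon)$ on a finite graph. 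Dividing by $\varepsilon^2/2$ and sending $\varepsilon\to 0$ yields the claim.

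\textbf{Main obstacle.} The delicate point is the first-variation estimate in Step 1. On a graph a $\mathrm{W}_1$-geodesic is not unique and is not generated by a velocity field, so the derivative of the entropy must be computed by hand from a concrete interpolation: a binomial-type displacement along geodesics, as in the Samson/Rapaport construction. I must ensure that the geodesic guaranteed by Definition~\ref{def:entropic_curvature} can be taken to be this one, and that the resulting quadratic transport term is dominated by $\mathrm{W}_1^2$ rather than by $\int \bar d^2\,d\nu_0$, which can be larger. My first attempt is to keep only the first-order (neighbour) moves, which contribute $O(t)$, and to bound $\log\rho(u)-\log\rho(v)$ by $|\nabla^+ g|(u)$ along each first step. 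If the cancellation fails, the fallback is to accept a weaker constant by using the bound $\int \bar d^2\,d\nu_0\le D\,\mathrm{W}_1$. Alternatively, one can work with a $\mathrm{W}_1$-variant coupling in which every source point moves a comparable distance.
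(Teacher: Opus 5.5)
The step you single out as the main obstacle is exactly where the argument fails, and it cannot be repaired. Your plan follows the paper's own proof step for step: an MLSI from the $\kappa_w$-convexity inequality at $t\downarrow 0$, the first-variation bound $-\sum_u|\nabla^+ g|(u)\,\bar d(u)\,\nu_0(u)$, Young's inequality, cancellation against $-\frac{\kappa_w}{2}\mathrm{W}_1^2$, then $\varepsilon$-linearization. The paper does not resolve the obstacle either; the proof of Lemma~\ref{lem:mlsi} simply asserts that $\frac{\kappa_w}{2}\sum_u\bar d(u)^2\nu_0(u)$ ``is bounded by the corresponding transport cost''. That inequality runs the wrong way. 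For an optimal coupling $\sum_u\bar d(u)\nu_0(u)=\mathrm{W}_1(\nu_0,\nu_1)$, so $\sum_u\bar d(u)^2\nu_0(u)=\mathrm{W}_1(\nu_0,\nu_1)^2+\operatorname{Var}_{\nu_0}(\bar d)$. In the linearized regime $\operatorname{Var}_{\nu_0}(\bar d)$ is in general of order $\varepsilon^2$, the same order as both sides, so it survives division by $\varepsilon^2$. The deeper reason no fix exists is that the statement is false. Take the $4$-cycle $1\sim2\sim3\sim4\sim1$ with $(\mathtt{m}_0,\mathtt{L}_0)$. Each vertex has a single vertex at distance two, reached through two midpoints, so $\mathrm{H}_{\mathtt{L}_0}=\sup_{\alpha}2\,\alpha(2)\alpha(4)=\tfrac12$ and $\kappa_w=2\log 2$. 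For $f=\mathbf{1}_{\{1\}}$, only $u=1$ has a nonzero one-sided slope, and
\[
\mathrm{Var}_\mu(f)=\tfrac{3}{16}=0.1875>\tfrac{1}{8\log 2}\approx 0.1803=\tfrac{1}{\kappa_w}\int_{\mathcal{V}}\sup_{v\sim u}[f(u)-f(v)]_+^2\,\mathrm{d}\mu(u).
\]

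Neither fallback helps. The bound $\int\bar d^{\,2}\,\mathrm{d}\nu_0\le D\,\mathrm{W}_1$ replaces an $O(\varepsilon^2)$ quantity by an $O(\varepsilon)$ one, since $\mathrm{W}_1(\nu_0,\mu)=O(\varepsilon)$. After dividing by $\varepsilon^2$ nothing survives, so you lose the whole inequality, not just the constant. The coupling is also not at your disposal: the curvature term is the optimal cost $\mathrm{W}_1(\nu_0,\nu_1)^2$. Moreover, Definition~\ref{def:entropic_curvature} only guarantees that \emph{some} geodesic satisfies the convexity inequality, so you cannot insist on the binomial interpolation (a point the paper also glosses over).

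Granting Theorem~\ref{thm:lower_bound}, what the convexity inequality does yield is a version with a global gradient. By convexity of $\nu\mapsto\operatorname{Ent}_\mu(\nu)$ and Kantorovich--Rubinstein, along any constant-speed $\mathrm{W}_1$-geodesic one has $\operatorname{Ent}_\mu(\nu_t)\ge\operatorname{Ent}_\mu(\nu_0)-t\,\|g\|_{\mathrm{Lip}}\,\mathrm{W}_1(\nu_0,\mu)$. Young's inequality applied to the single product $\|g\|_{\mathrm{Lip}}\,\mathrm{W}_1$ then cancels the curvature term exactly, giving $\operatorname{Ent}_\mu(e^g)\le\frac{Z}{2\kappa_w}\|g\|_{\mathrm{Lip}}^2$. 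After linearization this becomes $\mathrm{Var}_\mu(f)\le\kappa_w^{-1}\max_{u\sim v}|f(u)-f(v)|^2$, which is equivalently the linearization of Lemma~\ref{lemm:westrass_controled_entropy}. Replacing $\|g\|_{\mathrm{Lip}}$ by the local slope $|\nabla^+ g|(u)$ is precisely what turns $\mathrm{W}_1^2$ into $\int\bar d^{\,2}\,\mathrm{d}\nu_0$. The $\mu$-averaged one-sided slope in the statement is therefore out of reach, and the example shows it is out of reach for any method.
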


The complementary question is what happens in negatively curved regions, which are typical in sparse graphs with strong expansion by Theorem~\ref{thm:expansion}. In this regime, the geometry has the opposite effect: expansion creates many directions for mass to spread, yielding a lower bound on feature variance whenever local variation remains.

\begin{theorem}[Inequality under Negative Entropic Curvature; Proof in Appendix~\ref{app:proof:thm:poincare_negative}]
\label{thm:poincare_negative}
Let $(\mathcal{V},d,\mathfrak{m},\mathtt{L})$ be a finite connected graph space with weak entropic curvature bounded below by $\kappa_w\ge -K$ for some $K>0$, and let $\mu:=\mathfrak{m}/\mathfrak{m}(\mathcal{V})$. Let $D:=\max_{u,v\in\mathcal{V}} d(u,v)$ denote the graph diameter. Then for any graph signal $f:\mathcal{V}\to\mathbb{R}$, we have 
   $ \mathrm{Var}_{\mu}(f)
    \ge
    \frac{e^{-KD^2/2}}{K}
    \int_{\mathcal{V}}
    \sup_{v\sim u}
    [f(v)-f(u)]_+^2
    \,d\mu(u).$
\end{theorem}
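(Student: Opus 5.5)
The plan is to run the same $\varepsilon$-perturbation scheme used for Theorem~\ref{thm:poincare-gnn}, but with the inequality reversed. Theorem~\ref{thm:lower_bound} gives $\kappa\ge\kappa_w\ge -K$, so Definition~\ref{def:entropic_curvature} supplies, for any $\nu_0,\nu_1$, a constant-speed $\mathrm{W}_1$-geodesic along which
\[
\operatorname{Ent}_\m(\nu_t)\le (1-t)\operatorname{Ent}_\m(\nu_0)+t\operatorname{Ent}_\m(\nu_1)+\tfrac{K}{2}t(1-t)\mathrm{W}_1(\nu_0,\nu_1)^2 .
\]
I would take $\nu_1=\mu$ and $\nu_0=e^{g}\mu/Z_g$. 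I would rearrange, divide by $t$ and send $t\downarrow 0$. As in Lemma~\ref{lem:mlsi}, all entropies with respect to $\m$ and with respect to $\mu$ differ by the same normalization constant, which cancels.

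The second step is the first-variation estimate, used in the direction opposite to Lemma~\ref{lem:mlsi}. There, mass leaving $u$ along a geodesic was controlled by the one-sided \emph{decrease} $\sup_{v\sim u}[g(u)-g(v)]_+$. Here I want the variation of the entropy at $t=0$ to be bounded from the other side, which brings in the one-sided \emph{increase} $\sup_{v\sim u}[g(v)-g(u)]_+$. That is why the statement contains $[f(v)-f(u)]_+^2$ rather than $[f(u)-f(v)]_+^2$. Applying Young's inequality with weight $K$ instead of $\kappa_w$ should produce an inequality of the form
\[
\Bigl(1+\tfrac{K}{2}\mathrm{W}_1(\nu_0,\nu_1)^2\Bigr)\operatorname{Ent}_\mu(e^g)\;\ge\;\frac{1}{2K}\int_{\mathcal V}\sup_{v\sim u}[g(v)-g(u)]_+^2\,e^{g(u)}\,d\mu(u).
\]
To remove the transport term, I would use the trivial bound $\mathrm{W}_1(\nu_0,\nu_1)\le D$, valid for any two measures on $\mathcal V$. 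This gives the constant $1+KD^2/2$.

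Third, I would linearize. Set $g=\varepsilon h$ with $h=f-\bar f$. Then $\operatorname{Ent}_\mu(e^{\varepsilon h})=\tfrac{\varepsilon^2}{2}\mathrm{Var}_\mu(f)+O(\varepsilon^3)$, and the right-hand side equals $\tfrac{\varepsilon^2}{2K}\int\sup_{v\sim u}[f(v)-f(u)]_+^2\,d\mu+O(\varepsilon^3)$, because $e^{\varepsilon h}=1+O(\varepsilon)$ and edge differences are unchanged by subtracting $\bar f$. Dividing by $\varepsilon^2/2$ and letting $\varepsilon\to0$ gives $(1+KD^2/2)\mathrm{Var}_\mu(f)\ge K^{-1}\int\sup_{v\sim u}[f(v)-f(u)]_+^2\,d\mu$. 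Finally, $1+x\le e^{x}$ with $x=KD^2/2$ yields the stated constant $e^{-KD^2/2}/K$.

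The main obstacle is the second step. A $(-K)$-convexity inequality is an \emph{upper} bound on entropy along the geodesic, so it naturally controls the derivative at $t=0$ from one side only. A reversed functional inequality therefore needs an additional upper bound on how quickly entropy can change in the first instant of graph transport. I would first try to obtain this bound from the explicit structure of $\mathrm{W}_1$-interpolations on graphs: mass at $u$ moves first to a neighbour $v$ on a geodesic, so the log-density change is bounded by $\sup_{v\sim u}|g(v)-g(u)|$. I would keep only the positive part by orienting the coupling from $\nu_0$ toward $\mu$. If this oriented control fails, a fallback is a direct finite-graph argument. Bound the energy above by $\sum_u\mu(u)\deg(u)\max_{v\sim u}(f(v)-f(u))^2$, compare it with the variance through a path decomposition across diameter $D$, and absorb the resulting constants into $e^{KD^2/2}K$.
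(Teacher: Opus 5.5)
Your outline follows the paper's own proof step for step: the same choice $\nu_0=e^{g}\mu/Z_g$, $\nu_1=\mu$, the same ``reversed'' modified log-Sobolev inequality with factor $1+\frac{K}{2}\mathrm{W}_1(\nu_0,\nu_1)^2$, the bound $\mathrm{W}_1\le D$, the linearization $g=\varepsilon h$, and $1+x\le e^{x}$. The only difference is the orientation of the slope. You switch to $[g(v)-g(u)]_+$ to match the statement. The paper keeps $[g(u)-g(v)]_+$, so its last display ends with $[f(u)-f(v)]_+^2$, which is not the inequality it states.

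The step you flag as the main obstacle is a genuine gap, in your proposal and in the paper alike, and the problem is one of direction. Take $\nu_1=\mu$, divide the $(-K)$-convexity inequality by $t$, and let $t\downarrow 0$. All you obtain is
\[
\operatorname{Ent}_\mu(\nu_0)\;\le\;-\limsup_{t\downarrow 0}\frac{\operatorname{Ent}_\mu(\nu_t)-\operatorname{Ent}_\mu(\nu_0)}{t}\;+\;\frac{K}{2}\,\mathrm{W}_1(\nu_0,\mu)^2 ,
\]
which is an HWI-type \emph{upper} bound on the entropy. Any first-variation estimate you insert here, for either orientation of the slope, only produces further upper bounds on $\operatorname{Ent}_\mu(e^{g})$. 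The same holds after Young's inequality, which only bounds a product from above. A displacement-convexity hypothesis never yields a lower bound on entropy, so your displayed reversed inequality has no derivation. Note also that in it $\mathrm{W}_1^2$ multiplies the entropy, whereas in the convexity inequality it enters additively.

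In fact the statement is false, so neither the orientation trick nor your fallback can rescue it. The hypothesis $\kappa_w\ge -K$ gets weaker as $K$ grows, while $e^{-KD^2/2}/K\uparrow+\infty$ as $K\downarrow 0$. On any graph with $\kappa_w\ge 0$ the hypothesis holds for every $K>0$. The conclusion would then force $\int_{\mathcal V}\sup_{v\sim u}[f(v)-f(u)]_+^2\,d\mu(u)=0$ for every $f$, which fails for any nonconstant $f$ on a connected graph, in either orientation.

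Concretely, take $C_4$ with $\m\equiv 1$:
\begin{itemize}
\item Each vertex $z$ has a single point in $\mathrm{S}_2(z)$ with two midpoints, so $\mathrm{H}_{\Lop_0}(z,\mathrm{S}_2(z))=\sup_{\alpha_1+\alpha_2=1}2\alpha_1\alpha_2=\frac12$ and $\kappa_w=2\log 2$.
\item The diameter is $D=2$.
\item For $f=\mathbf{1}_{\{z\}}$ one gets $\operatorname{Var}_\mu(f)=\frac{3}{16}$ and energy $\frac12$.
\end{itemize}
The claim then reads $e^{-2K}/K\le\frac38$, which is false at $K=\frac14$.

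Insisting on $K=-\kappa_w$ exactly does not help either. Take the path $a\sim b\sim c$ with $\m(a)=\m(c)=1$, $\m(b)=e^{-K/2}$, and the generator $\Lop(u,v)=\sqrt{\m(v)/\m(u)}$ of Example~\ref{ex:general}. Then $\mathrm{H}_{\Lop}(a,\mathrm{S}_2(a))=\Lop(b,c)/\Lop(a,b)=e^{K/2}$, hence $\kappa_w=-K$ and $D=2$. The function $f=\mathbf{1}_{\{a\}}$ violates the inequality for small $K$.

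Your fallback also points the wrong way. Path decompositions across the diameter bound the variance from \emph{above} by the energy, which is the Poincar\'e direction. The comparison you actually need is the elementary, curvature-free bound
\[
\operatorname{Var}_\mu(f)\ge\frac{\mu_{\min}}{2}\int_{\mathcal V}\sup_{v\sim u}[f(v)-f(u)]_+^2\,d\mu(u),\qquad \mu_{\min}:=\min_u\mu(u).
\]
Its constant depends on $\mu_{\min}$, not on $K$ and $D$. It cannot be absorbed into $Ke^{KD^2/2}$, which tends to $0$ as $K\downarrow 0$.
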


%% file: Appendix/limitations.tex
In this appendix we expand on the limitations and open directions of our framework, covering both the theoretical assumptions and the experimental scope.

\paragraph{(L1) Reference-measure dependence.} The weak entropic curvature $\kappa_w$ depends on the choice of reference measure $\mathfrak{m}$ and the corresponding generator $\mathrm{L}$ (Section~\ref{sec:math_fram}). Throughout, we work with the canonical uniform pair $(\mathfrak{m}_0,\mathrm{L}_0)$, which makes $\kappa_w$ a pure function of the graph topology. Appendices~\ref{alternative_generator} and~\ref{app:generator_viz} give the construction of generators under more general positive measures and concentrated approximations. A natural open direction is to learn the measure $\mathfrak{m}$ that maximises $\kappa_w$ jointly with the GNN parameters, which would clarify how global mass distribution interacts with curvature and could guide curvature-aware learning algorithms.

\paragraph{(L2) Choice of $\mathrm{W}_1$ vs.\ $\mathrm{W}_2$.} Following \citep{rapaport2024samson}, we use $\mathrm{W}_1$-Wasserstein geodesics rather than $\mathrm{W}_2$, since on finite graphs $\mathrm{W}_1$-geodesics admit a tractable LP characterization while $\mathrm{W}_2$-geodesics generally lack a combinatorial interpretation. On smooth manifolds the two are equivalent for Ricci-bound characterisation~\citep{Villani}; in the discrete regime they may differ by constants, and a systematic comparison is interesting future work.

\paragraph{Open directions.} Beyond addressing the above limitations, we view three directions as particularly promising: (i) curvature-aware curriculum learning on evolving graphs (where $\kappa_w$ tracks the geometry of incoming subgraphs); (ii) using $\kappa_w$ as a principled regulariser for large-scale graph foundation models; and (iii) integrating the entropic-curvature framework with Schr\"odinger-bridge formulations of GNN dynamics, which would generalise both E-Gate and MCR into a single optimal-transport-based message-passing primitive.